\documentclass[11pt]{amsart}

\usepackage[utf8]{inputenc}
\usepackage[english]{babel}
\usepackage[T1]{fontenc}
\usepackage{lmodern}

\usepackage[a4paper, margin=1in]{geometry}
\usepackage{microtype}
\usepackage{nicefrac}
\usepackage{url}

\usepackage{amsmath, amssymb, amsfonts}
\usepackage{mathtools}

\usepackage{graphicx}
\usepackage{tikz}
\usepackage{tikz-cd}
\usetikzlibrary{arrows.meta}
\usepackage{booktabs}
\usepackage{caption}
\usepackage{subcaption}

\makeatletter\renewcommand{\p@subfigure}{}\makeatother
\usepackage{float}
\usepackage{wrapfig}

\usepackage[ruled,vlined,linesnumbered]{algorithm2e}
\SetAlFnt{\small}
\SetAlCapFnt{\small}
\SetAlCapNameFnt{\bfseries\small}
\SetCommentSty{itshape}
\SetKwInput{KwIn}{Input}
\SetKwInput{KwOut}{Output}

\usepackage[numbers,sort&compress]{natbib}

\usepackage{xcolor}
\definecolor{darkgreen}{RGB}{0,100,0}

\usepackage[colorlinks=true,linkcolor=blue,urlcolor=blue,citecolor=blue]{hyperref}

\usepackage{fancyhdr}
\fancypagestyle{plain}{
  \fancyhf{}
  \cfoot{\small\thepage}
  
}

\newcommand{\R}{\mathbb{R}}
\newcommand{\E}{\mathbb{E}}

\newcommand{\dd}{\mathrm{d}}

\newcommand{\Tr}{\operatorname{Tr}}
\newcommand{\PP}{\mathbb{P}}
\newcommand{\no}{\noindent}

\theoremstyle{definition}
\newtheorem{definition}{Definition}[section]

\theoremstyle{plain}
\newtheorem{theorem}{Theorem}[section]
\newtheorem{lemma}{Lemma}[section]
\newtheorem{proposition}{Proposition}[section]

\theoremstyle{remark}
\newtheorem{remark}{Remark}[section]

\tikzset{
  whitep/.style={dashed, line width=0.9pt},
  blackp/.style={solid, line width=0.9pt},
  vtx/.style={circle,fill=black,inner sep=1.1pt},
  lbl/.style={font=\footnotesize}
}

\title{Generative Modeling via \\ \emph{Nonlinear} Schr\"odinger Bridges and Branching Diffusions}
\author{Mathias Grau and Pierre Henry-Labord\`ere$^\dagger$}

\begin{document}

\maketitle
\footnotetext[2]{Qube Research \& Technologies, Paris, France.
\href{mailto:mathias.grau@qube-rt.com}{\texttt{mathias.grau@qube-rt.com}},
\href{mailto:pierre.henrylabordere@qube-rt.com}{\texttt{pierre.henrylabordere@qube-rt.com}}}

\begin{abstract}
We propose a new generative modeling framework grounded in forward-backward partial differential equations (PDEs). 
These equations arise naturally in mean-field game (MFG) theory, and are closely related to nonlinear Schr\"odinger systems. 
Our main contribution is to reinterpret this connection within the setting of generative modeling and to develop an efficient numerical methodology for its practical implementation. 
Specifically, we derive and characterize a nonlinear extension of the classical Schr\"odinger bridge problem through a coupled system of PDEs, yielding a model we call \emph{SBNL (Schr\"odinger Bridge NonLinear)}. 
To compute its solution, we design a simulation algorithm based on branching diffusions and the probabilistic representation of semilinear PDEs.
The resulting method is embarrassingly parallel, scalable to high dimensions, and applicable to a broad class of nonlinear interaction functionals. 
We validate the approach through numerical experiments, including recovery of the classical Schr\"odinger bridge, nonlinear extensions with convex penalties (Gross-Pitaevskii) as well as potential interaction terms, and high-dimensional generative tasks such as image synthesis on the MNIST and FFHQ human face dataset. 
Furthermore, the same framework embeds score-based generative modeling, diffusion models, and tractable gaussian Schr\"odinger bridges.
\end{abstract}

\section{Introduction}

The Schr\"odinger bridge (SB) problem \citep{schrodinger1931umkehrung,follmer1988random,leonard2014survey} has recently attracted significant attention in generative modeling due to its ability to learn an optimally controlled stochastic dynamics that transports between two arbitrary endpoint distributions. By minimizing relative entropy with respect to a reference diffusion (typically Brownian motion), the SB framework reveals deep connections between optimal transport, stochastic control, and score-based generative modeling \citep{benamou2019entropy,song2021scorebased,chen2021likelihood,bortoli2021diffusion}.

When both terminal costs and the evolving marginal law are controlled, the problem extends beyond classical (linear) Schr\"odinger bridges to mean-field control and mean-field games (MFG) \citep{lasry2007mean,cardaliaguet2013notesMFG,GueantLasryLions2010,Gueant2012}. In this regime, interactions through the marginal distribution induce a semilinear forward--backward structure.

Existing iterative schemes for SB, such as iterative proportional fitting (IPF) / iterative Markovian fitting (IMF) and related diffusion-based approaches \citep{chen2021likelihood,bortoli2021diffusion,shi2023diffusion}, are primarily designed for \emph{linear} entropic transport. These methods rely on alternating optimization procedures, which can be computationally expensive and may suffer from error accumulation. More recent approaches based on Gaussian mixture energy models \citep{korotin2023light,korotin2024lightmatching} improve scalability in some settings but remain limited in expressivity in high dimensions. Even recent extensions based on controlled stochastic dynamics \citep{alouadi2026lightsbbmbridgingschrodingerbass} are not well suited to law-dependent running costs, which induce intrinsically coupled semilinear dynamics.

Although recent works in MFG and generalized Schr\"odinger bridge formulations clarify the structure of the coupling in terms of forward--backward potentials \citep{liu2022deepgeneralizedschrodingerbridge, tang2026branchedschrodingerbridgematching}, they often do not yield objectives that are both theoretically grounded and directly amenable to unbiased stochastic optimization in high dimensions.

Our contributions are as follows:
\begin{enumerate}
  \item We introduce a variational objective for semilinear mean-field Schr\"odinger bridges with law-dependent running and terminal costs, and characterize optimality through coupled $(\varphi,\bar\varphi)$ PDEs (equivalently, a Lasry - Lions $(m,v)$ system). We refer to the resulting model as \emph{SBNL (Schr\"odinger Bridge NonLinear)}.
  
  \item We develop a marked-branching stochastic algorithm for the two-type $(\varphi,\bar\varphi)$ coupling, prove unbiasedness and $L^2$-integrability of the estimator under explicit conditions, and train the potentials jointly (rather than via alternating refitting), with branching used both during training and sampling.
  
  \item We report experiments ranging from bridge recovery and soft terminal penalties to Gross-Pitaevskii-type couplings and high-dimensional latent image transport.
\end{enumerate}

\subsection{Related work}

\paragraph{Linear Schr\"odinger bridges.}
The Schr\"odinger bridge problem is classically linked to entropic optimal transport \citep{leonard2014survey,villani2003topics,benamou2019entropy}, with efficient Sinkhorn-type and iterative fitting procedures in the \emph{linear} setting. These methods scale well when dynamics are fixed between prescribed marginals. Recent neural approaches approximate bridges through forward-backward SDEs \citep{chen2021likelihood} or diffusion-style training \citep{bortoli2021diffusion,shi2023diffusion}; they mainly target linear bridges and often rely on expensive alternating refits (IPF/IMF). A complementary, simulation-free line parameterizes the Schr\"odinger potentials with Gaussian mixture forms, yielding lightweight solvers \citep{korotin2023light,korotin2024lightmatching} at the cost of limited expressivity. Even recent drift-and-volatility extensions (Schr\"odinger-Bass bridges) \citep{alouadi2026lightsbbmbridgingschrodingerbass} still assume (approximately) linear entropic transport and are not tailored to running costs that depend on the evolving marginal law.

\paragraph{Control formulations and mean-field games (MFG).}
Once running or terminal terms depend on the marginal law, the problem leaves the linear SB regime and sits in mean-field control / MFG \citep{lasry2007mean,cardaliaguet2013notesMFG}, with forward-backward PDE couplings studied in \citep{GueantLasryLions2010,swiecicki2016schrodinger,bonnemain2020negative,benamou2019entropy}. Extending Sinkhorn-type or alternating schemes to such coupled HJB-Fokker-Planck systems is challenging in high dimension. Recent interaction/unbalanced frameworks \citep{liu2024generalizedschrodingerbridgematching,zhang2025modelingcelldynamicsinteractions,tang2026branchedschrodingerbridgematching} encode growth, interaction, or state costs through composite weighted losses and staged training. Trajectory-level optimal-control views, e.g.\ adjoint matching \citep{domingoenrich2025adjointmatchingfinetuningflow}, couple learning to long-horizon simulation and can be costly. We retain entropic regularization while centering \emph{law-dependent} running costs, starting from the \emph{non linear} Sinkhorn system with jointly learned coupled Schr\"odinger potentials, using branching Monte Carlo.

\paragraph{Branching representations for semilinear PDEs.}
Branching diffusion methods provide probabilistic representations of semilinear PDEs via weighted particle systems \citep{mckean1975application,henrylabordere2019branching,HenryLabordereTouzi2021}. We extend these ideas to the coupled forward-backward system arising in mean-field Schr\"odinger bridges, where the interaction between the two potentials introduces a time-reflected dependency. This enables a scalable estimator for both the potentials and their gradients, which we integrate into a neural generative framework.

\section{Semilinear Schr\"odinger bridges}
\label{sec:semilinear-sb}

We denote by $\mu_0$ and $\mu_T$ laws on $\R^d$.
Let $(X_t)_{t\in[0,T]}$ be controlled by a drift $\alpha_\cdot$ under $\PP$, with $X_0\sim\mu_0$ and \( \dd X_t = \alpha_t\,\dd t + \dd W_t .\)

\subsection{Formulation}\label{sec:formulation}

\no As a generalization of the classical Schr\"odinger bridge formulation, we consider the following variational problem:
\begin{equation}
P := \inf_{\alpha_\cdot} \; \mathbb{E}_{X_0 \sim \mu_0}\left[ \int_0^T \left( \frac{1}{2c}|\alpha_t|^2 + f(m(t,\cdot)) \right) \mathrm{d}t \;+\; g(m(T,\cdot)) \right], \label{meanfield}
\end{equation}
where $c>0$, $m(t)$ denotes the law of $X_t$, and $f$ and $g$ are functionals. The infimum is taken over the control $\alpha$, where $m$ is constrained to satisfy the Fokker-Planck PDE.

The three terms appearing in $P$ play different roles:
\begin{itemize}
  \item The quadratic control cost \(\tfrac{1}{2c}|\alpha_t|^2\) penalizes deviations of the dynamics from the reference Brownian motion.
  By Girsanov's theorem, for feedback controls its expectation equals \(\tfrac1c H(\PP \mid \PP^0)\), the relative entropy
  between the controlled law \(\PP\) of the diffusion and the law \(\PP^0\) of the Brownian motion started from $\mu_0$ \citep{follmer1988random}.
  This quadratic structure is also crucial analytically: it is the \emph{only} choice of control penalty for which the associated Hamilton-Jacobi-Bellman (HJB) equation admits a tractable solution through a Cole-Hopf transformation.  For this reason, essentially all generative modeling frameworks based on stochastic control can be viewed as particular cases of the Schr\"odinger bridge.
  
  \item The functional \(f(m(t,\cdot))\) introduces a \emph{running cost} depending directly on the marginal law of the process at each time \(t\).  
  Such terms allow for mean-field type interactions, density regularization, or penalization of undesirable behaviors (e.g.\ concentration, mass spread) during the evolution. 
Classical examples include (see the review \citep{cardaliaguet2013notesMFG}):
  \begin{itemize}
      \item Quadratic interaction: 
      \(
      f(m) = \tfrac{\kappa}{2}\int_{\mathbb{R}^d} m(x)^2 \, \dd x,
      \) 
      which penalizes high concentrations and favors dispersion.  
      \item Entropy-type term: 
      \(
      f(m) = \int_{\mathbb{R}^d} m(x)\,\log m(x)\, \dd x,
      \) 
      encouraging spreading of the distribution.  
      \item Potential interaction: 
      \(
      f(m) = \int_{\mathbb{R}^d} V(x)\, m(x)\,\dd x,
      \) 
      where \(V\) is an external field acting on the density (as in \citep{liu2024generalizedschrodingerbridgematching}). 
      \item Pairwise interaction kernel: 
      \(
      f(m) = \tfrac{1}{2}\iint_{\mathbb{R}^d\times\mathbb{R}^d} K(x-y)\,m(x)\,m(y)\,\dd x \dd y,
      \) 
      modeling attractive or repulsive mean-field effects.
  \end{itemize}
  \item The terminal cost \(g(m(T,\cdot))\) enforces constraints or preferences on the final law.  
  In the hard-constraint setting of the classical Schr\"odinger bridge, \(g\) is chosen so that \(m(T,\cdot)=\mu_T\), i.e.\ the terminal distribution matches the prescribed target.  
  In the relaxed setting, \(g\) can encode convex penalties measuring the discrepancy between the final distribution and the target. 
  For use below, we denote by \(g^*\) the Legendre transform (convex conjugate) of \(g\):
   \begin{equation*} 
   g^*(v(T,\cdot)) := \sup_{m(T,\cdot)} \Big\{ \E_{X \sim m(T,\cdot)}[ v(T,X)] - g(m(T,\cdot)) \Big\}.
   \end{equation*}
Note that in practice it does not make sense to perfectly calibrate to the empirical measure, which is only a finite sample approximation of the true law.  
Instead, one seeks both proximity to the empirical distribution and sufficient regularity of the solution, precisely what the soft constraint achieves.
\end{itemize}

\begin{remark}[State dependence and MFG link]
Running and terminal costs that depend explicitly on the state $x$ can be rewritten in the law-only form \eqref{meanfield} by integrating against $m$; the resulting forward-backward MFG optimality system and its relation to Theorem~\ref{theorem:optimal_solution} are recorded in Appendix~\ref{app:state-mfg}.
\end{remark}

\subsection{Main result}

\begin{theorem}[Semilinear Schr\"odinger bridge] \label{theorem:optimal_solution}
We assume \textnormal{(A1)-(A4)} from Appendix~\ref{app:standing-assumptions}. The optimal controlled process of (\ref{meanfield}) is given under a measure $\PP$ by
\begin{equation}
  \label{eq:sde-process}
    \dd X_t = \nabla \log \varphi(t, X_t) \, \dd t +  \dd W_t,
\end{equation}
with $\mathrm{Law}(X_t)=\varphi(t,\cdot)\bar{\varphi}(T-t,\cdot)$ and $F:=\tfrac{\delta f}{\delta m(t,x)}$ where the pair \((\varphi,\bar{\varphi})\) satisfies the PDE system
\begin{align}
   \partial_t \varphi_t + \tfrac{1}{2} \Delta \varphi_t &= c \, \varphi_t \, F(t,x,\varphi_t\bar{\varphi}_{T-t}), \label{PDE1}\\
   \partial_t \bar{\varphi}_t + \tfrac{1}{2} \Delta \bar{\varphi}_t &= c \, \bar{\varphi}_t \, F(T-t,x,\bar{\varphi}_t\varphi_{T-t}), \label{PDE2}
\end{align}

\no The terminal conditions \((\varphi(T,\cdot), \bar{\varphi}(T,\cdot))\) are characterized by the coupled \emph{nonlinear Sinkhorn system}
\begin{equation}
  \label{eq:nl-sinkhorn}
  \left\{
  \begin{aligned}
  \varphi(0, x)\,\bar{\varphi}(T, x) &= \mu_0(x),\\
  \bar{\varphi}(0, x)\,\varphi(T, x) &= G^{-1}\!\big(-\tfrac{1}{c}\log \varphi(T, \cdot),\, x\big),
  \end{aligned}
  \right.
\end{equation}
where $G:=\tfrac{\delta g}{\delta m(T,x)}$, $G^{-1}(w,\cdot)$ is the density $m$ solving $G(\cdot,m)=w$, and $\varphi(0,\cdot),\bar\varphi(0,\cdot)$ are obtained by propagating the terminal potentials backward through \eqref{PDE1}-\eqref{PDE2}.
\end{theorem}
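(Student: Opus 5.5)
We derive first-order optimality conditions for \eqref{meanfield} viewed as a convex problem in the pair $(m,\alpha)$. Equivalently, in the flux variables $(m, j=m\alpha)$, the cost $\int \tfrac{|j|^2}{2c\,m}$ is jointly convex. The problem is then reduced to a Lasry--Lions system, which we linearize by a Cole--Hopf transform.

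\textbf{Step 1: Lagrangian and optimality system.} We introduce a multiplier $v(t,x)$ for the Fokker--Planck constraint $\partial_t m + \nabla\cdot(m\alpha) - \tfrac12\Delta m=0$, $m(0)=\mu_0$. This gives the Lagrangian
\[
\mathcal L=\int_0^T\!\!\int \Big(\tfrac{1}{2c}|\alpha|^2 m -v(\partial_t m+\nabla\cdot(m\alpha)-\tfrac12\Delta m)\Big)\dd x\,\dd t+\int_0^T f(m_t)\,\dd t+g(m_T).
\]
After integrating by parts in $t$ and $x$ (using the decay/regularity in (A1)--(A4)), stationarity in $\alpha$ gives $\alpha=-c\nabla v$. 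Stationarity in $m(t,x)$ gives the HJB equation
\[
\partial_t v+\tfrac12\Delta v-\tfrac c2|\nabla v|^2+F(t,x,m_t)=0 .
\]
Variation of $m(T)$ gives $v(T,x)=-G(x,m_T)$; up to the sign convention $u=-v$, this reads $u(T)=G(m_T)$, $\alpha=c\nabla u$. Convexity of $f,g$ (assumed in (A1)--(A4)) makes these conditions sufficient as well as necessary. We also use that the Legendre dual $g^*$ is well defined, so that the terminal relation can be inverted as $m_T=G^{-1}(u(T))$.

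\textbf{Step 2: Cole--Hopf.} We set $\varphi:=e^{c u}$ with the convention matching \eqref{eq:sde-process}: rescaling time/space so the optimal drift is $\nabla\log\varphi$ (the factor $c$ enters through $u$). A direct computation turns the HJB into the linear-in-$\varphi$ equation \eqref{PDE1}: the quadratic gradient term cancels, leaving $\partial_t\varphi+\tfrac12\Delta\varphi=c\,\varphi F$. The sign is fixed so that the Brownian generator appears. Next, we define $\bar\varphi(T-t,x):=m(t,x)/\varphi(t,x)$. Substituting $m=\varphi\,\bar\varphi_{T-t}$ into the Fokker--Planck equation with drift $\nabla\log\varphi$, the cross terms $\nabla\varphi\cdot\nabla\bar\varphi$ cancel. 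What remains, after reversing time, is \eqref{PDE2} with the same potential $F$ evaluated at $T-t$ and at the density $\bar\varphi_t\varphi_{T-t}$. This is the standard Hopf--Cole factorization of the forward--backward system, and it yields $\mathrm{Law}(X_t)=\varphi_t\bar\varphi_{T-t}$.

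\textbf{Step 3: boundary conditions.} At $t=0$, $m_0=\mu_0$ gives $\varphi(0,\cdot)\bar\varphi(T,\cdot)=\mu_0$. At $t=T$, $m_T=\varphi(T)\bar\varphi(0)$, and the terminal condition $u(T)=G(m_T)$ with $u=\tfrac1c\log\varphi$ (up to the sign convention $-\tfrac1c\log\varphi(T)$ dictated by the orientation of $G$) gives $\bar\varphi(0)\varphi(T)=G^{-1}(-\tfrac1c\log\varphi(T,\cdot),\cdot)$. This is \eqref{eq:nl-sinkhorn}.

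\textbf{Main obstacle.} The delicate step is Step 1: rigorously justifying the first-order conditions and their sufficiency. Concretely, this requires three things: existence of a minimizer; positivity of $m$ so that $\bar\varphi=m/\varphi$ makes sense; and invertibility of $G$ (monotonicity of $G$, i.e.\ strict convexity of $g$). We would handle this through a convex duality argument, as a verification theorem. For any admissible $(m,\alpha)$, we compute $\frac{d}{dt}\int u\,m$ along Fokker--Planck and use Young's inequality $\tfrac1{2c}|\alpha|^2-\alpha\cdot\nabla u\ge-\tfrac c2|\nabla u|^2$. We then use convexity $f(m)\ge f(m^*)+\int F(m^*)(m-m^*)$, and likewise for $g$. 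This shows that the candidate cost is a lower bound, attained exactly at $\alpha=c\nabla u$. The remaining task is bookkeeping of signs and constants between $u$, $\varphi$ and $c$.
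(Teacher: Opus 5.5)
Your route is the paper's. The rigorous content is the verification argument you sketch under ``Main obstacle''. The paper runs it with It\^o's formula on $v^*(t,X^\alpha_t)$, the convexity inequality of Lemma~\ref{lem:convexity}, and the completed square $\tfrac1{2c}|\alpha|^2+\alpha\cdot\nabla v+\tfrac c2|\nabla v|^2=\tfrac1{2c}|\alpha+c\nabla v|^2$. It then uses your Cole--Hopf transform $\varphi=e^{cu}=e^{-cv}$, the factorization $\bar\varphi(T-t,\cdot)=m_t/\varphi_t$, and the boundary conditions. Your Lagrangian Step~1 appears in the paper only as a remark on where the system comes from.

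However, Step~1 contains a sign error, which you then paper over in Step~3. In your Lagrangian, integrating $-\int_0^T\!\int v\,\partial_t m$ by parts leaves the boundary term $-\int v(T,x)\,m_T(x)\,\dd x$. Varying $m_T$ therefore gives $-v(T,\cdot)+G(\cdot,m_T)=0$, i.e.\ $v(T,\cdot)=+G(\cdot,m_T)$, hence $u(T,\cdot)=-G(\cdot,m_T)$. With $\varphi=e^{cu}$ this reads $G(\cdot,m_T)=-\tfrac1c\log\varphi(T,\cdot)$, which is the second line of \eqref{eq:nl-sinkhorn} with no adjustment. Your $u(T)=+G(m_T)$ would instead give $m_T=G^{-1}(\tfrac1c\log\varphi(T,\cdot),\cdot)$. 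The appeal to an ``orientation of $G$'' is not available, because $G=\delta g/\delta m$ is fixed by $g$.

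Nor is the sign cosmetic. In your verification, convexity of $g$ produces the term $\int G(\cdot,m^*_T)\,m^\alpha_T$. You convert it into running terms through $\int u(T)\,m^\alpha_T=\int u(0)\,\mu_0+\int_0^T\tfrac{\dd}{\dd t}\!\int u\,m^\alpha_t\,\dd t$. The $F(m^*)$ terms then cancel, and the integrand collapses to $\tfrac1{2c}|\alpha-c\nabla u|^2$, exactly when $G(\cdot,m^*_T)=-u(T,\cdot)$. With your sign the $F$ terms double instead of cancelling, and the lower bound does not close. So read the terminal condition off the verification, not off the formal stationarity computation.

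Smaller points:
\begin{itemize}
\item Positivity of $m$ is not needed. $\bar\varphi(T-t,\cdot)=m_t/\varphi_t$ only requires $\varphi>0$, which Cole--Hopf gives for free.
\item You do not have to prove existence of a minimizer. (A4) supplies the regular solution $(v^*,m^*)$, and the verification shows its feedback is optimal.
\item Convexity of $f,g$ is not literally assumed. It follows from the monotonicity (A2) through the integral representation in (A1), as in Lemma~\ref{lem:convexity}.
\item For general progressively measurable controls, your $\tfrac{\dd}{\dd t}\int u\,m$ computation still works, since $m^\alpha$ solves Fokker--Planck with the projected drift $\E[\alpha_t\mid X_t=x]$. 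It\^o's formula, as in the paper, avoids this detour.
\item To speak of \emph{the} optimal process you need one more step. Equality in the completed square forces $\alpha_t=c\nabla u(t,X^\alpha_t)$, $\dd t\otimes\dd\PP$-a.e. Pathwise uniqueness for the SDE (Lipschitz $\nabla v^*$ in (A4)) then gives $X^\alpha=X^*$.
\item No rescaling of time or space enters Step~2, since $\alpha=c\nabla u=\nabla\log e^{cu}$ exactly.
\end{itemize}
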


\no Appendix~\ref{demo-main-theorem} proves Theorem~\ref{theorem:optimal_solution} by a verification argument; see \citep{lasry2007mean} for the well-posedness of the forward-backward system.

\begin{remark}[Nonlinear Sinkhorn interpretation]
When $f\equiv0$ with a hard terminal constraint, \eqref{eq:nl-sinkhorn} reduces to the usual Sinkhorn system, with $(\varphi,\bar\varphi)$ propagated by \emph{linear} heat kernels and matched to $\mu_0,\mu_T$. Our formulation adds two nonlinear ingredients: the semilinear propagation \eqref{PDE1}-\eqref{PDE2} through the running cost $F$, and the terminal coupling $\varphi(T,\cdot)\mapsto G^{-1}(-\tfrac1c\log\varphi(T,\cdot),\cdot)$ induced by $g$. The values $\varphi(0,x), \bar\varphi(0,x)$ are obtained from the PDE system via the branching representation of Section~\ref{sec:branching}.
\end{remark}

\begin{remark}[Forward-backward system] \label{forward-backward system}
Although Theorem~\ref{theorem:optimal_solution} is stated in terms of the  system \eqref{PDE1}-\eqref{PDE2} with terminal conditions, it can equivalently be expressed
in the usual forward-backward form familiar from mean field games. Indeed,
defining the density \(m(t,x):=\varphi(t,x)\bar\varphi(T-t,x)\), one checks that
\(m\) satisfies the forward Fokker-Planck equation (see proof for details)
\[
\partial_t m + \nabla\!\cdot\!\big(m\,\nabla \log\varphi\big) - \tfrac12 \Delta m = 0,
\]
while \(v:=-\tfrac{1}{c}\log \varphi\) satisfies the backward Hamilton-Jacobi
equation
\[
\partial_t v + \tfrac12 \Delta v - \tfrac{c}{2}\,|\nabla v|^2
+ F(t,x,m) = 0
\]
Thus the pair \((m,v)\) solves the standard forward-backward MFG system of
Lasry - Lions type. Our formulation in terms of \((\varphi,\bar\varphi)\) is
algebraically equivalent but particularly well-suited for the branching Monte-Carlo representation introduced below.
\end{remark}

\begin{remark}[Soft constraint and matching marginal $\mu_T$]
  \label{rmk:matching_marginals}
Consider the terminal cost
\[
g(m(T,\cdot)) = \lambda \; D_f(\mu_T,m(T,\cdot)), \qquad \lambda>0,
\]
with $f(u) := -\log u -1$, i.e.\ $D_f(\mu_T,m)=\int m\log\frac{m}{\mu_T}-\int m$, which is $\mathrm{KL}(m(T,\cdot)\,\|\,\mu_T)-1$ on probability measures. Over nonnegative measures, the convex conjugate $g^*$ is
\begin{equation}
  \label{eq:gstar-kl}
g^*(v(T,\cdot)) = \lambda \; \E_{X_T \sim \mu_T}\!\left[ e^{\,v(T,X_T)/\lambda}\right],
\end{equation}
so that the terminal coupling in the nonlinear Sinkhorn system \eqref{eq:nl-sinkhorn} becomes explicit:
\begin{equation}
  \label{eq:soft-terminal-lambda-finite}
  G^{-1}\!\left(-\tfrac{1}{c}\log\varphi(T,\cdot),x\right)
  = \mu_T(x)\,\varphi(T,x)^{-a},
  \qquad
  a:=\frac{1}{\lambda c}.
\end{equation}
The system \eqref{eq:nl-sinkhorn} then reads
\begin{equation}
  \label{eq:nl-sinkhorn-kl}
  \left\{
  \begin{aligned}
  \varphi(0, x)\,\bar{\varphi}(T, x) &= \mu_0(x),\\
  \bar{\varphi}(0, x)\,\varphi(T, x) &= \mu_T(x)\,\varphi(T,x)^{-a}.
  \end{aligned}
  \right.
\end{equation}

\no In our numerical experiments, we consider this formulation for various values of $\lambda$.  
In the limit $\lambda \to \infty$ (i.e.\ $a\to 0$), the terminal penalty enforces the hard constraint $m(T,\cdot)=\mu_T$, and the terminal coupling in \eqref{eq:nl-sinkhorn-kl} reduces to the standard marginal constraint
\begin{equation}
  \label{eq:soft-terminal-lambda-infty}
  \bar{\varphi}(0, x)\,\varphi(T, x) = \mu_T(x),
\end{equation}
so that the nonlinear Sinkhorn system recovers the classical Schr\"odinger bridge.
\end{remark}

\noindent Appendix~\ref{app:wasserstein-terminal} records an analogous soft terminal condition based on a $p$-Wasserstein discrepancy.

\begin{remark}[Special cases of semilinear Schr\"odinger bridges]
Appendix~\ref{app:special-cases} records several tractable specializations: linear bridges with confining potentials and explicit heat kernels on Coxeter chambers (including Dyson and type-$C$ limits), the Gaussian semilinear bridge for log-linear $F$ \citep{bunne2023schrodinger} and a relative-entropy terminal cost (Proposition~\ref{propPHL}), lifted bridges with Gyöngy's Markov projection (Proposition~\ref{propGyongy}), and score-based / DDPM models as the exact $F\equiv 0$ solution with $\bar\varphi\equiv 1$ and noised-data prior (Appendix~\ref{app:score-based}), with proofs in Appendices~\ref{demo-gaussian} and~\ref{demo-gyongy}.
\end{remark}

\section{Branching diffusions}
\label{sec:branching}

\no The semilinear PDE system \eqref{PDE1}-\eqref{PDE2} is generally analytically intractable, and thus requires numerical approximation. 
A deterministic Sinkhorn-type iterative method based on PDE techniques was introduced by \citet{Gueant2012}; however, this approach suffers from the curse of dimensionality and is therefore not suitable in the context of generative modeling. 
\begin{wrapfigure}{r}{0.45\textwidth}
  \centering
  \captionsetup{font=small} 
  \includegraphics[width=\linewidth]{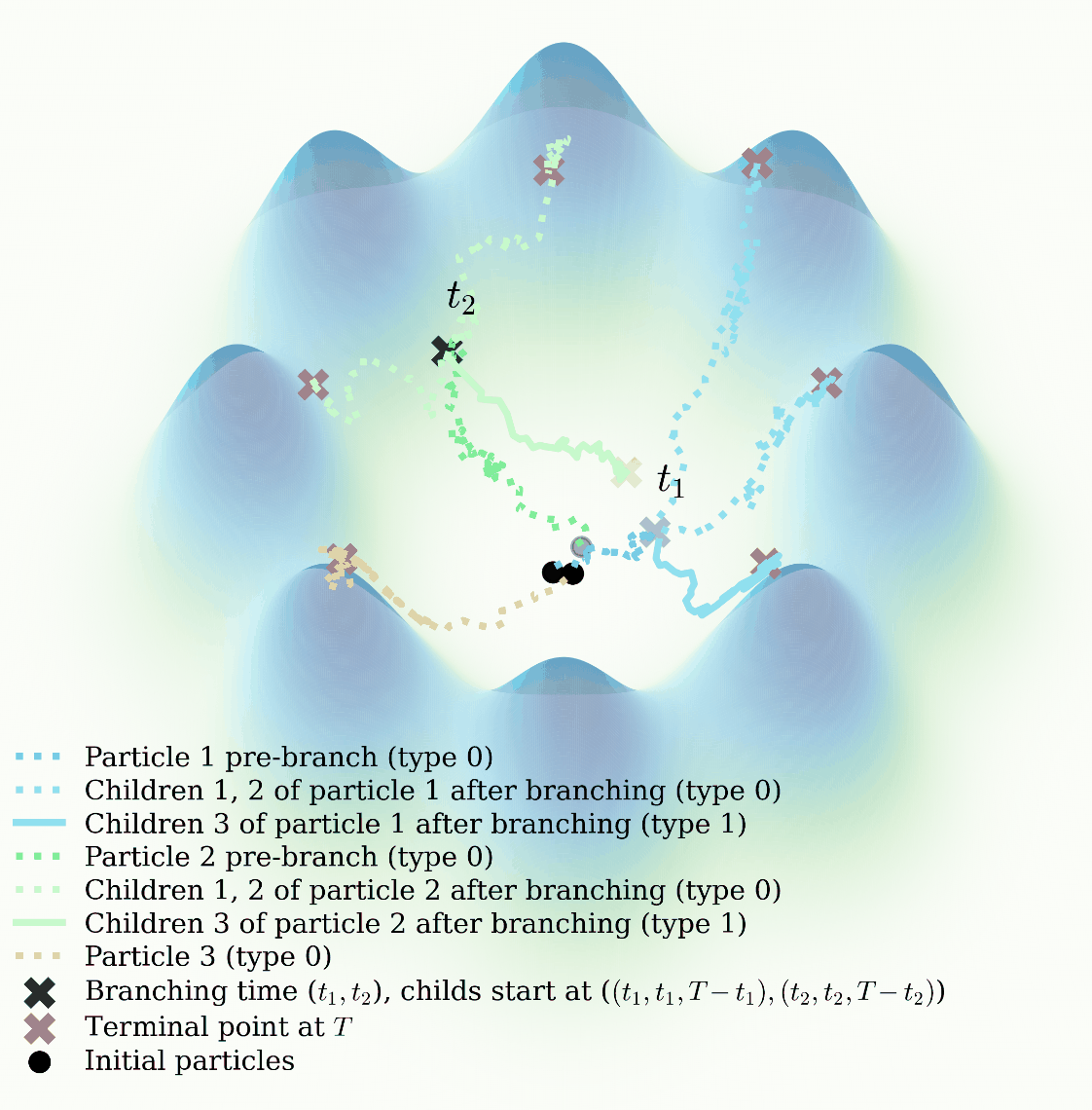}
  \caption{Branching paths on an 8-Gaussian terminal potential, illustrating Gross-Pitaevskii-type branching (Appendix~\ref{feynman-galton-watson-diagrams}).}
  \label{fig:branching-example}
\end{wrapfigure}
An alternative and more powerful approach is provided by \emph{marked branching diffusion representations}, 
originating in McKean’s probabilistic solution of the KPP equation \citep{mckean1975application} 
and subsequently extended to broad classes of semilinear and nonlinear PDEs \citep{HenryLabordere,henrylabordere2019branching,HenryLabordereTouzi2021}. 
These representations have been successfully applied to parabolic, elliptic, and even hyperbolic PDEs, including, notably, the complex Gross-Pitaevskii equation developed in Appendix~\ref{app:gross-pitaevskii}.

\no The key idea is to encode the nonlinear terms of the PDE into a branching mechanism: particles follow Brownian dynamics until a random exponential branching time, at which point they reproduce according to rules 
determined by the nonlinearity \(F\). 
The resulting branching trees provide unbiased MC estimators of the PDE solution. 
This stochastic representation is \emph{embarrassingly parallelizable}, which makes it particularly effective for large-scale simulations and high-dimensional problems.  

\no More generally, when \(F\) is local and analytic in $m(t,x)$, we truncate its power series at order \(K\), yielding a polynomial nonlinearity amenable to branching. Without loss of generality, we write it in the \emph{canonical form}
\begin{equation}
\label{canonical_form}
F(t,x,m) = 1 - \sum_{k=0}^K a_k(t,x)\,m^k := 1 - F_p(t,x,m),
\end{equation}
where the coefficients $(a_k)_{k=0,\cdots,K}$ are functions of $t$ and $x \in \R^d$.
Thus the system \eqref{PDE1}-\eqref{PDE2} becomes
\begin{equation}
  \left\{
  \begin{aligned}
  \partial_t \varphi_t + \tfrac12 \Delta \varphi_t 
  &= c\,\varphi_t\Big(1 - F_p\big( t, x, \varphi_t \bar\varphi_{T-t}\big)\Big), \\
  \partial_t \bar\varphi_t + \tfrac12 \Delta \bar\varphi_t 
  &= c\,\bar\varphi_t\Big(1 - F_p\big(T\!-\!t, x, \bar\varphi_t \varphi_{T-t}\big)\Big).
  \end{aligned}
  \right.
  \end{equation}
\subsection{Branching representation}

We use a marked branching Brownian motion with two types of particles:
white vertex \(\circ\) represents \(\varphi\),
black vertex \(\bullet\) represents \(\bar\varphi\). The branching rate is fixed to be $\beta = c$, and we choose arbitrarily probabilities \(p_k>0\) for \(k=0,1,\dots,K\) with \(\sum_{k=0}^K p_k=1\).
Each particle moves as a Brownian motion and lives an exponential time \(\tau\sim\mathrm{Exp}(c)\).
At a branching/death time \(s\), draw \(K^\star\) with probability \(p_{K^\star}\) and apply
\[
\left\{
\begin{aligned}
&\text{White parent }(\circ): &\quad 
\circ_s \;\longrightarrow\; (k{+}1)\,\circ_s \;+\; k\,\bullet_{T-s}, && k=0,1,\dots,K, \\[0.5pt]
&\text{Black parent }(\bullet): &\quad 
\bullet_s \;\longrightarrow\; k\,\circ_{T-s} \;+\; (k{+}1)\,\bullet_s, && k=0,1,\dots,K.
\end{aligned}
\right.
\]

Here, the subscript denotes the (re-)starting time of each particle: offspring with the same type as the parent start at time \(s\), and offspring with the opposite type start at time \(T-s\).

Each simulation carries a multiplicative \emph{weight} \(\mathcal{W}\), initialized to \(1\) at the root \((t,x)\).
At every branching event occurring at time \(s\) with outcome \(K^\star\), it is updated as
\begin{equation}
  \label{eq:weight-update}
  \mathcal{W} \;\longrightarrow\; \mathcal{W} \times \frac{a_{K^\star}(s^{\circ},X_s)}{p_{K^\star}},
  \qquad s^{\circ}:=s\ \text{(white parent)},\quad s^{\circ}:=T-s\ \text{(black parent)},
\end{equation}
where $X_s$ is the position of the particle at time $s$. Thus the global weight of a branching tree is the product of the factors associated with all branchings. At time \(T\), each surviving white (resp.\ black) particle at position $X_T^i$ (resp.\ $\bar X_T^j$) contributes the factor $\varphi(T,X_T^i)$ (resp.\ $\bar\varphi(T,\bar X_T^j)$). We denote by
\[
\left\{
\begin{aligned}
N_T^{\mathrm{white}} \quad &\text{the number of white particles at time $T$, with positions } X_T^1,\dots,X_T^{N_T^{\mathrm{white}}},\\[0.3pt]
\bar N_T^{\mathrm{black}} \quad &\text{the number of black particles at time $T$, with positions } \bar X_T^1,\dots,\bar X_T^{\bar N_T^{\mathrm{black}}}.
\end{aligned}
\right.
\]

\begin{proposition}[Unbiased stochastic representation for $\varphi$]
\label{prop:phi-branch-lamc}
Let $A:=\|\varphi(T,\cdot)\|_{\infty}$, $B:=\|\bar\varphi(T,\cdot)\|_{\infty}$ and $b_k:=\|a_k\|_{L^\infty([0,T]\times\R^d)}$ be finite, and assume
(i) $cT\sum_{k}k\,p_k<1$ and (ii) $A^2B^2\exp\big(cT(\sum_{k} b_k^2r^k/p_k-1)\big)\le r$ for some $r>0$. Set
\[
\Phi
:=
\mathcal{W}
\prod_{i=1}^{N_T^{\mathrm{white}}} \varphi(T,X_T^i)
\prod_{j=1}^{\bar N_T^{\mathrm{black}}} \bar\varphi(T,\bar X_T^j).
\]
\no Then the tree is a.s.\ finite, $\Phi\in L^2$, and $\varphi(t,x)=\E_{t,x}[\Phi]$ (and its black-rooted analogue $\bar\varphi$) is the mild solution of \eqref{PDE1}-\eqref{PDE2} dominated by \eqref{eq:majorant-L1}.
\end{proposition}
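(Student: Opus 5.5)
We follow the standard route for marked branching representations of semilinear PDEs (McKean; Henry-Labordère, Touzi). There are three steps: non-explosion of the two-type tree, an $L^2$ bound on $\Phi$ via a comparison ODE, and identification of $\E_{t,x}[\Phi]$ as a mild solution by conditioning on the first branching time. Throughout, after the canonical rewriting \eqref{canonical_form}, the system \eqref{PDE1}-\eqref{PDE2} reads $\partial_t\varphi+\tfrac12\Delta\varphi - c\varphi = -c\sum_k a_k\varphi^{k+1}\bar\varphi_{T-t}^k$. Its Duhamel form with killing rate $c$ is
\[
\varphi(t,x)=\E\Big[e^{-c(T-t)}\varphi(T,x+W_{T-t})+\int_t^T c\,e^{-c(s-t)}\sum_{k}a_k(s,x+W_{s-t})\,\varphi^{k+1}(s,\cdot)\,\bar\varphi^{k}(T-s,\cdot)\big|_{x+W_{s-t}}\,\dd s\Big],
\]
and the analogue for $\bar\varphi$ has $a_k(T-s,\cdot)$ and the roles swapped. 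This explains the branching rule and the time reflection $s\mapsto T-s$ for opposite-type offspring. It also explains the weight $a_{K^\star}(s^\circ,X_s)/p_{K^\star}$: the branching density $c e^{-c(s-t)}$ is absorbed by $\tau\sim\mathrm{Exp}(c)$, and the sum over $k$ by drawing $K^\star$ with probabilities $p_k$.

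\textbf{Step 1 (finiteness).} A particle of either type produces $2k+1$ offspring with probability $p_k$, so the mean offspring number is $m=1+2\sum_k kp_k$. Each particle has residual lifetime at most $T$, with restarts at $s$ or $T-s$, both in $[0,T]$. Hence the number of branchings along any line of descent is dominated by a Poisson($cT$)-type count per generation. Comparing with a Galton--Watson process in which each individual branches with probability at most $1-e^{-cT}\le cT$, the expected number of non-root branchings is bounded by a geometric series $\sum_n (cT\,\E[\text{extra offspring}])^n$. Condition (i), $cT\sum_k kp_k<1$, together with the splitting into $k+1$ same-type and $k$ opposite-type offspring, makes this subcritical in the appropriate per-type count, so the tree is a.s.\ finite. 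I expect some care is needed to turn (i) into exact subcriticality; if the crude bound fails, I would instead count only the "new" offspring $2k$ per event and use that one child continues the parent's line.

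\textbf{Step 2 ($L^2$ bound; main obstacle).} Use $|\Phi|\le \prod|\mathcal W\text{-factors}|\cdot A^{N^{\mathrm w}}B^{N^{\mathrm b}}$. Let $u(\theta)$ denote the supremum over roots of $\E[\Phi^2]$ for a tree with remaining horizon $\theta$. Conditioning on the first event and using independence of subtrees gives a comparison inequality. Absorbing the $A^2B^2$ leaf factors and using the symmetry between types, it reduces to
\[
u(\theta)\le A^2B^2\Big(e^{-c\theta}+\int_0^\theta c e^{-cs}\sum_k \tfrac{b_k^2}{p_k}u^{2k+1}\,\dd s\Big).
\]
This is dominated by the ODE $y'=c\,y\,(\sum_k b_k^2 y^{k}/p_k-1)$ started from $A^2B^2$. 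Condition (ii) states exactly that the constant $r$ is a supersolution barrier: on $[0,r]$ the growth rate is at most $c(\sum_k b_k^2r^k/p_k-1)$, so by Gronwall $y(\theta)\le A^2B^2\exp\big(cT(\sum_k b_k^2r^k/p_k-1)\big)\le r$ and the solution cannot leave $[0,r]$ on $[0,T]$. To make this rigorous, I would apply the argument to trees truncated at $n$ generations, where everything is finite, obtain $\E[\Phi_n^2]\le r$ uniformly in $n$, and pass to the limit by Fatou using Step 1. The delicate points are matching the exponent $k$ against $2k+1$ factors and handling the two types with reflected times; I would take suprema over both types and all start times in $[0,T]$ to close the bound. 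This yields $\Phi\in L^2$, uniform integrability, and the majorant \eqref{eq:majorant-L1}.

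\textbf{Step 3 (identification).} Define $\hat\varphi(t,x)=\E_{t,x}[\Phi]$ and let $\hat{\bar\varphi}$ be the black-rooted analogue. Condition on $\tau$, $X_\tau$ and $K^\star$. By the strong Markov property and conditional independence of the offspring subtrees, $\E[\text{subtree products}]=\hat\varphi^{k+1}(s,X_s)\,\hat{\bar\varphi}^k(T-s,X_s)$, with integrability justified by Step 2. Substituting, the weight $a_k/p_k$ cancels the draw probability, and the pair $(\hat\varphi,\hat{\bar\varphi})$ satisfies the Duhamel system above, i.e.\ it is a mild solution of \eqref{PDE1}-\eqref{PDE2} bounded by the majorant. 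Uniqueness within this dominated class would follow from a Lipschitz/Gronwall argument on the ball of radius $\sqrt r$, if needed.
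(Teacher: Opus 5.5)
Your overall architecture (non-explosion, a second-moment majorant, first-branching decomposition) matches the paper, and your Step 3 is essentially the paper's Step 3. However, Steps 1 and 2 do not prove the statement under hypotheses (i) and (ii), because both miss how the time reflection enters.

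\textbf{Step 2.} The inequality you display is not the one the tree satisfies. Leaves contribute $A^2$ (white) or $B^2$ (black), not $A^2B^2$. Moreover, the $k$ opposite-type children of a particle branching at $s$ are born at $T-s$, so their remaining horizon is $s$, which can exceed the parent's horizon $T-t$. The second moments therefore satisfy a non-causal system, not a Volterra inequality in $\theta$. So neither a Gronwall/barrier argument in $\theta$ nor the autonomous ODE $y'=cy(\sum_k b_k^2y^k/p_k-1)$ started from $A^2B^2$ is justified. Closing the bound by a supremum over start times loses the time structure: already for $K=0$ it closes only if $(1-e^{-cT})b_0^2/p_0<1$, whereas the majorant's second moment $A^2e^{c(T-t)(b_0^2/p_0-1)}$ is finite for every $T$.

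The exponent mismatch you flag is exactly the missing idea. Writing $U^{k+1}\bar U(T-s)^k=U\,(U\bar U(T-s))^k$, the majorant system becomes
\[
U'(t)=c\,U(t)\bigl[1-Q\bigl(U(t)\bar U(T-t)\bigr)\bigr],\qquad
\bar U'(t)=c\,\bar U(t)\bigl[1-Q\bigl(U(T-t)\bar U(t)\bigr)\bigr],\qquad
Q(r):=\textstyle\sum_k b_k^2r^k/p_k ,
\]
and the product $R(t):=U(t)\bar U(T-t)$ satisfies $R'\equiv0$. Hence:
\begin{itemize}
\item The system is linear, with $U(t)=A^2e^{c(T-t)(Q(r)-1)}$ and $\bar U(t)=B^2e^{c(T-t)(Q(r)-1)}$.
\item Consistency $r=U(0)\bar U(T)$ is the scalar equation $r=A^2B^2e^{cT(Q(r)-1)}$.
\item Condition (ii) says this map sends some $r>0$ below itself, which yields a smallest fixed point by monotone iteration from $0$.
\item $|\Phi|^2$ is dominated by the tree functional with branching factors $(b_K/p_K)^2=(b_K^2/p_K)/p_K$ and leaf factors $A^2,B^2$. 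Its generation-truncated expectations are monotone iterates of the majorant map started below the explicit solution, so they stay below it, giving $\E[\Phi^2]<\infty$.
\end{itemize}
So the $r$ in (ii) is the conserved product $U(t)\bar U(T-t)$, not a bound on $\E[\Phi^2]$. The same lemma with $b_k,A,B$ in place of $b_k^2/p_k,A^2,B^2$ produces the $L^1$ majorant \eqref{eq:majorant-L1}, which your argument does not yield.

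\textbf{Step 1.} This step has the same defect. Your comparisons use only that every particle or line lives at most $T$. They give mean ratios $(1-e^{-cT})(1+2\bar K)$ or, counting only new lines, $2cT\bar K$, where $\bar K:=\sum_k kp_k$. Both are $\ge1$ whenever $cT\bar K\ge\frac12$, which (i) allows.

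The fix again uses the reflection. The new offspring born at $s$ and at $T-s$ have remaining horizons $T-s$ and $s$, which sum to $T$. Weighting each line by its remaining horizon, a line of horizon $\theta$ (branching at Poisson rate $c$ by memorylessness) spawns lines of expected total horizon
\[
\int_{T-\theta}^{T}c\,\bar K\,[(T-s)+s]\,\dd s=cT\bar K\,\theta ,
\]
so the geometric ratio is exactly $cT\bar K$. The paper reaches the same threshold via the renewal equation for the expected tree size and the affine supersolution $1+c(T-t)(1+2\bar K)/(1-cT\bar K)$.

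\textbf{Uniqueness.} The statement asserts uniqueness in the class dominated by \eqref{eq:majorant-L1}. A Lipschitz--Gronwall argument is obstructed by the same non-causal coupling: $\varphi(t,\cdot)$ depends on $\bar\varphi$ on $[0,T-t]$, which in turn depends on $\varphi$ on all of $[0,T]$. In general it only gives a contraction under an extra smallness condition. The paper instead iterates the Duhamel formula along the tree stopped at generation $j$, using that the explicit $L^1$ majorant is an exact fixed point equal to the expectation of the majorant functional $M_1$. This bounds the error by $U_1(t)-\E[M_1\mathbf 1_{\{D<j\}}]$, which tends to $0$.
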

\no The proof is deferred to Appendix~\ref{demo-L1-L2-integrability}; illustrative Feynman-Galton-Watson diagrams in Appendix~\ref{feynman-galton-watson-diagrams}, and a numerical sanity check of the time-reflected coupling $u(t,\cdot)u(T-t,\cdot)$ in Appendix~\ref{app:branching-sanity-toy}.

\section{Stochastic-optimization form of the \emph{nonlinear} Sinkhorn system}
\label{sec:stochastic-opt-form}
To enforce the two nonlinear Sinkhorn conditions \eqref{eq:nl-sinkhorn}, we match the associated measures using the Maximum Mean Discrepancy (MMD) \citep{JMLR:v13:gretton12a}, which we now recall.

\begin{definition}[Maximum Mean Discrepancy]
\label{def:mmd}
Let $k$ be a positive definite kernel. For two probability laws $P,Q$ on $\R^d$, the MMD admits the closed form
\begin{equation*}
  \mathrm{MMD}_k^2(P,Q)
  =\E_{X,X'\sim P}[k(X,X')]
  +\E_{Y,Y'\sim Q}[k(Y,Y')]
  -2\,\E_{\substack{X\sim P ; Y\sim Q}}[k(X,Y)],
\end{equation*}
where $X,X'$ (resp.\ $Y,Y'$) are independent copies from $P$ (resp.\ $Q$).
\end{definition}

\begin{remark}[Characteristic kernel]
\label{rmk:characteristic_kernel}
If $k$ is characteristic, then $\mathrm{MMD}_k(P,Q)=0 \iff P=Q$. Background on MMD and RKHS, and the proof of Proposition~\ref{prop:mmd_objective}, are given in Appendix~\ref{app:rkhs_mmd}.
\end{remark}

\begin{proposition}[MMD objective]
\label{prop:mmd_objective}
Let $k$ be as in Lemma~\ref{lem:mmd-finite-measures} (e.g.\ Gaussian), assume the four densities below are integrable, and define
\begin{equation}
  \mathcal L_{\mathrm{MMD}}
  :=
  \mathrm{MMD}_k^2\!\Big(\varphi(0,\cdot)\,\bar\varphi(T,\cdot),\,\mu_0\Big)
  +
  \mathrm{MMD}_k^2\!\Big(\bar\varphi(0,\cdot)\,\varphi(T,\cdot),\,G^{-1}\!\big(-\tfrac{1}{c}\log\varphi(T,\cdot),\cdot\big)\Big).
\end{equation}
Then $\mathcal L_{\mathrm{MMD}}\!\ge \!0$, and \(
  \mathcal L_{\mathrm{MMD}}=0
  \Longleftrightarrow
  (\varphi(T,\cdot),\bar\varphi(T,\cdot))\) satisfy \eqref{eq:nl-sinkhorn} a.e.
\end{proposition}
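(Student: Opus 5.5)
The argument reduces the statement to the characteristic property of the kernel, applied separately to each of the two MMD terms. The one subtlety is that the four densities appearing in $\mathcal L_{\mathrm{MMD}}$ are integrable and nonnegative but need not be probability densities. Indeed, $\varphi(0,\cdot)\bar\varphi(T,\cdot)$ has unit mass only at the solution. So I would work throughout with finite (signed) measures and use the extension of MMD to finite measures given in Lemma~\ref{lem:mmd-finite-measures}. That lemma, for a bounded characteristic (e.g.\ Gaussian) kernel, identifies $\mathrm{MMD}_k(\nu_1,\nu_2)$ with the RKHS norm $\|\mathfrak m_{\nu_1}-\mathfrak m_{\nu_2}\|_{\mathcal H_k}$ of the difference of kernel mean embeddings, $\mathfrak m_\nu:=\int k(\cdot,y)\,\nu(\dd y)$. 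It also gives injectivity of $\nu\mapsto\mathfrak m_\nu$ on finite measures.

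The steps are as follows.
\begin{enumerate}
  \item Show that each embedding is well defined. Since $k$ is bounded and each density is in $L^1$, the Bochner integral defining $\mathfrak m_\nu$ converges in $\mathcal H_k$.
  \item Expand $\|\mathfrak m_{\nu_1}-\mathfrak m_{\nu_2}\|^2_{\mathcal H_k}$ using the reproducing property $\langle k(\cdot,x),k(\cdot,y)\rangle=k(x,y)$ and Fubini. This recovers the closed form of Definition~\ref{def:mmd}, with expectations replaced by integrals against the measures. Hence each term of $\mathcal L_{\mathrm{MMD}}$ is a squared norm, and $\mathcal L_{\mathrm{MMD}}\ge 0$.
  \item For the equivalence, note that a sum of two nonnegative terms vanishes iff both vanish. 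Each term vanishes iff the two embeddings coincide, which by injectivity (the characteristic property of Remark~\ref{rmk:characteristic_kernel}, in its finite-measure form) means the two measures coincide.
  \item Conclude that two absolutely continuous measures coincide iff their densities agree Lebesgue-a.e. This yields the two identities of \eqref{eq:nl-sinkhorn} a.e. Conversely, if \eqref{eq:nl-sinkhorn} holds a.e., the measures are equal and both terms are zero.
\end{enumerate}

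The main obstacle is the injectivity step for general finite measures rather than probability measures. Characteristic is usually stated for probability laws, and a mass mismatch has to be excluded. My first approach would be to use that for a translation-invariant kernel with everywhere-positive spectral density (the Gaussian case), $\|\mathfrak m_{\nu_1}-\mathfrak m_{\nu_2}\|^2$ equals $\int |\hat\nu_1(\omega)-\hat\nu_2(\omega)|^2\hat k(\omega)\,\dd\omega$. Vanishing then forces $\hat\nu_1=\hat\nu_2$ everywhere by continuity of Fourier transforms of finite measures, hence $\nu_1=\nu_2$. This is exactly what Lemma~\ref{lem:mmd-finite-measures} is meant to supply, so I would invoke it directly and only verify that the integrability hypothesis puts all four densities in its scope.
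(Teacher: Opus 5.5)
Your proposal is correct and follows the paper's proof: apply Lemma~\ref{lem:mmd-finite-measures} to each of the two pairs of finite measures to get nonnegativity and ``zero iff the measures coincide'', use that a sum of two nonnegative terms vanishes iff both do, and pass from equality of absolutely continuous measures to a.e.\ equality of densities. Your steps 1--2 (Bochner embedding and RKHS-norm expansion) are unnecessary but harmless: the lemma defines $\mathrm{MMD}_k^2$ on finite measures directly as $\iint k(x,y)\,(P-Q)(\dd x)\,(P-Q)(\dd y)$ and obtains nonnegativity from the Fourier identity, not from an RKHS norm.
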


\subsection{Meta-algorithm for training semilinear Schr\"odinger bridges}
\label{sec:meta_algorithm_train_ssb}
\leavevmode\par
\noindent
\begin{minipage}[t]{0.50\textwidth}
\vspace{0pt}
We parametrize the terminal potentials by neural networks
$\varphi_\theta(x):=\varphi_\theta(T,x)$ and
$\bar\varphi_\eta(x):=\bar\varphi_\eta(T,x)$.
Let $\mu^*$ be a reference density that is easy to sample and evaluate.

For $Z\sim\mu^*$, we define
$\widehat A_{\theta,\eta}(Z,\ \circ\ (\text{resp. } \bullet))$
using a white (resp.\ black) rooted branching tree starting from
$(0,Z)$ :
\begin{equation*}
\widehat A_{\theta,\eta}(Z,\circ)
:=
\mathcal W
\prod_{i=1}^{N_T^{\mathrm{white}}}
\varphi_\theta(Z_T^i)
\prod_{j=1}^{\bar N_T^{\mathrm{black}}}
\bar\varphi_\eta(\bar Z_T^j),
\end{equation*}

with analogous definition for a black-rooted tree.
\end{minipage}
\hfill
\begin{minipage}[t]{0.45\textwidth}
\vspace{0pt}
\begin{algorithm}[H]
\footnotesize
\caption{Training semilinear SB}
\label{alg:meta-train-ssb}

\KwIn{Laws $\mu_0,\mu_T, \mu^*$, rate $c$, rule $F$, penalty $\lambda>0$, kernel $k$, step $\gamma$, batch size $B$.}

\KwOut{Trained potentials $(\varphi_\theta,\bar\varphi_\eta)$.}

Initialize $\theta,\eta$\;

\While{not converged}{
Draw
$\{X_0^{(b)}\}_{b=1}^B\sim\mu_0$,
$\{X_T^{(b)}\}_{b=1}^B\sim\mu_T$, and
$\{Z^{(b)}\}_{b=1}^B\sim\mu^*$\;

Simulate white and black trees from each
$Z^{(b)}$ and form
$\widehat A_{\theta,\eta}(Z^{(b)},\circ)$ and $\widehat A_{\theta,\eta}(Z^{(b)},\bullet)$\;

$\widehat{\mathcal L}_{\mathrm{MMD}}
\! \gets \!
\frac{1}{B(B-1)}
\sum_{b\neq b'} \! J(b,b')$\tcp*[r]{U-statistic}

$(\theta,\eta)
\gets
(\theta,\eta)
-\gamma\nabla_{(\theta,\eta)}
\widehat{\mathcal L}_{\mathrm{MMD}}$\;
}
\end{algorithm}
\end{minipage}

\noindent
In the KL terminal case of Remark~\ref{rmk:matching_marginals}, using the
reweighted kernel \( \tilde{k}(x,y):=\mu^*(x)\,k(x,y)\,\mu^*(y) \),
which remains characteristic provided $\mu^*>0$ almost everywhere, the MMD
objective computed with $\tilde k$ reads

\begin{equation}
\label{eq:J-mc}
\min_{\theta,\eta}\mathcal L_{\mathrm{MMD}}(\theta,\eta)
:=
\min_{\theta,\eta}\E[J].
\end{equation}
where
\begin{equation}
\label{eq:J-mc-detailed}
\resizebox{0.90\linewidth}{!}{$
\begin{split}
J
&={}
\widehat A_{\theta,\eta}(Z,\circ)
\bar\varphi_\eta(Z)
\widehat A_{\theta,\eta}(Z',\circ)
\bar\varphi_\eta(Z')
k(Z,Z')
-
2\widehat A_{\theta,\eta}(Z,\circ)
\bar\varphi_\eta(Z)
k(Z,X_0')
\mu^*(X_0')
\\
&+
\mu^*(X_0)
k(X_0,X_0')
\mu^*(X_0')
+
\widehat A_{\theta,\eta}(Z,\bullet)
\varphi_\theta(Z)
\widehat A_{\theta,\eta}(Z',\bullet)
\varphi_\theta(Z')
k(Z,Z')
\\
&-
2\widehat A_{\theta,\eta}(Z,\bullet)
\varphi_\theta(Z)
\varphi_\theta(X_T')^{\frac{-1}{\lambda c}}
k(Z,X_T')
\mu^*(X_T')
+
\left(
\varphi_\theta(X_T)
\varphi_\theta(X_T')
\right)^{\frac{-1}{\lambda c}}
\mu^*(X_T)
k(X_T,X_T')
\mu^*(X_T').
\end{split}
$}
\end{equation}

\noindent
Here,
$(Z,Z')\sim\mu^*\otimes\mu^*$,
$(X_0,X_0')\sim\mu_0\otimes\mu_0$, and
$(X_T,X_T')\sim\mu_T\otimes\mu_T$
are drawn independently.
As the trees attached to $Z$ and $Z'$ are independent, $\E[J]=\mathcal L_{\mathrm{MMD}}$ (with kernel $\tilde k$); Algorithm~\ref{alg:meta-train-ssb} averages $J(b,b')$, i.e.\ $J$ at the $b$-th and $b'$-th samples, over $b\neq b'$, which keeps the estimate unbiased. Each term is estimated on mini-batches and minimized by stochastic gradient descent, with a Gaussian kernel and a suitable $\mu^*$ as stable defaults (Appendix~\ref{app:stochastic-opt-forms}).

\subsection{Meta-algorithm for sampling with semilinear Schr\"odinger bridges}
\label{sec:meta_algorithm_sample_ssb}
\leavevmode\par
\noindent
\begin{minipage}[t]{0.44\textwidth}
\vspace{0pt}

Once the potentials $(\varphi_\theta,\bar\varphi_\eta)$ have been learned, we can generate new samples by simulating the optimal controlled process (Algorithm~\ref{alg:meta-sample-ssb}): under $\PP$, \(\dd X_t \;=\; \nabla \log \varphi_\theta(t,X_t)\,\dd t + \dd W_t\). The branching representation of Proposition~\ref{prop:phi-branch-lamc} gives unbiased Monte-Carlo estimates $\widehat\varphi$ of $\varphi_\theta(t,x)$ around $x=X_t$, differentiated in $x$ with \textit{autograd}; the ratio $\nabla\widehat\varphi/\widehat\varphi$ is a consistent ($M\to\infty$) estimator of the score field. We then discretize the dynamics by Euler-Maruyama (or a higher-order scheme).

\end{minipage}
\hfill
\begin{minipage}[t]{0.52\textwidth}
\vspace{0pt}

\begin{algorithm}[H]
\footnotesize
\caption{Sampling from semilinear SB}
\label{alg:meta-sample-ssb}
\KwIn{Trained $(\varphi_\theta,\bar\varphi_\eta)$, rate $c$, rule $F$, Euler steps $N$, horizon $T$, samples $M$.}
\KwOut{Approximate bridge path $(X_{t_k})_{k=0}^N$.}
Initial $X_0\sim\mu_0$, $\Delta t\gets T/N$\;
\For{$k=0$ \KwTo $N-1$}{
Simulate $M$ white-rooted trees from $(t_k,X_{t_k})$\;
Form $\widehat\varphi_k(x)$ of $\varphi(t_k,x)$:
\[
\widehat\varphi_k(x)\!\gets\!\tfrac{1}{M}\!\sum_{m=1}^M\!\mathcal{W}^m\!
\prod_{i=1}^{N_T^{m,\mathrm{w}}}\!\varphi_\theta(X_{T,x}^{m,i})\!
\prod_{j=1}^{\bar N_T^{m,\mathrm{b}}}\!\bar\varphi_\eta(\bar X_{T,x}^{m,j})
\]
$v_k\gets \nabla_x\log\widehat\varphi_k(x)|_{x=X_{t_k}}$\tcp*[r]{autograd}
Draw $\xi_k\sim\mathcal{N}(0,I_d)$\;
$X_{t_{k+1}}\gets X_{t_k}+\Delta t\,v_k+\sqrt{\Delta t}\,\xi_k$\;
}
\end{algorithm}

\end{minipage}

\section{Numerical experiments}
\label{sec:experiments}

\subsection{Implementation details}

Each of $\varphi_\theta, \bar\varphi_\eta : \mathbb{R}^d \to (0,\infty)$ is a positive-scalar feedforward network. Unless stated otherwise, all use 3 to 8 fully connected layers with nonlinear activations.
Widths are $16$ for $d=2$, $32$ for $d=50$, and $64$ for $d=512$ for latent FFHQ experiments.
Potentials are trained jointly by stochastic gradient descent on the MMD objective \ref{eq:J-mc} (Adam, $\gamma=10^{-4}$, $B=64$). We use automatic differentiation of $\log \varphi_\theta$ for sampling (Section~\ref{sec:meta_algorithm_sample_ssb}); see Appendix~\ref{app:implementation-details}. We train on $10^4$ samples in $2$D and $10^5$ samples in $50$D, for $300$ epochs in both cases.
Sampling uses between $N=50$ and $N=200$ Euler steps and $M\in\{10^3,10^4\}$ branching trees per score evaluation (higher $M$ in higher dimension).
All runs use NVIDIA Tesla~T4 GPUs; bridge training takes $\sim\!30$\,s for $300$ epochs on $10^4$ points in $2$D, and generation $\sim\!3$\,s for $50$ steps.

\subsection{Results}

\paragraph{Classical Schr\"odinger bridges ($F=0$, $\lambda=\infty$, $d=2$).}
We begin with controlled experiments in low dimensions to validate that our algorithm recovers the \emph{classical Schr\"odinger bridge}. 
Specifically, we set $F \equiv 0$ and enforce a hard terminal constraint  (see Remark~\ref{rmk:matching_marginals} with $\lambda=\infty$), so that the problem reduces to entropic optimal transport. 
\begin{figure}[H]
\centering
\includegraphics[width=0.9\textwidth]{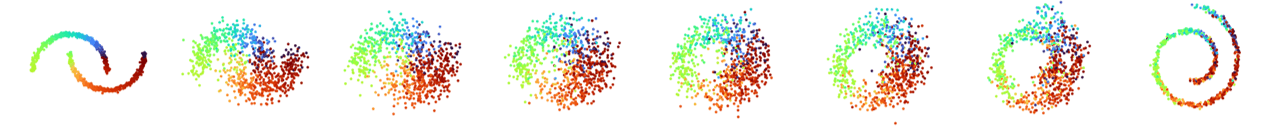}
\captionsetup{font=footnotesize}
\caption{Temporal evolution of $\mathrm{Law}(X_t)$ from Moons $\to$ Swiss (source $\mu_0$ on the left, target $\mu_T$ on the right).}
\label{fig:evolution_strip}
\end{figure}
We compare SBNL against DeepGSB \citep{liu2022deepgeneralizedschrodingerbridge}, LightSB \citep{korotin2023light}, LightSBB-M \citep{alouadi2026lightsbbmbridgingschrodingerbass}, DSBM \citep{shi2023diffusion}, and BranchSBM \citep{tang2026branchedschrodingerbridgematching} on a Gaussian $\to$ mixture of $8$ Gaussians benchmark, selecting methods that are closest to our setting in terms of modeling assumptions and problem formulation. Figure~\ref{fig:comparison_2D_samples_main} displays terminal samples, while Table~\ref{tab:comparison_2D_metrics} reports quantitative distances to $\mu_T$. SBNL captures all modes and achieves the best mixture $W_2$, whereas BranchSBM attains the lowest empirical $W_2$. Overall, SBNL remains competitive across evaluation metrics. Additional per-mode coverage results and full comparisons are provided in Appendix~\ref{app:comparison-2d}.
\begin{figure}[H]
\centering
\captionsetup{font=footnotesize}
\captionsetup[subfigure]{font=footnotesize}
\begin{subfigure}[t]{\textwidth}
  \centering
  \includegraphics[width=0.7\linewidth]{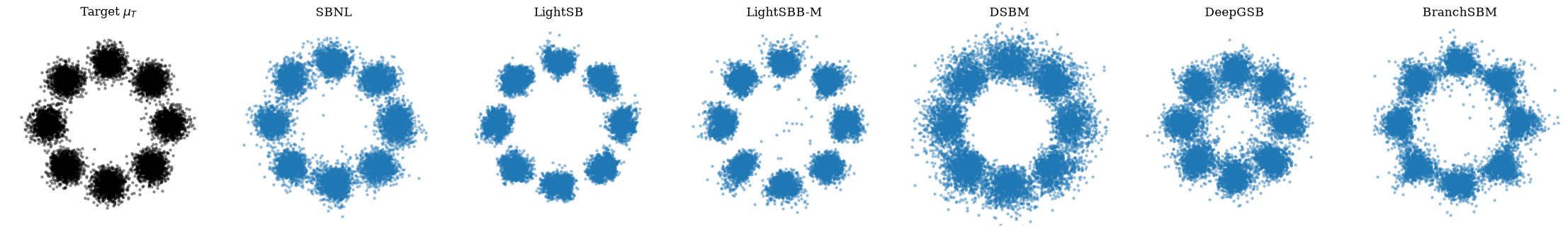}
  \caption{Terminal samples on the $8$-Gaussians benchmark.}
  \label{fig:comparison_2D_samples_main}
\end{subfigure}\\[2pt]
\begin{subfigure}[c]{0.55\textwidth}
  \centering
  \setlength{\tabcolsep}{1.5pt}
  \renewcommand{\arraystretch}{1.05}
  {\tiny
  \begin{tabular}{lccccc}
  \toprule
  Method & $MW_2$ & $W_2^{\text{emp}}$ & $E$ & $\|\Delta \bar{x}\|$ & $\|\Delta\Sigma\|_F$ \\
  \midrule
  SBNL
  & $\mathbf{1.52}_{\pm 0.07}$
  & $1.14_{\pm 0.23}$
  & $0.023_{\pm 0.012}$
  & $0.29_{\pm 0.04}$
  & $2.97_{\pm 0.22}$ \\

  BranchSBM
  & $1.57_{\pm 0.14}$
  & $\mathbf{1.05}_{\pm 0.12}$
  & $\mathbf{0.018}_{\pm 0.009}$
  & $\mathbf{0.08}_{\pm 0.05}$
  & $\mathbf{1.74}_{\pm 0.19}$ \\

  LightSBB-M
  & $1.55_{\pm 0.27}$
  & $1.26_{\pm 0.15}$
  & $0.018_{\pm 0.009}$
  & $0.16_{\pm 0.07}$
  & $1.79_{\pm 0.39}$ \\

  LightSB
  & $1.54_{\pm 0.24}$
  & $1.22_{\pm 0.12}$
  & $0.024_{\pm 0.009}$
  & $0.11_{\pm 0.06}$
  & $4.69_{\pm 0.49}$ \\

  DSBM
  & $2.54_{\pm 0.15}$
  & $1.61_{\pm 0.12}$
  & $0.044_{\pm 0.017}$
  & $0.23_{\pm 0.09}$
  & $12.05_{\pm 0.84}$ \\

  DeepGSB
  & $2.48_{\pm 0.19}$
  & $2.02_{\pm 0.11}$
  & $0.199_{\pm 0.020}$
  & $0.24_{\pm 0.08}$
  & $25.93_{\pm 0.24}$ \\
  \bottomrule
\end{tabular}
  }
  \subcaption{Comparison ($MW_2$ : closed-form Wasserstein, $W_2^{\text{emp}}$ : empirical $W_2$ on $10^4$ samples, $E$ : energy distance, and $\|\Delta\bar x\|$ (resp.\ $\|\Delta\Sigma\|_F$) : Euclidean (resp.\ Frobenius) error on the sample mean (resp.\ covariance)).}
  \label{tab:comparison_2D_metrics}
\end{subfigure}
\hfill
\begin{subfigure}[c]{0.43\textwidth}
  \centering
  \includegraphics[width=\linewidth]{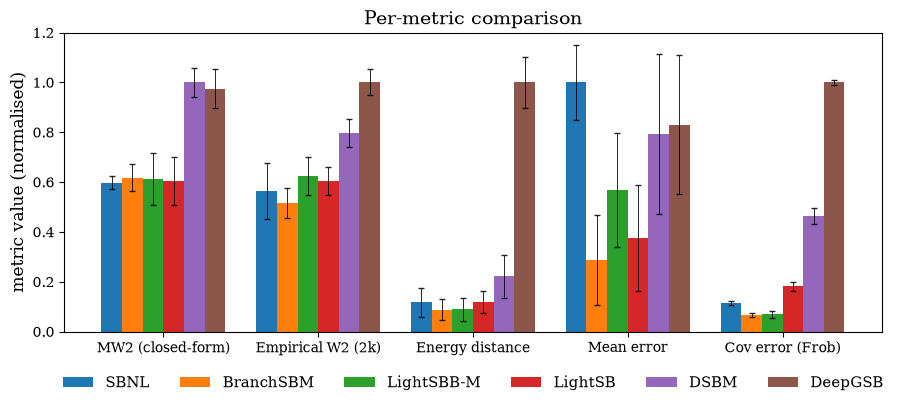}
  \caption{Visual comparison across all metrics, each column-wise max-normalized to $1$ (lower is better).}
  \label{fig:comparison_MW2_main}
\end{subfigure}
\end{figure}

\paragraph{High-dimensional mixtures ($F(m)=0$, $\lambda=\infty$, $d=50$).}
We use the same $d{=}50$ Gaussian-mixture benchmark as in \citep{shi2023diffusion} (Figures~\ref{fig:gen_vs_true_50D}-\ref{fig:evolution_50D}).
The empirical mean, standard deviation, and skewness of $\mathrm{Law}(X_T)$ converge to those of $\mu_T$ within $50$ training epochs (Figure~\ref{fig:three_in_a_row}).
We observe no variance collapse, attributable to the consistent branching estimator of $\nabla\log\varphi_\theta$ avoiding the bias that score-matching pipelines accumulate at high $d$ \citep{bortoli2021diffusion}.

\paragraph{Soft terminal conditions ($F=0$, $\lambda<\infty$, $d=2$).}
We next relax the hard constraint by introducing a finite penalty parameter $\lambda$.  
Figure~\ref{fig:evolution_lambda_generated} shows $\mathrm{Law}(X_t)$ for decreasing values of $\lambda$, and Figure~\ref{fig:KL_vs_lambda} reports the induced Kullback-Leibler divergence at the terminal time.  
As expected, $D_{\mathrm{KL}}(\mu_T,\mathrm{Law}(X_T))$ decreases monotonically with $\lambda$, confirming that stronger penalties enforce closer alignment with the target law.
\begin{figure}[H]
\centering
\captionsetup{font=footnotesize}
\captionsetup[subfigure]{font=footnotesize}
\begin{minipage}[c]{0.48\textwidth}
  \begin{subfigure}[c]{\linewidth}
    \centering
    \includegraphics[width=0.8\linewidth]{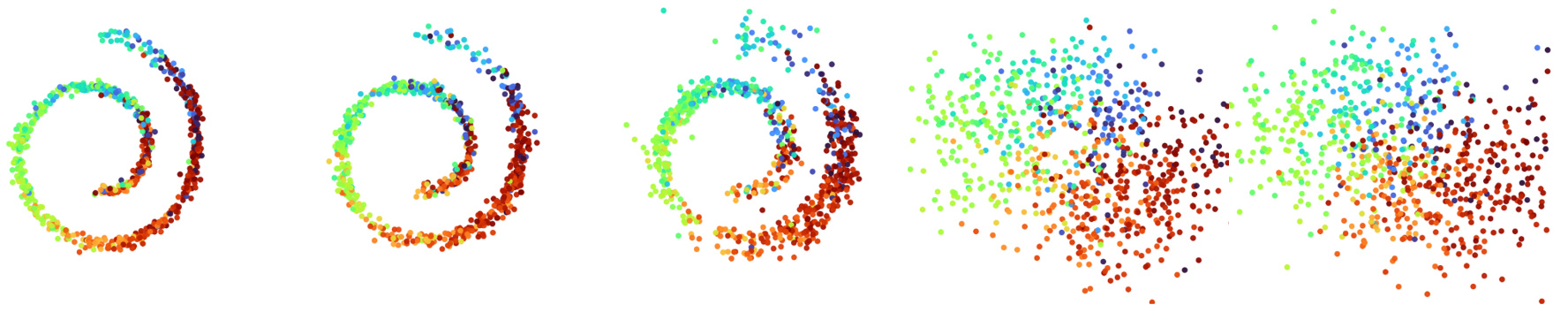}
    \caption{Generation for $\lambda \in \{10^5,10^4,5\!\times\!10^3,10^3,10^2\}$.}
    \label{fig:evolution_lambda_generated}
  \end{subfigure}\\[0.5em]
  \begin{subfigure}[c]{\linewidth}
    \centering
    \includegraphics[width=0.6\linewidth]{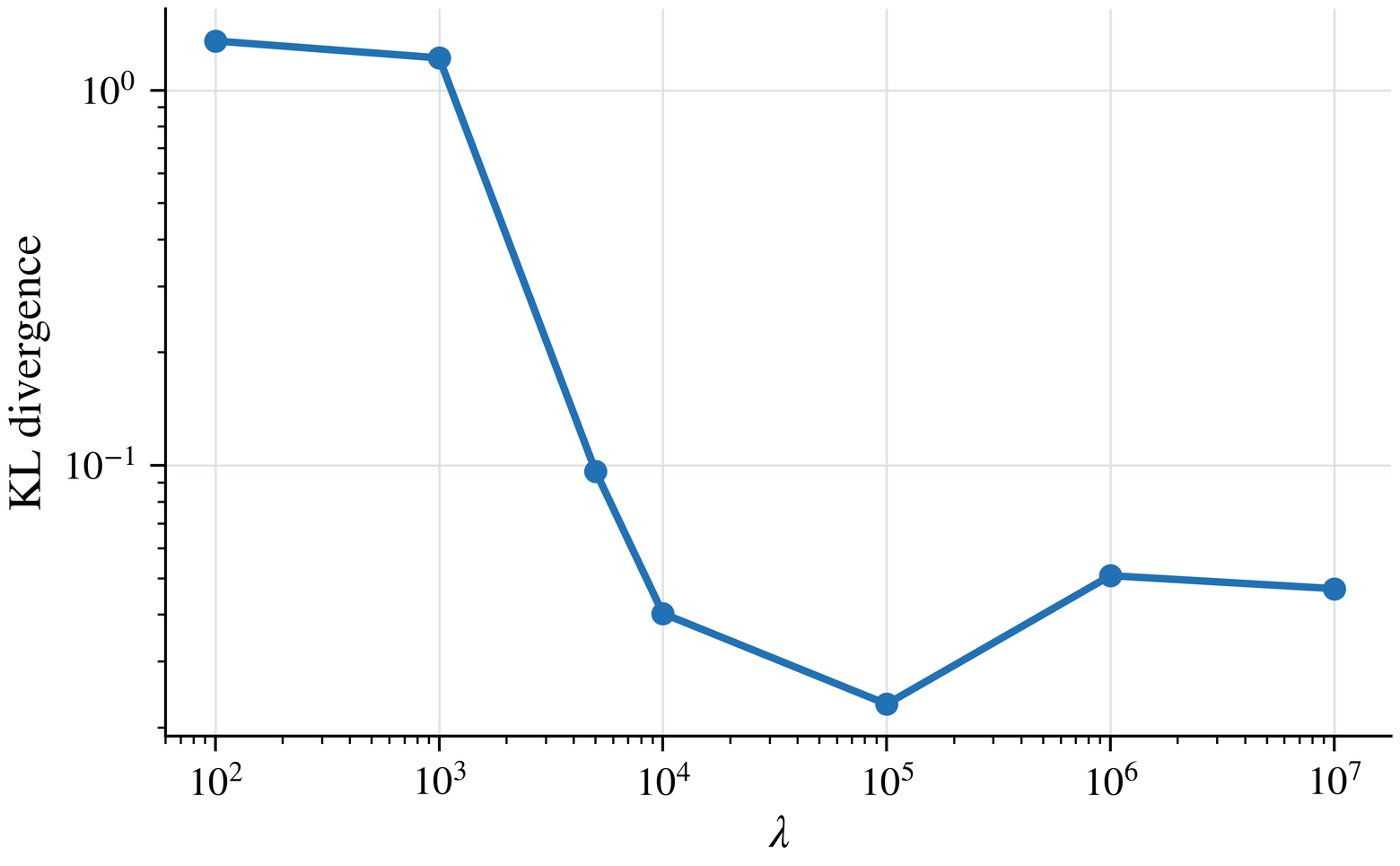}
    \caption{Terminal $D_{\mathrm{KL}}(\mu_T,\mathrm{Law}(X_T))$ versus $\lambda$.}
    \label{fig:KL_vs_lambda}
  \end{subfigure}
\end{minipage}
\hfill
\begin{subfigure}[c]{0.48\textwidth}
  \centering
  \includegraphics[width=0.9\linewidth]{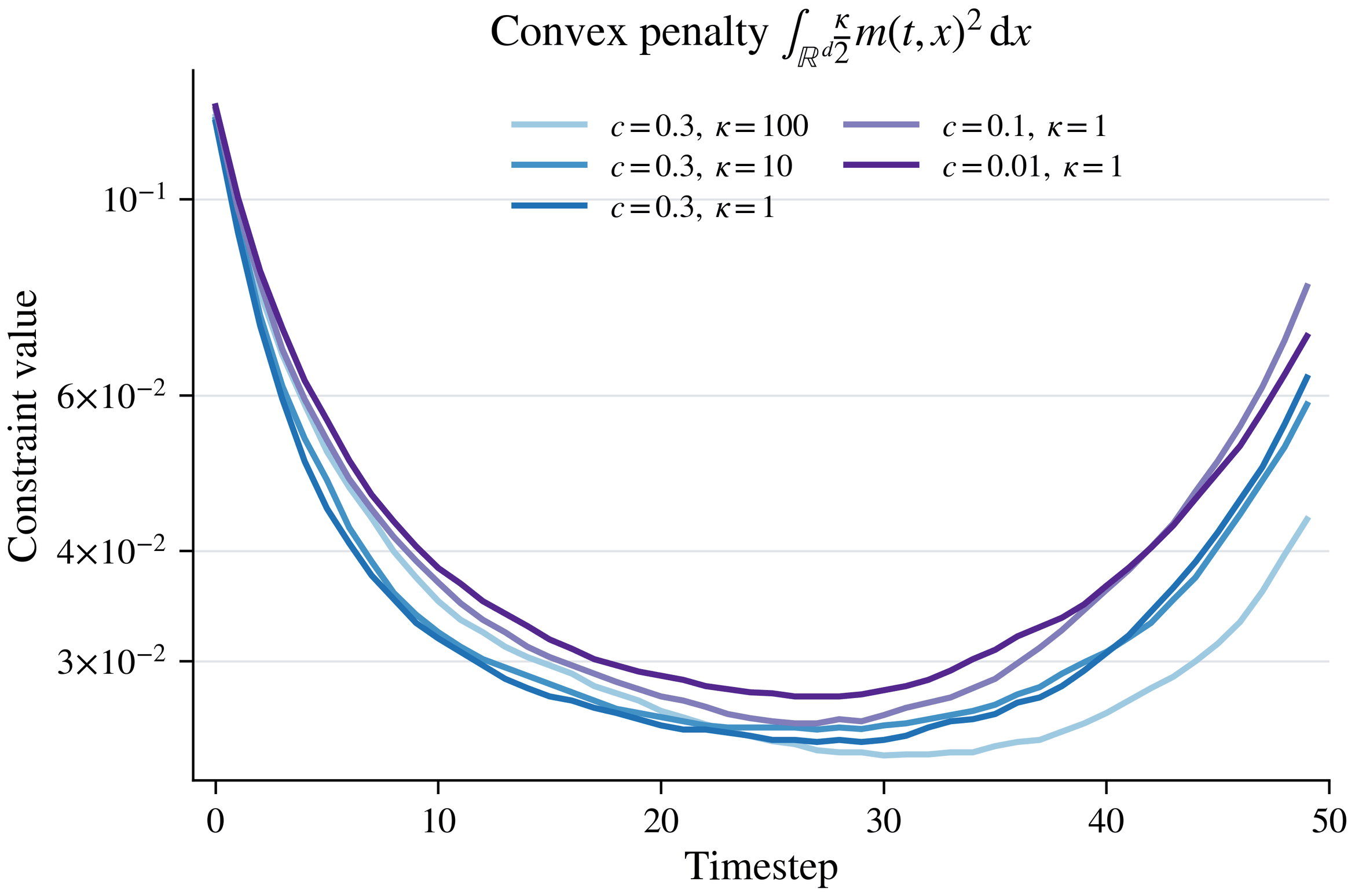}
  \caption{Evolution of the convex penalty during generation for $c\in \{0.01 , 0.1, 0.3\}$ and $\kappa \in \{1, 10, 100\}$.}
  \label{fig:evolution_convex}
\end{subfigure}
\end{figure}
\paragraph{Gross-Pitaevskii penalty ($F(m)=\kappa m$, $\lambda=\infty$, $d=2$).}
We also investigate the convex penalty functional \(f(m(t,\cdot)) := \tfrac{\kappa}{2} \int_{\R^d} m(t,x)^2 \, \dd x,\) so that $F(m)=\kappa m$.  
Larger $\kappa$ and $c$ increase branching, making the dynamics more sensitive to the nonlinearity (Figure~\ref{fig:evolution_convex}).

\paragraph{Potential running cost ($F(t,x,m)=V(x)$, $\lambda=\infty$, $d=2$).}
We finally illustrate a \emph{state-dependent} running cost of potential type,
$f(m(t,\cdot)) := \int_{\R^d} V(x)\, m(t,x)\,\dd x$, so that $F(t,x,m)=V(x)$.
This lets the environment \emph{shape} the transport path: the optimal bridge routes mass around the obstacle while still matching both marginals.
Figure~\ref{fig:potential_samples} (resp.\ Figure~\ref{fig:potential_density}) shows the evolving samples (resp.\ density estimation) of $\mathrm{Law}(X_t)$ deflecting around the high-potential region, and Figure~\ref{fig:potential_3d} display a 3D evolution. Further potential landscapes are reported in Appendix~\ref{sec:potential-running-cost}.
\begin{figure}[H]
\centering
\captionsetup{font=footnotesize}
\captionsetup[subfigure]{font=footnotesize}
\begin{minipage}[c]{0.66\textwidth}
  \begin{subfigure}[c]{\linewidth}
    \centering
    \includegraphics[width=\linewidth]{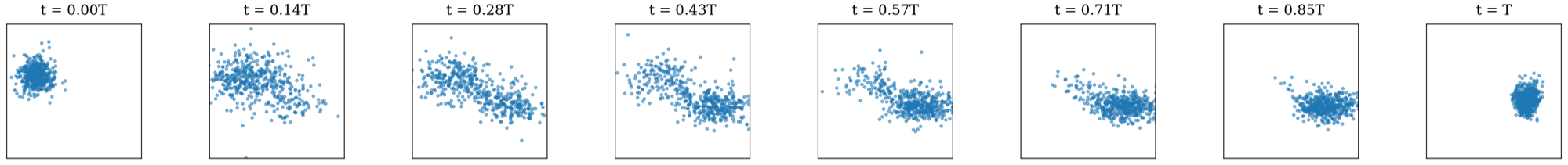}
    \caption{Samples $\mathrm{Law}(X_t)$ deflecting around high-$V$.}
    \label{fig:potential_samples}
  \end{subfigure}\\[1.0em]
  \begin{subfigure}[c]{\linewidth}
    \centering
    \includegraphics[width=\linewidth]{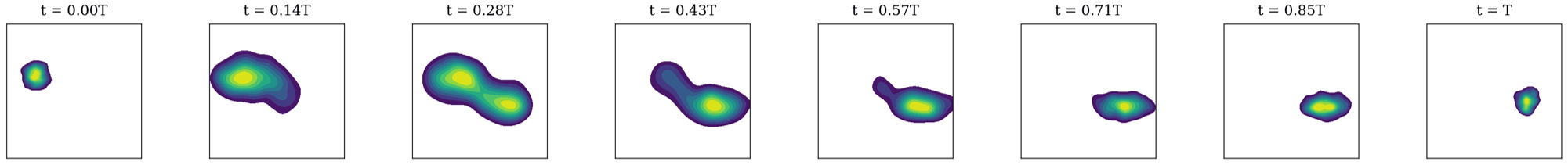}
    \caption{Estimated density $m(t,\cdot)$ snapshots, $t\!\in\![0,T]$.}
    \label{fig:potential_density}
  \end{subfigure}
\end{minipage}
\hfill
\begin{subfigure}[c]{0.33\textwidth}
  \centering
  \captionsetup{position=top, skip=0pt}
  \caption{$3$D evolution.}
  \label{fig:potential_3d}
  \includegraphics[width=\linewidth]{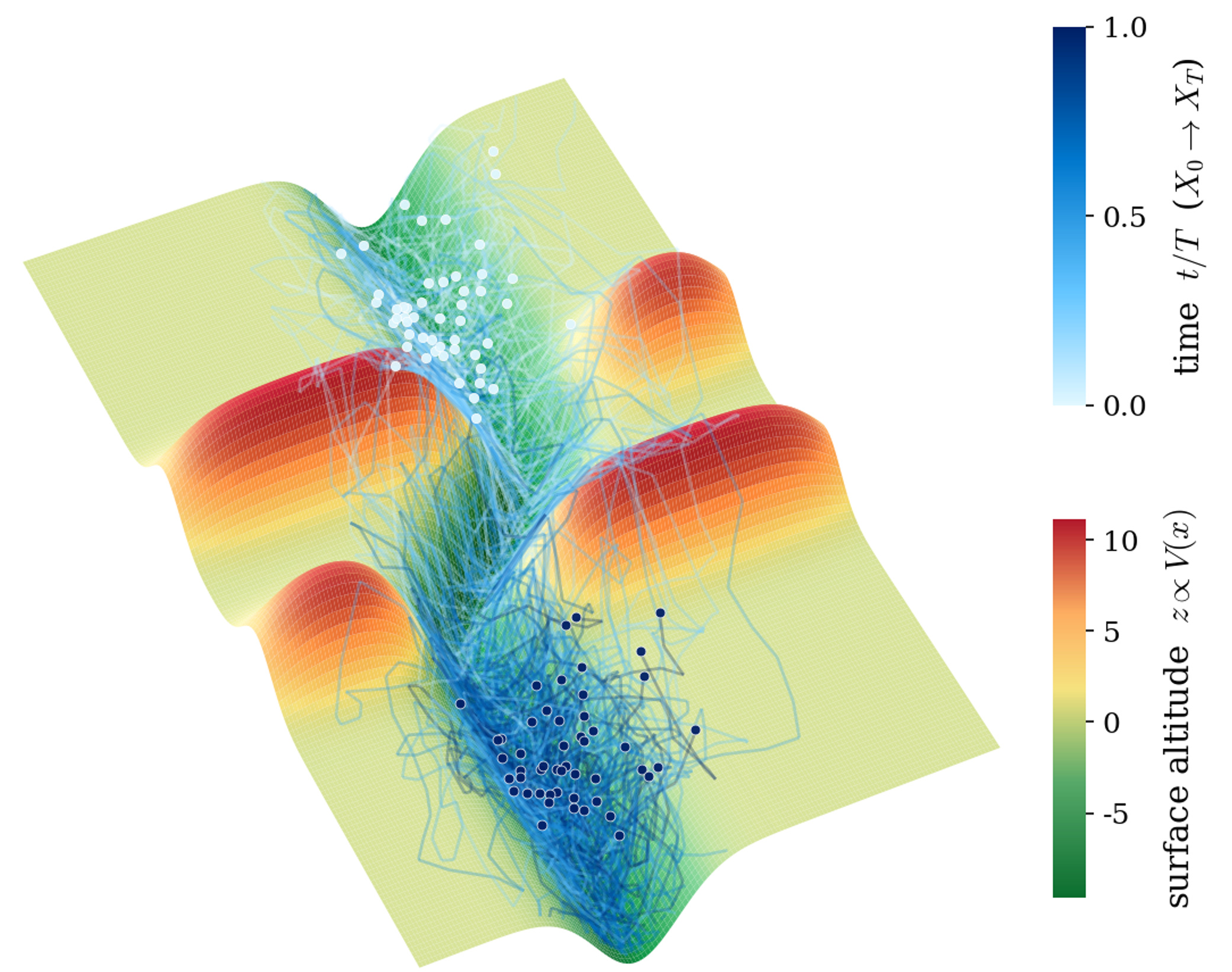}
\end{subfigure}
\end{figure}

\paragraph{MNIST in latent space ($F(m)=0$, $\lambda=\infty$, $d=32$).}
On MNIST we run the bridge in the $d{=}32$ latent space of a pretrained VAE on $(1,24,24)$ grayscale images, with $\mu_0,\mu_T$ the codes of digits $1$ and $3$ and $6{\times}10^3$ samples per class for training.
Figure~\ref{fig:image_diffusion_1_to_3} shows $\mathrm{Law}(X_t)$ and decoded trajectories; intermediate reconstructions stay visually smooth and digit-like.

\begin{figure}[H]
\centering
\includegraphics[width=0.8\textwidth]{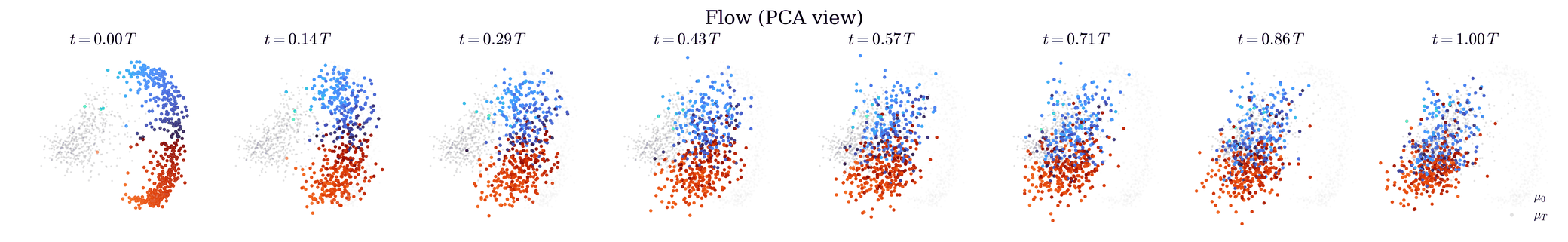}
\includegraphics[width=0.75\textwidth]{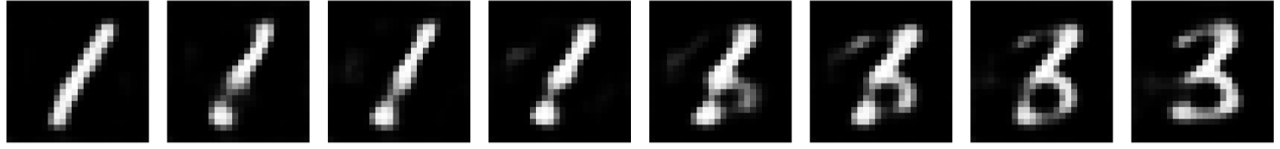}
\captionsetup{font=footnotesize}
\caption{Temporal evolution of the latent variables and their corresponding decoded representations.}
\label{fig:image_diffusion_1_to_3}
\end{figure}

\paragraph{FFHQ young-to-old ($F(m)=0$, $\lambda=\infty$, $d=512$).}
We evaluate FFHQ young-to-old transfer in the $d{=}512$ ALAE latent space
\citep{pidhorskyi2020adversariallatentautoencoders,
pham_alouadi_hl_loeper_touzi_bridging_preprint}, using $3{\times}10^4$
samples per age group. Figure~\ref{fig:ffhq_traj_and_comparison} shows
decoded trajectories and, for each baseline, the source child and its decoded
terminal sample. SBNL preserves identity and attributes, while several
baselines collapse to a near-canonical face; the extended panel is in
Appendix~\ref{app:comparison-ffhq}.

\begin{figure}[H]
\centering
\captionsetup{font=footnotesize}
\captionsetup[subfigure]{font=footnotesize}

\begin{minipage}[t]{0.55\textwidth}
  \vspace{0pt} 
  \centering

  \begin{subfigure}[t]{\linewidth}
    \centering
    \includegraphics[width=\linewidth]
    {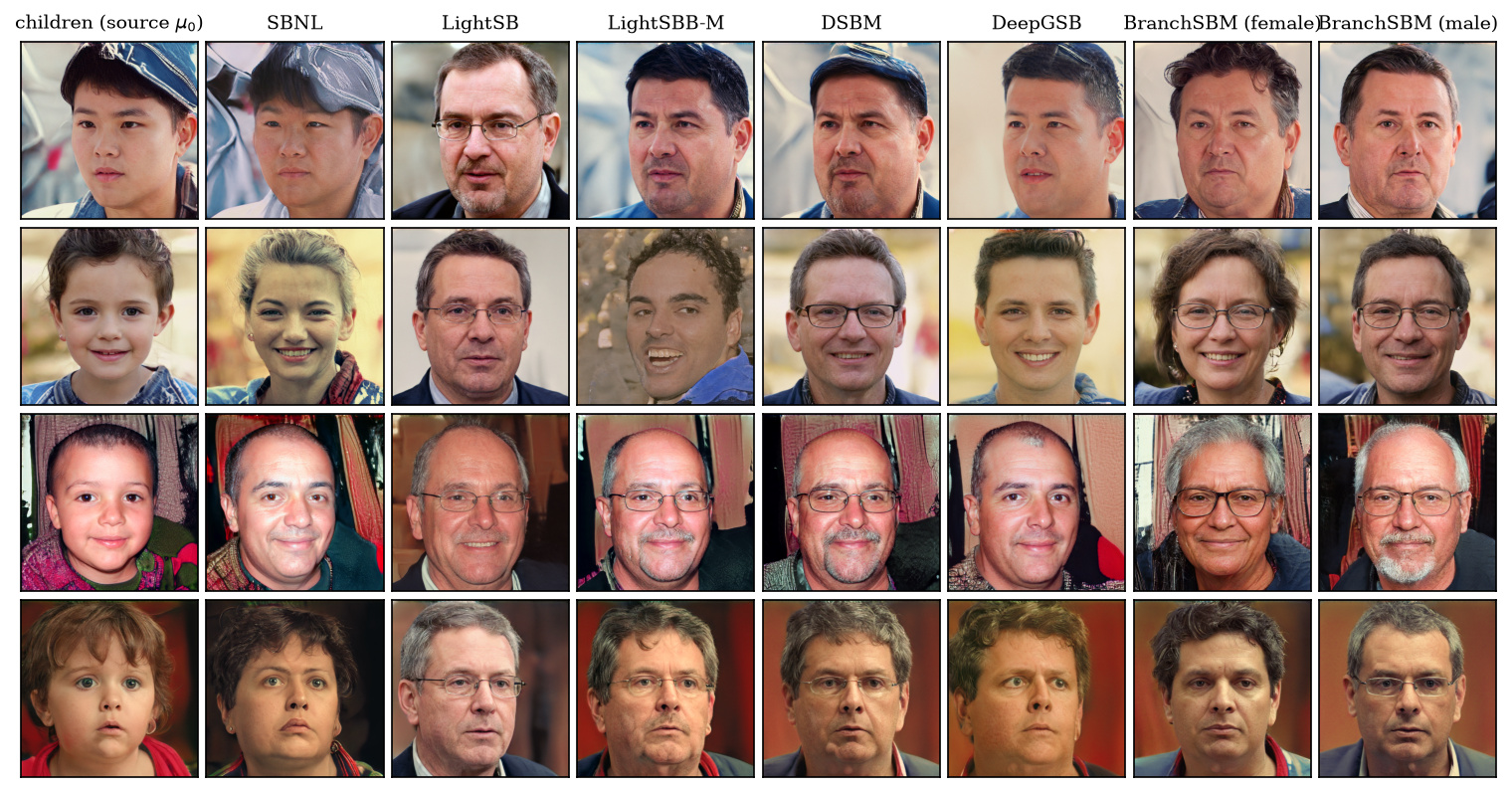}
    \caption{Comparison of source (left) and terminal samples.}
    \label{fig:comparison_ffhq_main}
  \end{subfigure}
\end{minipage}
\hfill
\begin{minipage}[t]{0.44\textwidth}
  \vspace{0pt} 
  \centering

  \begin{subfigure}[t]{\linewidth}
    \centering
    \includegraphics[width=\linewidth]
    {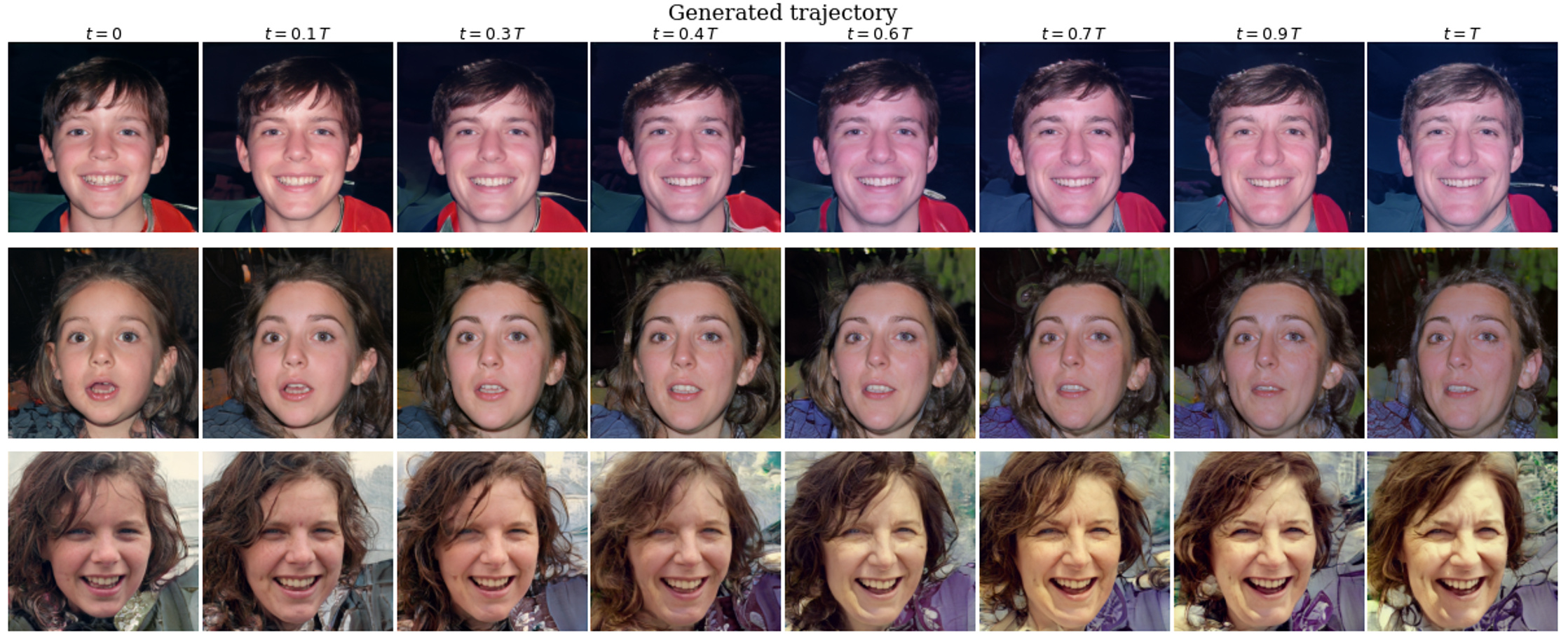}
    \caption{Decoded trajectories during sampling.}
    \label{fig:young_old_traj_small}
  \end{subfigure}

  \vspace{4pt}

{\scriptsize
\centering
\textbf{FID of the different methods}

\vspace{2pt}

\resizebox{\linewidth}{!}{%
\begin{tabular}{@{}ccccccc@{}}
  \toprule
  \textbf{SBNL}
  & \textbf{LightSB}
  & \textbf{LightSBB-M}
  & \textbf{DSBM}
  & \textbf{DeepGSB}
  & \textbf{BranchSBM-F}
  & \textbf{BranchSBM-M}
  \\
  \midrule
  \textbf{36.03}
  & 59.90
  & 40.76
  & 38.92
  & 66.39
  & 61.96
  & 39.54
  \\
  \bottomrule
\end{tabular}%
}
}
\end{minipage}

\caption{FFHQ young-to-old in ALAE latent space. Lower FID is better.}
\label{fig:ffhq_traj_and_comparison}
\end{figure}

\section{Discussion}

We formulate semilinear mean-field Schr\"odinger bridges with law-dependent $f(m(t,\cdot))$, $g(m(T,\cdot))$ and characterize optimality through coupled $(\varphi,\bar\varphi)$ PDEs, equivalently a Lasry - Lions $(m,v)$ system -- beyond the linear entropic setting of Sinkhorn and iterative Markovian fitting. We recast the terminal conditions as a \textit{nonlinear} Sinkhorn system, which we solve by Maximum Mean Discrepancy, using branching diffusions to obtain unbiased stochastic representations of $\varphi$. Our experiments cover bridge recovery, Gross-Pitaevskii-type couplings, potential interactions, and latent transport.

The numerics are encouraging, but we do not claim systematic superiority over mature diffusion Schr\"odinger-bridge pipelines in the linear setting. Training in the nonlinear regime still requires careful tuning of the hyperparameters (branching rate $c$, kernel bandwidths, penalty $\lambda$) to remain stable, and both the parallelization of the branching simulation and the variance of the underlying Monte Carlo estimators leave room for improvement. We see this work as a new variational and optimality system whose first mesh-free realization invites variance reduction and hybridization with standard training pipelines.

\bibliographystyle{unsrtnat}
\bibliography{references}

\newpage
\appendix

\section{Proof of Theorem \ref{theorem:optimal_solution}}
\label{demo-main-theorem}

\paragraph{Setting.}
Let $\mathcal A$ be the set of progressively measurable processes $\alpha$ with $\E\int_0^T|\alpha_t|^2\,\dd t<\infty$. For $\alpha\in\mathcal A$, let $X^\alpha_t:=X_0+\int_0^t\alpha_s\,\dd s+W_t$, where $X_0\sim\mu_0$ is independent of $W$, let $m^\alpha_t:=\mathrm{Law}(X^\alpha_t)$ (it has a density, since the relative entropy of $\mathrm{Law}(X^\alpha)$ with respect to the law of $X_0+W$ is at most $\frac12\E\int_0^T|\alpha_t|^2\,\dd t$), and
\[
J(\alpha):=\E\Big[\int_0^T\frac{1}{2c}|\alpha_t|^2\,\dd t\Big]+\int_0^T f(m^\alpha_t)\,\dd t+g(m^\alpha_T),
\qquad
P=\inf_{\alpha\in\mathcal A}J(\alpha).
\]
The functionals $f,g$ may take the value $+\infty$ (e.g.\ $f(m)=\frac\kappa2\int m^2$ when $m$ has no density in $L^2$); $f$ may also depend on $t$, in which case $F(t,\cdot,m)$ is the derivative of $f(t,\cdot)$. Throughout, $(v^*,m^*)$ denotes the solution of the forward-backward system given by (A4),
\begin{align}
&\partial_t v^*+\tfrac12\Delta v^*-\tfrac c2|\nabla_x v^*|^2+F\big(t,x,m^*_t\big)=0,
&& v^*(T,x)=G\big(x,m^*_T\big),
\label{eq:HJB}\\
&\partial_t m^*=c\,\nabla_x\!\cdot\!\big(m^*\,\nabla_x v^*\big)+\tfrac12\Delta m^*,
&& m^*_0=\mu_0,
\label{FP}
\end{align}
where $m^*_t:=m^*(t,\cdot)$, $F(t,x,m^*_t)$ depends on the whole density $m^*_t$, and \eqref{FP} is the Fokker-Planck equation of the feedback drift $-c\nabla_x v^*$.

\paragraph{Step 1: convexity.}
\begin{lemma}\label{lem:convexity}
Assume (A2). Let $m,m'$ be probability densities with $f(m')<\infty$ and $F(t,\cdot,m')\in L^1(m)\cap L^1(m')$. Then
\[
f(m)\;\ge\; f(m')+\int_{\R^d}F(t,x,m')\,\big(m(x)-m'(x)\big)\,\dd x ,
\]
and the same inequality holds for $g$ and $G$.
\end{lemma}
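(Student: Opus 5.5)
The plan is to use the standard tangent-line argument for convex functionals along the segment joining $m'$ to $m$. We do not know exactly what (A2) says, but it presumably asserts that $f$ (resp.\ $g$) is convex on probability densities and that $F=\delta f/\delta m$ is its flat (linear) functional derivative. By the latter we mean that for densities $m,m'$ with $f(m')<\infty$, setting $m_\epsilon:=m'+\epsilon(m-m')$ for $\epsilon\in[0,1]$,
\[
\lim_{\epsilon\downarrow 0}\frac{f(m_\epsilon)-f(m')}{\epsilon}=\int_{\R^d}F(t,x,m')\,(m(x)-m'(x))\,\dd x ,
\]
whenever the right-hand side is well defined. The integrability hypothesis $F(t,\cdot,m')\in L^1(m)\cap L^1(m')$ guarantees exactly that.

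First, we dispose of the trivial case. If $f(m)=+\infty$, the inequality holds at once, since its right-hand side is finite by the integrability assumption and $f(m')<\infty$. We may therefore assume $f(m)<\infty$. Note that $m_\epsilon$ is again a probability density, being a convex combination.

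Second, we apply convexity of $f$:
\[
f(m_\epsilon)\le (1-\epsilon)f(m')+\epsilon f(m),
\]
which rearranges to
\[
\frac{f(m_\epsilon)-f(m')}{\epsilon}\le f(m)-f(m').
\]
Letting $\epsilon\downarrow0$ and using the directional derivative identity from (A2) gives
\[
\int F(t,x,m')(m-m')\,\dd x\le f(m)-f(m'),
\]
which is the claim. For $g$ and $G$ the argument is identical, with $g$ in place of $f$, $G(\cdot,m')$ in place of $F(t,\cdot,m')$, and no time variable.

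The main obstacle is the justification of the limit. If (A2) states only that $F$ is the derivative in a weaker sense, for example along smooth compactly supported perturbations, or with $F$ defined up to an additive constant, then we need an approximation argument. In that case we would first note that the difference quotient $\epsilon\mapsto (f(m_\epsilon)-f(m'))/\epsilon$ is nondecreasing in $\epsilon$ by convexity, so its limit as $\epsilon\downarrow0$ exists in $[-\infty,\infty)$ and is bounded above by $f(m)-f(m')$. It then suffices to identify this limit with $\int F(m-m')$. If instead (A2) provides the differentiability identity $\frac{d}{d\epsilon}f(m_\epsilon)=\int F(t,x,m_\epsilon)(m-m')\,\dd x$ along the whole segment, we would write
\[
f(m)-f(m')=\int_0^1\!\int F(t,x,m_\epsilon)(m-m')\,\dd x\,\dd\epsilon .
\]
We would then use the monotonicity of $\epsilon\mapsto \int F(m_\epsilon)(m-m')$, which follows from convexity, to bound the integrand below by its value at $\epsilon=0$. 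The $L^1(m)\cap L^1(m')$ assumption is what makes that endpoint value finite and legitimizes the comparison. Additive constants in $F$ are harmless in every version, because $m-m'$ has zero mass.
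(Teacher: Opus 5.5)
You guessed the hypotheses wrongly, and this leaves a genuine gap. In the paper, (A2) does not say that $f$ is convex. It is the Lasry--Lions monotonicity of the derivative: $\int_{\R^d}\big(F(t,x,m_1)-F(t,x,m_2)\big)\big(m_1-m_2\big)(x)\,\dd x\ge0$ for all probability densities $m_1,m_2$, and likewise for $G$. Separately, (A1) defines $F$ as a linear functional derivative through the integral identity $f(m)-f(m')=\int_0^1\!\int F(t,x,m_\varepsilon)(m-m')(x)\,\dd x\,\dd\varepsilon$.

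The inequality to be proved is exactly the supporting-hyperplane form of convexity of $f$, with slope $F(t,\cdot,m')$. Your main route starts from $f(m_\epsilon)\le(1-\epsilon)f(m')+\epsilon f(m)$, so it takes the content of the lemma as an input. It also uses a pointwise identity $\lim_{\epsilon\downarrow0}(f(m_\epsilon)-f(m'))/\epsilon=\int F(t,x,m')(m-m')\,\dd x$ that (A1) does not supply. Set $h(\varepsilon):=\int F(t,x,m_\varepsilon)(m-m')(x)\,\dd x$. Then (A1) expresses $f(m)-f(m')$ as the average $\int_0^1 h(\varepsilon)\,\dd\varepsilon$, not as a derivative at $\varepsilon=0$, and nothing assumed makes $h$ continuous at $0$.

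Your fallback in the last paragraph has the right architecture; it is in fact the paper's proof. The problem is that you again derive the monotonicity of $\varepsilon\mapsto h(\varepsilon)$ from convexity of $f$. The missing observation is that this monotonicity follows directly from (A2). Since $m_\varepsilon$ is a probability density and $m_\varepsilon-m'=\varepsilon(m-m')$, applying (A2) to the pair $(m_\varepsilon,m')$ gives $\varepsilon\big(h(\varepsilon)-h(0)\big)=\int\big(F(t,x,m_\varepsilon)-F(t,x,m')\big)\big(m_\varepsilon-m'\big)(x)\,\dd x\ge0$. Hence $h(\varepsilon)\ge h(0)$ for every $\varepsilon\in(0,1]$. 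Integrating over $\varepsilon\in[0,1]$ in the (A1) identity then yields $f(m)-f(m')\ge h(0)$. Here $h(0)$ is finite precisely because $F(t,\cdot,m')\in L^1(m)\cap L^1(m')$, and the case $f(m)=+\infty$ is handled as you did. This needs neither convexity of $f$ nor any limit $\varepsilon\downarrow0$, and the same two lines with $G$ in place of $F$ give the statement for $g$.
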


\begin{proof}
If $f(m)=+\infty$ there is nothing to prove. Otherwise set $m_\varepsilon:=m'+\varepsilon(m-m')$, $\varepsilon\in[0,1]$. By definition of the linear functional derivative, $f(m)-f(m')=\int_0^1\!\int F(t,x,m_\varepsilon)(m-m')(x)\,\dd x\,\dd\varepsilon$, hence
\begin{align*}
&f(m)-f(m')-\int F(t,x,m')(m-m')\,\dd x\\
&\qquad=\int_0^1\!\!\int\big(F(t,x,m_\varepsilon)-F(t,x,m')\big)(m-m')(x)\,\dd x\,\dd\varepsilon\\
&\qquad=\int_0^1\frac1\varepsilon\!\int\big(F(t,x,m_\varepsilon)-F(t,x,m')\big)\big(m_\varepsilon-m'\big)(x)\,\dd x\,\dd\varepsilon\;\ge\;0,
\end{align*}
because $m_\varepsilon-m'=\varepsilon(m-m')$ and each inner integral is nonnegative by (A2). The proof for $g$ is identical.
\end{proof}

\paragraph{Step 2: verification.}
Let $X^*$ be the solution of $\dd X^*_t=-c\nabla_x v^*(t,X^*_t)\,\dd t+\dd W_t$, $X^*_0=X_0$, which exists and is pathwise unique because $\nabla_x v^*$ is Lipschitz, and set $\alpha^*_t:=-c\nabla_x v^*(t,X^*_t)$. By (A3)-(A4), $\alpha^*\in\mathcal A$ and $m^{\alpha^*}_t=m^*_t$.

Fix $\alpha\in\mathcal A$ with $J(\alpha)<\infty$. Itô's formula applied to $v^*(t,X^\alpha_t)$, combined with \eqref{eq:HJB}, gives
\begin{align*}
v^*(T,X^\alpha_T)=v^*(0,X_0)&+\int_0^T\Big[\tfrac c2|\nabla_x v^*|^2+\alpha_t\cdot\nabla_x v^*-F(t,\cdot,m^*_t)\Big](t,X^\alpha_t)\,\dd t\\
&+\int_0^T\nabla_x v^*(t,X^\alpha_t)\cdot\dd W_t .
\end{align*}
Since $\E\sup_{t\le T}|X^\alpha_t|^2<\infty$ (by (A3) and $\alpha\in\mathcal A$), the growth conditions of (A4) make every term integrable and the stochastic integral a martingale, so that
\begin{equation}\label{eq:ito-v}
\begin{split}
\E\big[v^*(T,X^\alpha_T)\big]=\E\big[v^*(0,X_0)\big]&+\E\int_0^T\Big[\tfrac c2|\nabla_x v^*|^2+\alpha_t\cdot\nabla_x v^*\Big](t,X^\alpha_t)\,\dd t\\
&-\E\int_0^T F(t,X^\alpha_t,m^*_t)\,\dd t .
\end{split}
\end{equation}
Lemma~\ref{lem:convexity} with $m=m^\alpha_t$, $m'=m^*_t$ (for $f$) and $m=m^\alpha_T$, $m'=m^*_T$ (for $g$), together with the terminal condition $G(\cdot,m^*_T)=v^*(T,\cdot)$, yields
\begin{align*}
\int_0^T\!f(m^\alpha_t)\,\dd t+g(m^\alpha_T)
\;\ge\;&
\int_0^T\!f(m^*_t)\,\dd t+g(m^*_T)
+\E\!\int_0^T\!\Big[F(t,X^\alpha_t,m^*_t)-F(t,X^*_t,m^*_t)\Big]\dd t\\
&+\E\big[v^*(T,X^\alpha_T)-v^*(T,X^*_T)\big].
\end{align*}
We insert \eqref{eq:ito-v} for $\alpha$ and for $\alpha^*$ (both processes start from $X_0$): the terms in $F$ cancel, and we obtain
\begin{align}
J(\alpha)-J(\alpha^*)
&\ge
\E\!\int_0^T\!\Big[\tfrac1{2c}|\alpha_t|^2+\alpha_t\cdot\nabla_x v^*+\tfrac c2|\nabla_x v^*|^2\Big](t,X^\alpha_t)\,\dd t
\nonumber\\
&\quad-\E\!\int_0^T\!\Big[\tfrac1{2c}|\alpha^*_t|^2+\alpha^*_t\cdot\nabla_x v^*+\tfrac c2|\nabla_x v^*|^2\Big](t,X^*_t)\,\dd t
\nonumber\\
&=\frac1{2c}\,\E\int_0^T\big|\alpha_t+c\nabla_x v^*(t,X^\alpha_t)\big|^2\,\dd t\;\ge\;0,
\label{eq:verification}
\end{align}
where we used $\tfrac1{2c}|a|^2+a\cdot p+\tfrac c2|p|^2=\tfrac1{2c}|a+cp|^2$ and $\alpha^*_t+c\nabla_x v^*(t,X^*_t)=0$. Hence $J(\alpha^*)\le J(\alpha)$ for every $\alpha\in\mathcal A$: the control $\alpha^*$ is optimal and $P=J(\alpha^*)$.

\paragraph{Step 3: uniqueness of the optimal control.}
If $J(\alpha)=J(\alpha^*)$, then \eqref{eq:verification} forces $\alpha_t=-c\nabla_x v^*(t,X^\alpha_t)$, $\dd t\otimes\dd\PP$-a.e. Thus $X^\alpha$ solves the equation of $X^*$ with the same initial condition and the same Brownian motion, and pathwise uniqueness gives $X^\alpha=X^*$, hence $\alpha=\alpha^*$.

\paragraph{Step 4: Cole-Hopf transform.}
Set $\varphi:=e^{-cv^*}>0$. Then $\partial_t v^*=-\frac{\partial_t\varphi}{c\varphi}$, $\nabla_x v^*=-\frac{\nabla_x\varphi}{c\varphi}$ and $\Delta v^*=-\frac{\Delta\varphi}{c\varphi}+\frac{|\nabla_x\varphi|^2}{c\varphi^2}$, so that the terms in $|\nabla_x\varphi|^2$ cancel:
\[
\partial_t v^*+\tfrac12\Delta v^*-\tfrac c2|\nabla_x v^*|^2=-\frac{1}{c\varphi}\Big(\partial_t\varphi+\tfrac12\Delta\varphi\Big).
\]
Hence \eqref{eq:HJB} is equivalent to $\partial_t\varphi+\frac12\Delta\varphi=c\,\varphi\,F(t,x,m^*_t)$, and the optimal feedback is $-c\nabla_x v^*=\nabla_x\log\varphi$, which is \eqref{eq:sde-process}.

\paragraph{Step 5: factorization of the optimal law.}
Set $\psi:=m^*/\varphi$. Expanding $m^*=\varphi\psi$ and using the equation of Step~4,
\begin{align*}
\partial_t m^*+\nabla_x\!\cdot\!\big(m^*\nabla_x\log\varphi\big)-\tfrac12\Delta m^*
&=\psi\Big(\partial_t\varphi+\tfrac12\Delta\varphi\Big)+\varphi\Big(\partial_t\psi-\tfrac12\Delta\psi\Big)\\
&=\varphi\Big(\partial_t\psi-\tfrac12\Delta\psi+c\,\psi\,F(t,x,m^*_t)\Big),
\end{align*}
where the cross terms $\pm\nabla_x\varphi\cdot\nabla_x\psi$ cancel. As $c\nabla_x\!\cdot(m^*\nabla_x v^*)=-\nabla_x\!\cdot(m^*\nabla_x\log\varphi)$ and $\varphi>0$, \eqref{FP} is equivalent to
\[
\partial_t\psi-\tfrac12\Delta\psi=-c\,\psi\,F(t,x,m^*_t),\qquad m^*_t=\varphi(t,\cdot)\,\psi(t,\cdot).
\]
The time reversal $\bar\varphi(t,x):=\psi(T-t,x)$ turns this forward equation into the backward equation $\partial_t\bar\varphi+\frac12\Delta\bar\varphi=c\,\bar\varphi\,F(T-t,x,m^*_{T-t})$. Since $m^*_t=\varphi(t,\cdot)\bar\varphi(T-t,\cdot)$ and $m^*_{T-t}=\bar\varphi(t,\cdot)\varphi(T-t,\cdot)$, the pair $(\varphi,\bar\varphi)$ solves \eqref{PDE1}-\eqref{PDE2}, and $\mathrm{Law}(X^*_t)=m^*_t=\varphi(t,\cdot)\bar\varphi(T-t,\cdot)$.

\paragraph{Step 6: boundary conditions.}
The initial condition $m^*_0=\mu_0$ reads $\varphi(0,\cdot)\bar\varphi(T,\cdot)=\mu_0$. The terminal condition $v^*(T,\cdot)=G(\cdot,m^*_T)$, with $v^*(T,\cdot)=-\frac1c\log\varphi(T,\cdot)$ and $m^*_T=\bar\varphi(0,\cdot)\varphi(T,\cdot)$, reads $G\big(\cdot,\bar\varphi(0,\cdot)\varphi(T,\cdot)\big)=-\frac1c\log\varphi(T,\cdot)$, i.e.
\[
\bar\varphi(0,x)\,\varphi(T,x)=G^{-1}\big(-\tfrac1c\log\varphi(T,\cdot),x\big),
\]
where $G^{-1}(w,\cdot)$ is the density $m$ such that $G(\cdot,m)=w$. It is unique when the second inequality of (A2) is strict for $m_1\neq m_2$: $G(\cdot,m_1)=G(\cdot,m_2)$ forces $\int(G(x,m_1)-G(x,m_2))(m_1-m_2)(x)\,\dd x=0$. This is the nonlinear Sinkhorn system \eqref{eq:nl-sinkhorn}, which completes the proof.
\qed

\paragraph{Hard terminal constraint.}
For the classical constraint $m_T=\mu_T$ ($g=0$ at $\mu_T$ and $+\infty$ elsewhere), $\mathcal A$ is restricted to the controls with $m^\alpha_T=\mu_T$, and the terminal condition of \eqref{eq:HJB} is replaced by $m^*_T=\mu_T$, $v^*(T,\cdot)$ becoming an unknown. The proof is unchanged, except that in Step~2 the terminal terms vanish: $g(m^\alpha_T)=g(m^*_T)=0$ and $\E[v^*(T,X^\alpha_T)]=\E[v^*(T,X^*_T)]$ because $X^\alpha_T$ and $X^*_T$ both have law $\mu_T$. Step~6 then gives $\bar\varphi(0,\cdot)\varphi(T,\cdot)=\mu_T$, i.e.\ \eqref{eq:soft-terminal-lambda-infty}.

\paragraph{Origin of the system: Lagrangian duality.}
The system \eqref{eq:HJB}-\eqref{FP} is the optimality system of the following dual problem. For $v\in C^{1,2}$ with the growth of (A4), let
\[
\mathcal L v:=\partial_t v-\frac c2|\nabla_x v|^2+\frac12\Delta v,
\]
\[
f^*(\psi):=\inf_m\Big\{f(m)+\int\psi\,m\Big\},\qquad
g^*(\phi):=\sup_m\Big\{\int\phi\,m-g(m)\Big\}.
\]
For $\alpha\in\mathcal A$, Itô's formula gives $\E[v(T,X^\alpha_T)]-\E[v(0,X_0)]=\E\int_0^T(\partial_t v+\alpha_t\cdot\nabla_x v+\frac12\Delta v)(t,X^\alpha_t)\,\dd t$. Adding this zero quantity to $J(\alpha)$ and minimizing pointwise in $\alpha_t$, which gives $\alpha_t=-c\nabla_x v$ and $\inf_a\{\frac1{2c}|a|^2+a\cdot\nabla_x v\}=-\frac c2|\nabla_x v|^2$, we get the weak duality
\begin{equation}\label{eq:dual_problem}
P\;\ge\;\int_0^T f^*\big(\mathcal L v(t,\cdot)\big)\,\dd t-g^*\big(v(T,\cdot)\big)+\E_{X_0\sim\mu_0}\big[v(0,X_0)\big].
\end{equation}
The first-order conditions of the infimum defining $f^*(\mathcal L v)$ and of the supremum defining $g^*(v(T,\cdot))$ are $F(t,\cdot,m_t)=-\mathcal L v$, which is the HJB equation \eqref{eq:HJB}, and $v(T,\cdot)=G(\cdot,m_T)$, which is its terminal condition. Conversely, for $v=v^*$, Lemma~\ref{lem:convexity} shows that the infimum and the supremum are attained at $m^*_t$ and $m^*_T$, and Itô's formula for $\alpha^*$ shows that the right-hand side of \eqref{eq:dual_problem} equals $J(\alpha^*)=P$: there is no duality gap.

\section{Stochastic-optimization forms of the \emph{nonlinear} Sinkhorn system}
\label{app:stochastic-opt-forms}

Using the branching representation of Proposition~\ref{prop:phi-branch-lamc}, the nonlinear Sinkhorn conditions read
\begin{equation}
  \left\{
  \begin{aligned}
    \E_{x, \text{white}}\!\left[\mathcal{W}\prod_{i=1}^{N_T^{\mathrm{white}}} \varphi(T,X_T^i)\prod_{j=1}^{\bar N_T^{\mathrm{black}}} \bar\varphi(T,\bar X_T^j)\right]\bar{\varphi}(T, x) &= \mu_0(x),\\
    \E_{x, \text{black}}\!\left[\mathcal{W}\prod_{i=1}^{N_T^{\mathrm{white}}} \varphi(T,X_T^i)\prod_{j=1}^{\bar N_T^{\mathrm{black}}} \bar\varphi(T,\bar X_T^j)\right]\varphi(T, x) &= G^{-1}(-\tfrac{1}{c} \log \varphi(T, \cdot), x).
  \end{aligned}
  \right.
\end{equation}

We now recast the two nonlinear Sinkhorn conditions in sample-based forms that can be used to train neural terminal potentials. We parametrize the terminal potentials $\varphi(T,\cdot),\bar\varphi(T,\cdot)$ by neural networks and, with a slight abuse of notation, write
\[
  \varphi_\theta(x):=\varphi(T,x), 
  \qquad
  \bar\varphi_\eta(x):=\bar\varphi(T,x),
\]
and let $\mu^*$ be a reference probability density on $\mathbb{R}^d$ that is easy to sample and evaluate.

Following the notation of Section~\ref{sec:meta_algorithm_train_ssb}, for $Z\sim\mu^*$ a white- (resp.\ black-) rooted branching tree started at $(0,Z)$ yields the unbiased one-tree estimators $\widehat A_{\theta,\eta}(Z,\circ)$ (resp.\ $\widehat A_{\theta,\eta}(Z,\bullet)$), with
\begin{equation}
  \left\{
  \begin{aligned}
    \E\!\big[\widehat A_{\theta,\eta}(Z,\circ)\big]
    &= A_{\theta,\eta}(Z,\circ)\approx\varphi(0,Z),\\
    \E\!\big[\widehat A_{\theta,\eta}(Z,\bullet)\big]
    &= A_{\theta,\eta}(Z,\bullet)\approx\bar\varphi(0,Z),
  \end{aligned}
  \right.
\end{equation}
where the approximation replaces the true terminal potentials by their neural-network parametrization $(\varphi_\theta,\bar\varphi_\eta)$.
The two boundary equations \eqref{eq:nl-sinkhorn} thus read
\begin{equation}
  \left\{
  \begin{aligned}
    A_{\theta,\eta}(x,\circ)\,\bar\varphi_\eta(x) &= \mu_0(x),\\
    A_{\theta,\eta}(x,\bullet)\,\varphi_\theta(x) &= G^{-1}\!\big(-\tfrac{1}{c}\log\varphi_\theta,x\big).
  \end{aligned}
  \right.
\end{equation}
In the KL terminal case of \eqref{eq:gstar-kl}, the second condition becomes $ A_{\theta,\eta}(x,\bullet)\,\varphi_\theta(x) = \mu_T(x)\,\varphi_\theta(x)^{-a}$ with $a:=1/(\lambda c)$.

\paragraph{Possibility 1: RKHS/MMD matching.}
Each boundary equation matches a (reweighted) measure to a marginal, so we enforce them by minimizing the MMD between the two sides. Using a characteristic kernel $k$ with RKHS $\mathcal H$ (Appendix~\ref{app:rkhs_mmd}), we set
\[
  \mathcal L_{\mathrm{MMD}}(\theta,\eta)
  :=
  \mathrm{MMD}_k^2\!\big(A_{\theta,\eta}(\cdot,\circ)\,\bar\varphi_\eta,\;\mu_0\big)
  +
  \mathrm{MMD}_k^2\!\big(A_{\theta,\eta}(\cdot,\bullet)\,\varphi_\theta,\;\varphi_\theta^{-a}\mu_T\big).
\]
Estimating $A_{\theta,\eta}(\cdot,\circ),A_{\theta,\eta}(\cdot,\bullet)$ by the branching trees and importance-sampling against $\mu^*$, the closed form of Definition~\ref{def:mmd} gives the sample-based objective \eqref{eq:J-mc}-\eqref{eq:J-mc-detailed}, minimized directly by stochastic gradient descent (Algorithm~\ref{alg:meta-train-ssb}); a mixture of Gaussian kernels is a stable default.

\section{RKHS and MMD}
\label{app:rkhs_mmd}

For completeness, we recall the definitions and elementary properties of reproducing kernel Hilbert spaces (RKHS) and the maximum mean discrepancy (MMD) used in Section~\ref{sec:meta_algorithm_train_ssb} and in the proof of Theorem~\ref{theorem:optimal_solution}. We refer to \citep{JMLR:v13:gretton12a} for a detailed treatment.

\paragraph{Reproducing kernel Hilbert space.}
Let $k:\R^d\times\R^d\to\R$ be a symmetric positive definite kernel, i.e.\ for any $n\in\mathbb N$, points $x_1,\dots,x_n\in\R^d$ and scalars $c_1,\dots,c_n\in\R$,
\[
  \sum_{i,j=1}^n c_i c_j\, k(x_i,x_j)\;\ge\;0.
\]
By the Moore-Aronszajn theorem, there exists a unique Hilbert space $\mathcal H_k$ of functions $f:\R^d\to\R$, called the \emph{reproducing kernel Hilbert space} of $k$, such that:
\begin{itemize}
  \item for every $x\in\R^d$, the function $k(x,\cdot)\in\mathcal H_k$;
  \item (\emph{reproducing property}) for every $f\in\mathcal H_k$ and every $x\in\R^d$,
  \[
    f(x)\;=\;\langle f,\,k(x,\cdot)\rangle_{\mathcal H_k}.
  \]
\end{itemize}
In particular, taking $f=k(y,\cdot)$ gives $\langle k(x,\cdot),k(y,\cdot)\rangle_{\mathcal H_k}=k(x,y)$, so that $k$ acts as an inner product between feature maps $x\mapsto k(x,\cdot)$.

\paragraph{Maximum mean discrepancy.}
Given two probability measures $P,Q$ on $\R^d$, the \emph{maximum mean discrepancy} associated with $k$ is defined as the largest difference in expectations over the unit ball of $\mathcal H_k$:
\begin{equation}
  \label{eq:mmd-def}
  \mathrm{MMD}_k(P,Q)
  \;:=\;
  \sup_{\|f\|_{\mathcal H_k}\le 1}
  \Big\{\;\E_{X\sim P}\!\big[f(X)\big]-\E_{Y\sim Q}\!\big[f(Y)\big]\;\Big\}.
\end{equation}
Intuitively, MMD searches for a witness function $f$ in the RKHS unit ball that best distinguishes samples from $P$ and $Q$.

\paragraph{Kernel mean embedding.}
Assume the integrability condition $\E_{X\sim P}[\sqrt{k(X,X)}]<\infty$. The \emph{kernel mean embedding} of $P$ is the element of $\mathcal H_k$ defined by
\begin{equation}
  \label{eq:kme-def}
  \mu_P(\cdot)\;:=\;\E_{X\sim P}\!\big[k(X,\cdot)\big]\;\in\;\mathcal H_k.
\end{equation}
By the reproducing property, it satisfies $\E_{X\sim P}[f(X)]=\langle f,\mu_P\rangle_{\mathcal H_k}$ for all $f\in\mathcal H_k$, i.e.\ integration against $P$ is represented by an inner product with $\mu_P$.

\paragraph{MMD as a distance between embeddings.}
Substituting $\E_P[f]=\langle f,\mu_P\rangle_{\mathcal H_k}$ and $\E_Q[f]=\langle f,\mu_Q\rangle_{\mathcal H_k}$ into \eqref{eq:mmd-def}, the supremum over the unit ball is attained at $f=(\mu_P-\mu_Q)/\|\mu_P-\mu_Q\|_{\mathcal H_k}$, and the MMD reduces to the RKHS distance between the mean embeddings:
\begin{equation}
  \label{eq:mmd-embedding}
  \mathrm{MMD}_k(P,Q)\;=\;\big\|\mu_P-\mu_Q\big\|_{\mathcal H_k}.
\end{equation}

\paragraph{Expansion via the reproducing property.}
Expanding the squared norm in \eqref{eq:mmd-embedding} and using $\langle k(x,\cdot),k(y,\cdot)\rangle_{\mathcal H_k}=k(x,y)$ gives the closed-form expression
\begin{equation}
  \label{eq:mmd-expansion}
  \mathrm{MMD}_k^2(P,Q)
  \;=\;
  \E_{X,X'\sim P}\!\big[k(X,X')\big]
  +\E_{Y,Y'\sim Q}\!\big[k(Y,Y')\big]
  -2\,\E_{\substack{X\sim P\\ Y\sim Q}}\!\big[k(X,Y)\big],
\end{equation}
where $X,X'$ (resp.\ $Y,Y'$) are independent copies drawn from $P$ (resp.\ $Q$). This form only involves expectations of kernel evaluations and is therefore directly estimable from samples, which is what makes the objective \eqref{eq:J-mc} tractable.

\paragraph{Characteristic kernels.}
The kernel $k$ is said to be \emph{characteristic} if the embedding map $P\mapsto\mu_P$ is injective on the space of probability measures. Equivalently,
\[
  \mathrm{MMD}_k(P,Q)=0
  \quad\Longleftrightarrow\quad
  P=Q.
\]
Classical examples of characteristic kernels on $\R^d$ include the Gaussian kernel $k(x,y)=\exp(-\|x-y\|^2/2\sigma^2)$ and, more generally, mixtures of Gaussian kernels across several bandwidths, which we use as a stable default in our experiments.

\paragraph{MMD between finite measures.}
The objective of Proposition~\ref{prop:mmd_objective} compares \emph{unnormalized} measures, for which we need the following version of the characteristic property.

\begin{lemma}\label{lem:mmd-finite-measures}
Let $k(x,y)=k_0(x-y)$ with $k_0(z)=\int_{\R^d}e^{\mathrm i\,\omega\cdot z}\,\psi(\omega)\,\dd\omega$ for an integrable function $\psi>0$; for instance the Gaussian kernel $k_0(z)=e^{-|z|^2/2\sigma^2}$, for which $\psi(\omega)=(\sigma^2/2\pi)^{d/2}e^{-\sigma^2|\omega|^2/2}$, or any finite positive mixture of Gaussian kernels. For finite nonnegative measures $P,Q$ on $\R^d$, let $\mathrm{MMD}_k^2(P,Q):=\iint k(x,y)\,(P-Q)(\dd x)\,(P-Q)(\dd y)$, which is \eqref{eq:mmd-expansion} with expectations replaced by integrals. Then
\[
\mathrm{MMD}_k^2(P,Q)=\int_{\R^d}\big|\widehat P(\omega)-\widehat Q(\omega)\big|^2\,\psi(\omega)\,\dd\omega,
\qquad
\widehat P(\omega):=\int_{\R^d}e^{\mathrm i\,\omega\cdot x}\,P(\dd x),
\]
so that $\mathrm{MMD}_k(P,Q)\ge0$, with equality if and only if $P=Q$.
\end{lemma}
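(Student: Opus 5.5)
The plan is to write the MMD as a double integral against the signed finite measure $\nu:=P-Q$ and insert the Fourier representation of $k_0$. Since $P,Q$ are finite, $\nu$ has finite total variation $|\nu|(\R^d)<\infty$. Also $|k_0(z)|\le\int\psi<\infty$, so $k$ is bounded and $\iint k\,\dd\nu\,\dd\nu$ is absolutely convergent. Substituting $k(x,y)=\int e^{\mathrm i\omega\cdot x}e^{-\mathrm i\omega\cdot y}\psi(\omega)\,\dd\omega$ gives a triple integral whose integrand is bounded in modulus by $\psi(\omega)$. This is integrable against $\psi(\omega)\dd\omega\otimes|\nu|\otimes|\nu|$, so Fubini applies and we may integrate in $x$ and $y$ first.

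Carrying out the inner integrals, the $x$-integral gives $\widehat\nu(\omega)=\widehat P(\omega)-\widehat Q(\omega)$. The $y$-integral gives $\int e^{-\mathrm i\omega\cdot y}\nu(\dd y)=\overline{\widehat\nu(\omega)}$, because $\nu$ is a real measure. Hence
\[
\mathrm{MMD}_k^2(P,Q)=\int_{\R^d}\widehat\nu(\omega)\overline{\widehat\nu(\omega)}\,\psi(\omega)\,\dd\omega=\int_{\R^d}|\widehat P(\omega)-\widehat Q(\omega)|^2\psi(\omega)\,\dd\omega ,
\]
which is the claimed identity. Nonnegativity is then immediate, since $\psi>0$.

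For the Gaussian example, I would check $\psi$ by the standard Gaussian Fourier transform: $\int e^{\mathrm i\omega\cdot z}(\sigma^2/2\pi)^{d/2}e^{-\sigma^2|\omega|^2/2}\dd\omega=e^{-|z|^2/2\sigma^2}$. For a finite positive mixture of Gaussian kernels, $\psi$ is the corresponding positive mixture of the Gaussian densities, which is again integrable and strictly positive.

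The remaining step, and the only one with content, is the equality case. If $\mathrm{MMD}_k(P,Q)=0$, then $|\widehat\nu|^2\psi=0$ a.e. Since $\psi>0$, this forces $\widehat\nu=0$ a.e. Because $\widehat\nu$ is continuous (dominated convergence, using finiteness of $|\nu|$), it vanishes everywhere. I would then invoke uniqueness of Fourier–Stieltjes transforms for finite signed measures to conclude $\nu=0$. Equivalently, $\widehat P=\widehat Q$ everywhere; if $P(\R^d)=\widehat P(0)=\widehat Q(0)=Q(\R^d)$ is nonzero, normalize both and apply the uniqueness theorem for characteristic functions, and otherwise both measures are zero. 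Thus $P=Q$. The converse direction, $P=Q\Rightarrow\mathrm{MMD}=0$, is trivial.
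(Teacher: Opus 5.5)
Your proposal is correct and follows the paper's proof essentially step for step: Fubini justified by the bound $\psi(\omega)$ against $\dd\omega\otimes|\nu|\otimes|\nu|$, identification of the $y$-integral as $\overline{\widehat\nu(\omega)}$, then $\widehat\nu=0$ a.e.\ (since $\psi>0$), everywhere by continuity, and $\nu=0$ by injectivity of the Fourier transform on finite signed measures. Your check of the Gaussian $\psi$ and the alternative reduction to characteristic functions by normalizing the total masses are correct, though the paper does not need them.
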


\begin{proof}
Let $\nu:=P-Q$, a finite signed measure. Since $|e^{\mathrm i\,\omega\cdot(x-y)}\psi(\omega)|\le\psi(\omega)$ is integrable with respect to $\dd\omega\otimes|\nu|(\dd x)\otimes|\nu|(\dd y)$, Fubini's theorem gives
\begin{align*}
\iint k(x,y)\,\nu(\dd x)\,\nu(\dd y)
&=\int\psi(\omega)\Big(\int e^{\mathrm i\,\omega\cdot x}\nu(\dd x)\Big)\Big(\int e^{-\mathrm i\,\omega\cdot y}\nu(\dd y)\Big)\dd\omega\\
&=\int|\widehat\nu(\omega)|^2\,\psi(\omega)\,\dd\omega .
\end{align*}
If this vanishes, then $\widehat\nu=0$ almost everywhere because $\psi>0$, hence everywhere because $\widehat\nu$ is continuous, and $\nu=0$ because the Fourier transform is injective on finite signed measures.
\end{proof}

\paragraph{Reweighted kernel.}
Let $\mu^*$ be a bounded density with $\mu^*>0$ almost everywhere and $\tilde k(x,y):=\mu^*(x)\,k(x,y)\,\mu^*(y)$. For integrable densities $p,q$, $\mathrm{MMD}^2_{\tilde k}(p,q)=\mathrm{MMD}^2_k(\mu^*p,\mu^*q)$, which by Lemma~\ref{lem:mmd-finite-measures} vanishes if and only if $\mu^*p=\mu^*q$ a.e., i.e.\ $p=q$ a.e. This is the property used for the objective \eqref{eq:J-mc}.

\paragraph{Proof of Proposition~\ref{prop:mmd_objective}.}
$\mathcal L_{\mathrm{MMD}}$ is the sum of two terms that are nonnegative by Lemma~\ref{lem:mmd-finite-measures}; hence $\mathcal L_{\mathrm{MMD}}\ge0$, with equality if and only if both terms vanish. By Lemma~\ref{lem:mmd-finite-measures}, applied to the finite measures with densities $\varphi(0,\cdot)\bar\varphi(T,\cdot)$ and $\mu_0$ for the first term, and $\bar\varphi(0,\cdot)\varphi(T,\cdot)$ and $G^{-1}(-\frac1c\log\varphi(T,\cdot),\cdot)$ for the second, this holds if and only if both equations of \eqref{eq:nl-sinkhorn} hold almost everywhere.
\qed

\section{Standing assumptions for well-posedness}
\label{app:standing-assumptions}

\noindent
(A1)-(A3) are structural conditions on the data; (A4) collects the properties of the solution of the forward-backward system \eqref{eq:HJB}-\eqref{FP} that are used in the proof of Theorem~\ref{theorem:optimal_solution} (Appendix~\ref{demo-main-theorem}). That proof is a verification argument: it uses (A1)-(A2), through the convexity of $f$ and $g$ (Lemma~\ref{lem:convexity}), together with (A3)-(A4). The existence of a solution with the properties of (A4) is not part of the argument. For regularizing nonlocal couplings it is established in \citep{lasry2007mean,cardaliaguet2013notesMFG} under regularity and growth conditions on $F$ and $G$, to which we refer for precise statements. Local couplings such as $F(m)=\kappa m$ or $F(m)=\kappa\log m$ satisfy (A1)-(A2) but not those conditions and require separate arguments, which we do not reproduce; in the Gaussian case of Proposition~\ref{propPHL} the solution is explicit.

\begin{itemize}
\item[(A1)] \emph{Differentiability.} The functionals $f$ (possibly depending on $t$) and $g$ admit linear functional derivatives
\[
F(t,x,m):=\frac{\delta f}{\delta m(t,x)},\qquad
G(x,m):=\frac{\delta g}{\delta m(T,x)},
\]
measurable in $(t,x)$, in the sense that $f(m)-f(m')=\int_0^1\!\int F\big(t,x,m'+\varepsilon(m-m')\big)\,(m-m')(x)\,\dd x\,\dd\varepsilon$ for all probability densities $m,m'$ at which $f$ is finite, and similarly for $g$ and $G$.
\item[(A2)] \emph{Lasry - Lions monotonicity.} For any probability densities $m_1,m_2$,
\[
\int_{\R^d}\!\Big(F(t,x,m_1)-F(t,x,m_2)\Big)\,\big(m_1(x)-m_2(x)\big)\,\dd x \;\ge 0,\quad \forall t\in[0,T],
\]
and
\[
\int_{\R^d}\!\Big(G(x,m_1)-G(x,m_2)\Big)\,\big(m_1(x)-m_2(x)\big)\,\dd x \;\ge 0.
\]
These monotonicity conditions ensure uniqueness of the mean-field equilibrium \citep{lasry2007mean}.
\item[(A3)] \emph{Initial data.} The initial law $m(0,\cdot)=\mu_0$ has sufficient regularity (e.g., $\mu_0\in L^1\cap L^\infty$ with finite second moment).
\item[(A4)] \emph{Regularity of the forward-backward solution.} The system \eqref{eq:HJB}-\eqref{FP} has a solution $(v^*,m^*)$ such that $v^*\in C^{1,2}([0,T]\times\R^d)$, $\nabla_x v^*$ is Lipschitz in $x$ uniformly in $t$, $|v^*|+|\partial_t v^*|+|\Delta v^*|\le C(1+|x|^2)$, and $m^*_t=\mathrm{Law}(X^*_t)$, where $\dd X^*_t=-c\nabla_x v^*(t,X^*_t)\,\dd t+\dd W_t$, $X^*_0\sim\mu_0$. Moreover $x\mapsto F(t,x,m^*_t)$ and $x\mapsto G(x,m^*_T)$ grow at most quadratically, uniformly in $t$, and $\int_0^T f(m^*_t)\,\dd t+g(m^*_T)<\infty$.
\end{itemize}
In (A2), the condition on $G$ is called \emph{strict} if equality implies $m_1=m_2$; it makes $G^{-1}$ in \eqref{eq:nl-sinkhorn} well defined.

\section{State dependence and relation to mean-field games}
\label{app:state-mfg}

One could allow the running and terminal costs to depend explicitly on the state variable $x$, e.g.
\[
\int_0^T \!\!\int_{\R^d} \ell\big(t,x, m(t,\cdot)\big)\, m(t,\dd x)\,\dd t
\quad\text{and}\quad
\int_{\R^d} h\big(x, m(T,\cdot)\big)\, m(T,\dd x).
\]
This causes no loss of generality: by setting
\[
f\big(m(t,\cdot)\big):=\!\int_{\R^d}\!\ell\big(t,x, m(t,\cdot)\big)\, m(t,\dd x),
\qquad
g\big(m(T,\cdot)\big):=\!\int_{\R^d}\!h\big(x, m(T,\cdot)\big)\, m(T,\dd x),
\]
we recover exactly the functional form \eqref{meanfield}, so that any explicit $x$-dependence can be ``integrated out'' into a dependence on $m(t,\cdot)$ and $m(T,\cdot)$.
The resulting stochastic optimal control with mean-field (law) costs is standard in the mean-field game (MFG) literature; the corresponding optimality system is a forward-backward pair of PDEs (Fokker-Planck for $m$ and HJB for the value), coupled through the functional derivatives $F=\delta f/\delta m$ and $G=\delta g/\delta m$ (see, e.g., \citep{lasry2007mean}) as made explicit in Theorem~\ref{theorem:optimal_solution}.

\noindent
Altogether, the functional \eqref{meanfield} extends the original entropic optimal transport problem of Schr\"odinger to a rich class of mean-field type control problems.
It preserves the central feature that the quadratic control cost corresponds to entropy minimization, thereby maintaining analytic tractability.

\section{Soft terminal constraint via Wasserstein cost}
\label{app:wasserstein-terminal}

\begin{remark}[Soft constraint via Wasserstein cost]
In analogy with the $f$-divergence penalization of Remark~\ref{rmk:matching_marginals}, one may also
impose a soft terminal condition through an optimal transport discrepancy.
Specifically, taking
\[
g(m) \;=\; \lambda\, W_p^p(\mu_T,m), \qquad \lambda > 0,
\]
the convex conjugate $g^*$ can be expressed
in terms of the $c$-transform associated with the cost $|x-y|^p$:
\[
g^*(v)
=\E_{X_T\sim\mu_T}\Big[\sup_{y\in\R^d}\big\{v(y)-\lambda|X_T-y|^p\big\}\Big]
= -\,\lambda \,\E_{X_T \sim \mu_T}\!\big[ (v/\lambda)^{c}(X_T)\big],
\]
with the $c$-transform $w^{c}(x):=\inf_{y\in\mathbb{R}^d}\big\{|x-y|^p - w(y)\big\}$.
Indeed, writing $W_p^p(\mu_T,m)=\inf_{\pi\in\Pi(\mu_T,m)}\int|x-y|^p\,\pi(\dd x,\dd y)$, the supremum over $m$ in $g^*(v)=\sup_m\sup_{\pi\in\Pi(\mu_T,m)}\int\big(v(y)-\lambda|x-y|^p\big)\pi(\dd x,\dd y)$ frees the second marginal of $\pi$, and the remaining supremum over couplings with first marginal $\mu_T$ equals $\int\sup_y\{v(y)-\lambda|x-y|^p\}\,\mu_T(\dd x)$ (transport each $x$ to a measurable $\varepsilon$-maximizer of $y\mapsto v(y)-\lambda|x-y|^p$).
In practice, directly computing $v^{\,c}$ in $\mathbb{R}^d$ may be challenging;
a common approach is to consider an entropic relaxation of the transport cost,
which yields a simpler expression more amenable to computation while
retaining the same geometric intuition.
\end{remark}

\section{Link to the (complex) semilinear Schr\"odinger equation and Gross-Pitaevskii}
\label{app:gross-pitaevskii}

Consider the classical semilinear Schr\"odinger equation (NLS) on physical time \(\tau\in\mathbb{R}\) (and its complex conjugate):
\begin{eqnarray}
i\,\partial_{\tau}\psi &= -\tfrac{1}{2}\Delta \psi \;+\; c\,\psi \, F(|\psi|^{2}), \label{eq:NLS}\\
-\,i\,\partial_{\tau}\bar\psi &= -\tfrac{1}{2}\Delta \bar\psi \;+\; c\,\bar\psi \; F(|\psi|^{2}), \label{eq:NLSconj}
\end{eqnarray}
where \(|\psi|^{2}=\bar\psi\,\psi\) and \(F\) is a (real-valued) nonlinearity acting on the density.
The Gross-Pitaevskii (cubic NLS) corresponds to the choice \(F(u)=u\) (see e.g.\ \citep{diu1989mecanique}):
\[
i\,\partial_{\tau}\psi \;=\; -\tfrac{1}{2}\Delta \psi \;+\; c\, \psi \,|\psi|^{2}.
\]

\medskip
\noindent
Perform a Wick rotation to imaginary time by setting \(\tau=\,i\,t\) (so that \(\partial_{\tau}=-i\,\partial_{t}\)).
Substituting into \eqref{eq:NLS}-\eqref{eq:NLSconj} yields
\begin{align}
\partial_t \psi + \tfrac{1}{2}\Delta \psi &= c\,\psi\,F(\bar\psi\psi),\\
-\partial_t \bar\psi + \tfrac{1}{2}\Delta \bar\psi &= c\,\bar\psi\,F(\bar\psi\psi).
\end{align}
Identifying \((\psi,\bar\psi)\) with \((\varphi,\bar\varphi)\), we recover exactly our system \eqref{PDE1}-\eqref{PDE2} written in forward-backward form (see Remark~\ref{forward-backward system}).
In the cubic case (Gross-Pitaevskii), the sign of $c$ distinguishes two qualitatively different regimes:

\begin{itemize}
\item {Defocusing / repulsive (\(c>0\)).}
The nonlinearity disperses mass, counteracting concentration.
Classically, this regime admits global-in-time solutions for broad classes of initial data (e.g., in \(H^1\)), scattering in low dimensions, and exhibits no finite-time blow-up \citep{cazenave2003semilinear}.
After Wick rotation (imaginary time), the system becomes a well-behaved gradient(-like) flow for the energy;
in our forward-backward formulation \eqref{PDE1}-\eqref{PDE2} this corresponds to a monotone mean-field interaction.

\item {Focusing / attractive (\(c<0\)).}
The nonlinearity promotes concentration and can balance dispersion to form \emph{bright solitons};
it may also produce finite-time blow-up (collapse) for sufficiently concentrated initial data (depending on the spatial dimension and mass) \citep{sulem1999nonlinear}.
Under Wick rotation, the flow may descend a \emph{nonconvex} energy landscape; in \eqref{PDE1}-\eqref{PDE2} this yields a mean-field term with negative monotonicity.
\end{itemize}

\section{Special cases of semilinear Schr\"odinger bridges}
\label{app:special-cases}

\subsection{Confining potentials}

When the interaction term reduces to a potential $F(t,x,m)\equiv V(t,x)$, 
the Schr\"odinger system \eqref{PDE1}-\eqref{PDE2} becomes \emph{linear}.
In this case, the optimal drift admits a Feynman-Kac representation:
\begin{equation}
\dd X_t
= \nabla \!\log \E\!\left[
\varphi\!\big(T,B_T^{t,X_t}\big)\,
\exp\!\Big(-c\!\int_t^T\! V(s,B_s^{t,X_t})\,\dd s\Big)
\right]\dd t
+ \dd W_t, \label{SDEV}
\end{equation}
where $(B_s^{t,x})_{s\ge t}$ is a Brownian motion starting from $x$ at time $t$.

\no A natural choice in generative modeling is a \emph{confining potential}
\[
V(x) = \Lambda\, \mathbf{1}_{x \notin \mathcal{D}},
\]
which restricts the diffusion to a prescribed open domain $\mathcal{D}$ as $\Lambda \to \infty$.
In this  limit, the optimal diffusion \eqref{SDEV} takes the form
\begin{align*}
\dd X_t
&= \nabla \log \E\!\left[\, \varphi(T,B^{t,X_t}_T)\, 
\mathbf{1}_{\{B^{t,X_t}_\cdot \in \mathcal{D}\}} \,\right]\dd t + \dd W_t \\[0.3em]
&= \nabla \log\!\left(
\int_{\mathcal{D}} \varphi(T,y)\, p^{\mathcal{D}}_{\,T-t}(X_t,y)\,\dd y
\right)\dd t
+ \dd W_t,
\end{align*}
where $p^{\mathcal{D}}_\tau(x,y)$ denotes the heat kernel on $\mathcal{D}$ with absorbing (Dirichlet) boundary conditions.

\subsubsection{Coxeter domains and Dyson-type diffusions}

When $\mathcal{D}$ is a \emph{Coxeter chamber} associated with a finite reflection group~\citep{grabiner1999brownian}, 
the heat kernel $p^{\mathcal{D}}_\tau(x,y)$ can be computed explicitly from the 
\emph{Coxeter reflection principle}, which generalizes the 
Karlin-McGregor determinant formula and expresses it as a signed sum of free Gaussian kernels
\[
p_\tau(x,y) = (2\pi\tau)^{-d/2}\,
\exp\!\Big(-\frac{|x-y|^2}{2\tau}\Big).
\]
Let $R\subset\R^d$ be a root system with positive roots $R_+$, and $W$ the corresponding finite Coxeter group\footnote{%
A \emph{Coxeter group} is a finite group $W$ generated by reflections $\omega_\alpha$ across hyperplanes orthogonal to a finite set of nonzero vectors $R\subset\mathbb{R}^d$, called the \emph{root system}.
The reflections satisfy the relations $(\omega_\alpha\omega_\beta)^{m_{\alpha\beta}}=1$ for integers $m_{\alpha\beta}\ge2$, which determine the group structure via a \emph{Coxeter matrix}.
The subset $R_+\subset R$ of \emph{positive roots} selects one of the two connected components of $\mathbb{R}^d\setminus\bigcup_{\alpha\in R} H_\alpha$,
where $H_\alpha=\{x:\langle\alpha,x\rangle=0\}$ are the reflection hyperplanes.
The corresponding \emph{Coxeter (or fundamental) chamber} is 
$\mathcal{D}=\{x\in\mathbb{R}^d:\langle\alpha,x\rangle>0,\ \forall\alpha\in R_+\}$.
When the root system is crystallographic, $W$ coincides with a \emph{Weyl group} associated with a simple Lie algebra, but more general (noncrystallographic) Coxeter types such as $H_3$, $H_4$, or $I_2(m)$ also fit this framework.
}
generated by reflections
$\omega_\alpha(x)=x-\tfrac{2\langle \alpha,x\rangle}{\langle \alpha,\alpha\rangle}\alpha$.
The corresponding Coxeter chamber is
\[
\mathcal{D}=\{x\in\R^d:\ \langle \alpha,x\rangle>0\ \text{for all } \alpha\in R_+\}.
\]

\noindent
For the classical Coxeter types $A$ and $C$, the corresponding chambers are
\begin{align*}
\mathcal{D}^A &:= \{x\in\R^d : x_1 < x_2 < \cdots < x_d\},\\[0.3em]
\mathcal{D}^C &:= \{x\in\R^d : 0 < x_1 < x_2 < \cdots < x_d\}.
\end{align*}
These domains impose symmetry-breaking constraints relevant to generative modeling.
For type $A$, the chamber $\mathcal{D}^A$ enforces a strict coordinate ordering, providing a ranked representation that removes permutation ambiguity in exchangeable systems (e.g., sorted eigenvalues or ordered latent features).
For type $C$, the positivity constraint ($x_i>0$) confines the process to the positive orthant, suitable for modeling nonnegative and ordered latent variables such as intensities or squared singular values.
Thus, Coxeter chambers like $\mathcal{D}^A$ and $\mathcal{D}^C$ serve as geometric priors for generative diffusions on manifolds with permutation or reflection symmetries.

\noindent
From the Coxeter reflection principle,\footnote{%
\textbf{Proof sketch.}
Let $W$ be a finite Coxeter group acting orthogonally on $\mathbb{R}^d$, generated by reflections $\{\omega_\alpha\}$ across the hyperplanes $H_\alpha=\{x:\langle\alpha,x\rangle=0\}$, and let $\mathcal D$ be a fundamental chamber. We verify $p^{\mathcal D}_\tau$ is the fundamental solution of the   heat kernel on $\mathcal D$ with  Dirichlet boundary conditions:

\emph{(i) Heat equation.}
Each term $p_\tau(x,wy)$ satisfies $(\partial_\tau-\tfrac12\Delta_x)p_\tau=0$ in $x$, hence so does their finite signed sum by linearity.

\emph{(ii) Boundary condition (reflection principle).}
Every reflection $\omega_\alpha\in W$ is an orthogonal involution with $\det(\omega_\alpha)=-1$, so $|\omega_\alpha x-wy|=|x-\omega_\alpha wy|$, i.e.\ $p_\tau(\omega_\alpha x,wy)=p_\tau\big(x,\omega_\alpha wy\big)$, for all $x$.
Since $w\mapsto\omega_\alpha w$ is a bijection of $W$ and $\det(\omega_\alpha w)=-\det(w)$,
\[
p^{\mathcal D}_\tau(\omega_\alpha x,y)
=\sum_{w\in W}\det(w)\,p_\tau(x,\omega_\alpha wy)
=\sum_{w'\in W}\det(\omega_\alpha w')\,p_\tau(x,w'y)
=-\,p^{\mathcal D}_\tau(x,y).
\]
Thus $p^{\mathcal D}_\tau(\cdot,y)$ is \emph{odd} under each reflection. For $x\in H_\alpha$, $\omega_\alpha x=x$ gives $p^{\mathcal D}_\tau(x,y)=0$; as $\partial\mathcal D\subset\bigcup_{\alpha\in R_+}H_\alpha$, $p^{\mathcal D}_\tau$ vanishes on $\partial\mathcal D$.

\emph{(iii) Initial condition.}
As $\tau\downarrow0$, $p_\tau(\cdot,wy)\to\delta_{wy}$. For $y\in\mathcal D$ and $w\neq e$, the point $wy$ lies in the chamber $w\mathcal D$, which is disjoint from $\overline{\mathcal D}$ ($W$ acts simply transitively on the chambers). Hence only $w=e$ contributes on $\mathcal D$, and $p^{\mathcal D}_\tau(\cdot,y)\to\delta_y$ there.
}
we obtain the general Coxeter–reflection formula
\[
p_\tau^{\mathcal{D}}(x,y)=\sum_{w\in W}\det(w)\,p_\tau\!\big(x, w y\big),
\qquad x,y\in\mathcal{D},
\]
where $\det(w)\in\{\pm1\}$ is the sign of the orthogonal transformation $w$.
For $\mu_0=\delta_{X_0}$, the potentials of the (linear, $F\equiv0$ inside $\mathcal D$) bridge are $\varphi(T,y)\propto\mu_T(y)/p^{\mathcal D}_T(X_0,y)$ and $\varphi(t,x)=\int_{\mathcal D}p^{\mathcal D}_{T-t}(x,y)\,\varphi(T,y)\,\dd y$, so that
\[
\dd X_t
=  \nabla \log \int_{\mathcal{D}} \mu_T(dy)\,
\frac{p^{\mathcal{D}}_{\,T-t}(X_t,y)}{p^{\mathcal{D}}_{\,T}(X_0,y)} \,\dd t
+ \dd W_t.
\]

\begin{remark}[Dyson Brownian motion (Type A), $\lambda=\infty$]
For type $A$ and the hard constraint $\lambda=\infty$, let $\mu_0=\mu_T=\delta_{X_0}$ with $X_0\in\mathcal D^A$. Then $\varphi(t,x)\propto p^{\mathcal D^A}_{T-t}(x,X_0)=\det\big[p^{(1)}_{T-t}(x_k,X_0^l)\big]_{k,l=1}^d$, where $p^{(1)}$ is the one-dimensional Gaussian kernel (the Coxeter formula for $W=S_d$ is the Karlin-McGregor determinant). The optimal process supported on $\mathcal{D}^A$ is the non-colliding Brownian bridge
\[
\dd X_t^i
= \bigg(
\frac{X_0^i - X_t^i}{T-t}
+ \partial_{x_i}\log\det\Big[e^{x_k(X_0^l-X_0^k)/(T-t)}\Big]_{k,l=1}^d\Big|_{x=X_t}
\bigg)\dd t + \dd W_t^i,\qquad i=1,\dots,d.
\]
For $d=2$ the interaction term equals $-\frac{X_0^2-X_0^1}{T-t}\big(e^{(X_t^2-X_t^1)(X_0^2-X_0^1)/(T-t)}-1\big)^{-1}$. By the Cauchy-Binet formula, $\det[e^{x_ky_l/\tau}]_{k,l}=\tau^{-d(d-1)/2}\Delta(x)\Delta(y)\big(\prod_{j=1}^{d-1}j!\big)^{-1}\big(1+O(\tau^{-1})\big)$ as $\tau\to\infty$, with $\Delta(x):=\prod_{k<l}(x_l-x_k)$; the determinant in the drift equals $e^{-\sum_kx_ky_k/\tau}\det[e^{x_ky_l/\tau}]_{k,l}$ with $y=X_0$, $\tau=T-t$, so its logarithmic derivative has the same limit. Hence, in the long-time limit $T\to\infty$ ($t$ fixed), the bridge term vanishes, the interaction term tends to $\partial_{x_i}\log\Delta(X_t)$, and we recover the celebrated Dyson Brownian motion
\[
\dd X_t^i = \sum_{j\neq i}\frac{1}{X_t^i - X_t^j}\,\dd t + \dd W_t^i,\qquad i=1,\dots,d,
\]
which describes the eigenvalue dynamics of Hermitian Brownian ensembles (GUE).
Thus, the linear Schr\"odinger bridge framework naturally connects to classical random-matrix diffusions.
\end{remark}

\begin{remark}[Type C, $\lambda=\infty$]
Analogously, for type $C$ with $\mu_0=\mu_T=\delta_{X_0}$, $X_0\in\mathcal D^C$, the group $W$ is the group of signed permutations, $p^{\mathcal D^C}_\tau(x,y)=\det\big[p^{(1)}_\tau(x_k,y_l)-p^{(1)}_\tau(x_k,-y_l)\big]_{k,l=1}^d$, and the optimal diffusion confined to $\mathcal{D}^C$ reads
\[
\dd X_t^i
= \partial_{x_i}\log\det\Big[p^{(1)}_{T-t}(x_k,X_0^l)-p^{(1)}_{T-t}(x_k,-X_0^l)\Big]_{k,l=1}^d\Big|_{x=X_t}\,\dd t + \dd W_t^i,\qquad i=1,\dots,d.
\]
Since $p^{(1)}_\tau(x,y)-p^{(1)}_\tau(x,-y)=2(2\pi\tau)^{-1/2}e^{-(x^2+y^2)/2\tau}\sinh(xy/\tau)$, the same expansion (with odd powers $1,3,\dots,2d-1$) shows that, as $T\to\infty$, the drift tends to $\partial_{x_i}\log h(X_t)$ with $h(x):=\prod_{k}x_k\prod_{k<l}(x_l^2-x_k^2)$, a harmonic function vanishing on $\partial\mathcal D^C$:
\[
\dd X_t^i
= \left(
\frac{1}{X_t^i}
+ \sum_{j\neq i}\frac{2X_t^i}{(X_t^i)^2-(X_t^j)^2}
\right)\dd t + \dd W_t^i,\qquad i=1,\dots,d.
\]
This is the type-$C$ analogue of Dyson Brownian motion \citep{grabiner1999brownian}: three-dimensional Bessel processes (drift $1/x$) conditioned never to collide.
\end{remark}

\subsection{Score-based generative modeling as an exact linear-SB solution ($F\equiv 0$, noised-data prior)}
\label{app:score-based}

Take $F\equiv 0$ (classical Schr\"odinger bridge), set $\mu_T=p_{\mathrm{data}}$, and choose the reference marginal as a noisy version of $\mu_T$: $\mu_0 := \mu_T * \mathcal N(0, T\,I_d)$. The $(\varphi,\bar\varphi)$ system of Theorem~\ref{theorem:optimal_solution} collapses to two linear (backward) heat equations:
\[
\partial_t\varphi+\tfrac12\Delta\varphi=0,
\qquad
\partial_t\bar\varphi+\tfrac12\Delta\bar\varphi=0.
\]
We claim the exact solution is $\bar\varphi\equiv 1$ and
\[
\varphi(t,x)\;=\;\big(p_{\mathrm{data}}*\mathcal N(0,(T-t)I_d)\big)(x)\;=:\;p_{T-t}(x).
\]
Indeed: (i) both solve the heat equations trivially; (ii) the coupling $\varphi(0,x)\bar\varphi(T,x)=\mu_0(x)$ holds since $\varphi(0,\cdot)=p_T=\mu_T*\mathcal N(0,T\,I_d)=\mu_0$; (iii) $m(T,x)=\bar\varphi(0,x)\varphi(T,x)=p_{\mathrm{data}}(x)$, so the hard terminal constraint $m(T,\cdot)=\mu_T$ holds. Consequently the optimal drift of Theorem~\ref{theorem:optimal_solution} reduces to
\[
\nabla\log\varphi(t,X_t)\;=\;\nabla\log p_{T-t}(X_t),
\]
the \emph{score function} of the Gaussian-smoothed data at noise level $T-t$. Simulating $\dd X_t=\nabla\log\varphi(t,X_t)\,\dd t+\dd W_t$, $X_0\sim\mu_0$, coincides with the reverse-time score-based sampler of \citet{song2021scorebased,ho2020denoising}: DDPM / score-based diffusion is the classical-SB solution with noised-data prior and $\bar\varphi\equiv 1$.

\subsection{Analytical semilinear bridge between Gaussians for $F(m)=\kappa \log m$ and $G(x,m(T,\cdot))=\lambda \log \frac{m(T,x)}{\mu_T(x)}$}

\begin{proposition}[Optimal diffusion between Gaussians]
 \label{propPHL}
Let the initial and terminal laws be Gaussian,
\[
\mu_0 = \mathcal N(m_{0},\Sigma_{0}),
\qquad
\mu_T = \mathcal N(m_{T},\Sigma_{T}),
\]
with $\Sigma_0,\Sigma_T \in \mathbb S_{++}^d$ and $m_0, m_T\in \R^d$. We take
$F(m)=\kappa \log m$ and $G(x,m(T,\cdot))=\lambda \log \frac{m(T,x)}{\mu_T(x)}$ with $\kappa,\lambda>0$, and assume that the forward-backward system of ODEs
\begin{align*}
\dot A_t &= c\,A_t^2 + \kappa\,\hat{\Sigma}_t^{-1}, \quad A_T=\lambda( \Sigma_T^{-1}-\hat{\Sigma}_T^{-1}) \\
\dot b_t &= c\,A_t b_t - \kappa\,\hat{\Sigma}_t^{-1}\hat{\mu}_t, \quad b_T=\lambda( \hat{\Sigma}_T^{-1} \hat{\mu}_T-\Sigma_T^{-1} m_T ) \\
\dot{\hat{\mu}}_t &= -\,c\,(A_t \hat{\mu}_t + b_t),\quad \hat{\mu}_{0}=m_0 \\
\dot{\hat{\Sigma}}_t &= -\,c\,(A_t \hat{\Sigma}_t + \hat{\Sigma}_t A_t) + I_d,
\quad \hat{\Sigma}_{0}=\Sigma_0
\end{align*}
has a solution on $[0,T]$ with $A_t\in\mathbb S^d$ symmetric, $b_t,\hat\mu_t\in\R^d$ and $\hat\Sigma_t\in\mathbb S^d_{++}$. Then the \emph{optimal diffusion process} is
\[
\dd X_t \;=\; -\,c\,(A_t X_t + b_t)\,\dd t \;+\; \dd W_t,\qquad X_0\sim\mu_0,
\]
with $\mathrm{Law}(X_t)=\mathcal N(\hat\mu_t,\hat\Sigma_t)$, and it is the unique optimal control.
\end{proposition}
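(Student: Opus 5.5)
The plan is to construct an explicit solution $(v^*,m^*)$ of the forward-backward system \eqref{eq:HJB}-\eqref{FP} from the assumed ODE solution. I would then check that it satisfies (A1)-(A4) and invoke the verification argument of Appendix~\ref{demo-main-theorem} (Steps 2-3), which gives both optimality and uniqueness. The natural ansatz is a Gaussian law $m^*_t=\mathcal N(\hat\mu_t,\hat\Sigma_t)$ together with a quadratic value function
\[
v^*(t,x)=\tfrac12 x^\top A_t x+b_t^\top x+e_t ,
\]
where $e_t$ is a scalar function to be fixed at the end. The feedback drift is then $-c\nabla_x v^*=-c(A_tx+b_t)$, which is exactly the claimed SDE.

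\emph{Step 1: HJB.} Since $\log m^*_t(x)=-\tfrac12(x-\hat\mu_t)^\top\hat\Sigma_t^{-1}(x-\hat\mu_t)+\text{const}_t$, the quantity $F(t,x,m^*_t)=\kappa\log m^*_t(x)$ is a quadratic polynomial in $x$. Substitute it into $\partial_t v^*+\tfrac12\Delta v^*-\tfrac c2|\nabla v^*|^2+F=0$ and identify coefficients.
\begin{itemize}
\item The quadratic part gives $\tfrac12\dot A_t-\tfrac c2A_t^2-\tfrac\kappa2\hat\Sigma_t^{-1}=0$, i.e.\ $\dot A_t=cA_t^2+\kappa\hat\Sigma_t^{-1}$.
\item The linear part gives $\dot b_t-cA_tb_t+\kappa\hat\Sigma_t^{-1}\hat\mu_t=0$.
\item The constant part is a scalar linear ODE $\dot e_t=\dots$, which is always solvable once $A,b,\hat\mu,\hat\Sigma$ are known.
\end{itemize}
The terminal condition $v^*(T,\cdot)=G(\cdot,m^*_T)=\lambda(\log m^*_T-\log\mu_T)$ is again a quadratic polynomial. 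Matching coefficients gives $A_T=\lambda(\Sigma_T^{-1}-\hat\Sigma_T^{-1})$ and $b_T=\lambda(\hat\Sigma_T^{-1}\hat\mu_T-\Sigma_T^{-1}m_T)$, and fixes $e_T$.

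\emph{Step 2: Fokker-Planck.} The SDE $\dd X_t=-c(A_tX_t+b_t)\dd t+\dd W_t$ is linear with Gaussian initial law, so $X_t$ stays Gaussian. Its mean and covariance solve $\dot{\hat\mu}=-c(A\hat\mu+b)$ and $\dot{\hat\Sigma}=-c(A\hat\Sigma+\hat\Sigma A)+I_d$, which are the last two ODEs. Hence $m^*_t=\mathrm{Law}(X^*_t)=\mathcal N(\hat\mu_t,\hat\Sigma_t)$ solves \eqref{FP}, and the two pieces are consistent: the same $(\hat\mu,\hat\Sigma)$ enter the HJB coefficients.

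\emph{Step 3: assumptions.}
\begin{itemize}
\item (A1): take $f(m)=\kappa\int m\log m$ and $g(m)=\lambda\int m\log(m/\mu_T)$. Their linear derivatives are $\kappa(\log m+1)$ and $\lambda(\log(m/\mu_T)+1)$. The additive constants are harmless, since they integrate to zero against $m-m'$ in Lemma~\ref{lem:convexity}. Strictly, they shift HJB and the terminal condition only by constants, absorbed into $e_t$.
\item (A2): both derivatives reduce to monotonicity of the logarithm, $\int(\log m_1-\log m_2)(m_1-m_2)\ge0$, with strict inequality for $m_1\ne m_2$.
\item (A3): immediate for Gaussian $\mu_0$.
\item (A4): $v^*$ is smooth and quadratic; $\nabla v^*$ is affine, hence uniformly Lipschitz, since $A$ is continuous on $[0,T]$. $F(t,\cdot,m^*_t)$ and $G(\cdot,m^*_T)$ grow quadratically, and the entropies of Gaussians are finite.
\end{itemize}
The proof of Theorem~\ref{theorem:optimal_solution} then yields $J(\alpha^*)\le J(\alpha)$, with equality only for $\alpha=\alpha^*$.

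\emph{Main obstacle.} I expect the delicate point to be the integrability hypotheses of Lemma~\ref{lem:convexity} for an arbitrary competitor $\alpha\in\mathcal A$, and the possibility that $f(m^\alpha_t)$ equals $-\infty$. I would handle these using finite second moments $\E\sup_t|X^\alpha_t|^2<\infty$. This gives $F(t,\cdot,m^*_t)\in L^1(m^\alpha_t)$ from the quadratic growth. It also gives the lower bound on the entropy of a law with finite second moment (via the Gaussian maximum-entropy inequality), so $f(m^\alpha_t)>-\infty$ and the comparison is meaningful. Competitors with $J(\alpha)=+\infty$ are trivially excluded.
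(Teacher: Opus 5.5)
Your proposal is correct and follows the paper's proof essentially step for step. Both use a Gaussian ansatz for $m^*$ and a quadratic ansatz for $v^*$, match coefficients in the HJB equation and its terminal condition, obtain the Fokker--Planck part from the Gaussian law of the linear SDE, and conclude with Steps 2--3 of the verification argument in Appendix~\ref{demo-main-theorem}. The differences are minor. The paper takes $f(m)=\kappa\int(m\log m-m)$ and $g(m)=\lambda\int\bigl(m\log\tfrac{m}{\mu_T}-m\bigr)$, so the derivatives are exactly $\kappa\log m$ and $\lambda\log\tfrac{m}{\mu_T}$ instead of absorbing the $+1$ shifts into $e_t$. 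Your Gaussian maximum-entropy lower bound, which excludes $f(m^\alpha_t)=-\infty$, spells out an integrability point the paper leaves implicit.
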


\noindent
The proof is provided below in Appendix~\ref{demo-gaussian}. It proceeds by inserting a Gaussian ansatz for \(m(t,\cdot)\) and a quadratic ansatz for \(v(t,\cdot)\) into the forward Fokker-Planck and backward HJB equations, matching the coefficients in \(x\), and concluding with the verification argument of Appendix~\ref{demo-main-theorem}.

\medskip
\noindent
\textbf{Remark.} When \(\kappa=0\) and \(\lambda\to\infty\), this system reduces to the \emph{classical Schr\"odinger bridge} between Gaussian marginals, yielding the well-known result that the bridge is a linear Gaussian diffusion with quadratic potentials governed by coupled Riccati-type ODEs \citep{bunne2023schrodinger}.

\subsection{Lifted (semilinear) Schr\"odinger bridge and  Gyöngy's Markov projection}

Gyöngy's Markov projection theorem \citep{gyongy1986mimicking} has played a central role in 
mathematical finance. It provides a way to construct a Markov diffusion in $\mathbb R^d$ 
with prescribed marginal distributions at each time. This idea underlies the calibration of \emph{local volatility} and 
\emph{local stochastic volatility} models, as pioneered by Dupire's formula 
\citep{dupire1994pricing} and systematically developed in the monograph \citep{guyon2011nonlinear}. 

\no In this subsection, we apply Gyöngy's projection to the setting of Schr\"odinger bridges. 
The key observation is that one may lift the marginals $\mu_0,\mu_T$ to an augmented space 
$\mathbb R^{d+1}$ (eventually higher), solve the Schr\"odinger bridge there, and then project back onto 
$\mathbb R^d$. Gyöngy's mimicking theorem, in the form of \citet{brunick2013mimicking}, then provides a solution of a Markovian SDE in
$\mathbb R^d$ with the correct marginals $\mu_0,\mu_T$. 
Although this projected diffusion has the right one-time marginals, it is in general 
\emph{not} itself a Schr\"odinger bridge. It therefore defines a new class of generative 
models, obtained by combining lifted SB with Markov projection.

\begin{center}
\begin{tikzcd}[row sep=large, column sep=large]
\tilde{\mu}_0 \ar[r, "\text{SB in }\mathbb{R}^{d+1}"] \ar[d, two heads, "projection"']
& \tilde{\mu}_T \ar[d, two heads, "projection"] \\
\mu_0 \ar[r, "\text{Markov proj.}"']
& \mu_T
\end{tikzcd}
\end{center}

\begin{proposition}[Lifted Schr\"odinger bridge and Gyöngy projection]
\label{propGyongy}
Let $\mu_0,\mu_T \in \mathcal P(\mathbb R^d)$ be given marginals, and let
$\tilde\mu_0,\tilde\mu_T \in \mathcal P(\mathbb R^{d+1})$ be their lifted versions
via conditional distributions $\nu_0,\nu_T$:
\[
\tilde\mu_0(dx,dy) = \mu_0(dx)\,\nu_0(dy|x),
\qquad
\tilde\mu_T(dx,dy) = \mu_T(dx)\,\nu_T(dy|x).
\]
Consider the Schr\"odinger bridge in $\mathbb R^{d+1}$ between
$\tilde\mu_0$ and $\tilde\mu_T$ under the Wiener reference $\PP^0$, with dynamics
\[
d\tilde X_t = \nabla_{(x,y)} \log \varphi(t,\tilde X_t)\,dt + d\tilde W_t,
\qquad \tilde X_t = (X_t,Y_t),
\]
where $(\varphi,\bar\varphi)$ are the Schr\"odinger potentials in $\mathbb R^{d+1}$, and assume that the bridge has finite energy, $\E\int_0^T|\nabla_{(x,y)}\log\varphi(t,\tilde X_t)|^2\,\dd t<\infty$. Consider
\begin{equation}\label{eq:gyongy-sde}
d X_t =
b(t,X_t)\,\dd t + \dd W_t ,\quad X_0\sim\mu_0,
\qquad
b(t,x):=\frac{\displaystyle \int \nabla_x \varphi(t,x,y)\, \bar \varphi(T-t,x,y)\,\dd y}
     {\displaystyle \int \varphi(t,x,y)\, \bar \varphi(T-t,x,y)\,\dd y},
\end{equation}
with $b(t,x):=0$ where the denominator vanishes. Then \eqref{eq:gyongy-sde} has a weak solution $X$ such that $\mathrm{Law}(X_t)$ is the $x$-marginal of $\mathrm{Law}(\tilde X_t)$ for every $t\in[0,T]$; in particular $\mathrm{Law}(X_0)= \mu_0$ and $\mathrm{Law}(X_T) = \mu_T$. If moreover $\nabla_x\log\varphi$ is bounded, every solution of \eqref{eq:gyongy-sde} has these marginals.
\end{proposition}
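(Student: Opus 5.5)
The plan is to reduce the statement to the mimicking theorem of \citet{brunick2013mimicking}, applied to the first $d$ coordinates of the lifted bridge $\tilde X=(X,Y)$. The $x$-component of $\tilde X$ is an Itô process
\[
\dd X_t=\beta_t\,\dd t+\dd W^x_t,\qquad \beta_t:=\nabla_x\log\varphi(t,\tilde X_t),
\]
driven by a $d$-dimensional Brownian motion and having unit diffusion matrix. The finite-energy assumption gives $\E\int_0^T|\beta_t|\,\dd t<\infty$, which is the integrability hypothesis Brunick-Shreve require. Their result (Gyöngy's theorem without nondegeneracy or boundedness assumptions) then provides a weak solution of
\[
\dd \hat X_t=\hat b(t,\hat X_t)\,\dd t+\dd \hat W_t,\qquad \hat b(t,x)=\E[\beta_t\mid X_t=x],
\]
whose one-time marginals coincide with those of $X_t$ for every $t$.

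The second step is to identify $\hat b$ with the explicit $b$ of \eqref{eq:gyongy-sde}. By the linear case of Theorem~\ref{theorem:optimal_solution} (with $F\equiv0$ and a hard constraint), $\mathrm{Law}(\tilde X_t)$ has density $\varphi(t,x,y)\bar\varphi(T-t,x,y)$. Disintegrating in $y$ gives
\[
\E[\beta_t\mid X_t=x]=\frac{\int\nabla_x\log\varphi\,\varphi\bar\varphi\,\dd y}{\int\varphi\bar\varphi\,\dd y}=b(t,x)
\]
for $\mathrm{Law}(X_t)$-a.e.\ $x$. Off the support of the marginal (where the denominator vanishes) the value of the drift is irrelevant, so the convention $b=0$ there is harmless. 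The marginal of $\tilde X_0$, respectively $\tilde X_T$, in $x$ is $\mu_0$, respectively $\mu_T$, by construction of the lifts, which gives the endpoint claims.

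For the uniqueness claim, boundedness of $\nabla_x\log\varphi$ implies that $b$ is bounded, being a conditional average. For an SDE with bounded measurable drift and identity diffusion, Girsanov's theorem gives uniqueness in law (and Veretennikov/Zvonkin gives pathwise uniqueness). Every solution therefore has the same law as the mimicking solution, and in particular the same marginals.

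The main obstacle is the rigorous version of the first step. One must check that the conditional-expectation drift from Brunick-Shreve coincides a.e.\ with the ratio formula, which requires $\nabla_x\varphi\,\bar\varphi$ to be integrable in $y$ for a.e.\ $(t,x)$; I would obtain this from finite energy via Fubini. One must also handle time points where the densities could degenerate. I would first try to work on $(0,T)$, where heat-kernel smoothing makes $\varphi(t,\cdot)$ smooth and positive, and then pass to the endpoints by continuity of the marginals.
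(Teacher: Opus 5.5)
Your proposal is correct and follows essentially the same route as the paper. The common steps are: Brunick--Shreve's mimicking theorem applied to the $x$-component, with $\E\int_0^T|\beta_t|\,\dd t<\infty$ from finite energy; identification of the mimicking drift with $b$ through the density $\varphi(t,\cdot)\bar\varphi(T-t,\cdot)$; and uniqueness in law via Girsanov when $b$ is bounded.

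The paper makes your ``harmless'' replacement of $\hat b$ by $b$ precise with Fubini and $\mathrm{Law}(\hat X_s)=\mathrm{Law}(X_s)$, which give $\E\int_0^T|b-\hat b|(s,\hat X_s)\,\dd s=0$. Your extra limiting argument at the endpoints is unnecessary, because the mimicking theorem already matches the marginals at every $t\in[0,T]$.
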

\no The proof is deferred to Appendix~\ref{demo-gyongy}.

\section{Proof of Proposition \ref{propPHL}}
\label{demo-gaussian}

\begin{proof}
We look for a solution of \eqref{eq:HJB}-\eqref{FP} of the form
\[
m(t,\cdot)=\mathcal N\!\big(\hat{\mu}_t,\hat{\Sigma}_t\big),
\qquad
v(t,x)=\tfrac12 x^\top A_t x + b_t^\top x + a_t,\qquad A_t\in\mathbb S^d,
\]
where $(A_t,b_t,\hat\mu_t,\hat\Sigma_t)$ solves the system of the proposition and $a_t$ is determined below.

\emph{Fokker-Planck equation.} The feedback drift $-c\nabla_x v(t,x)=-c(A_tx+b_t)$ is affine. Hence the solution of $\dd X_t=-c(A_tX_t+b_t)\,\dd t+\dd W_t$, $X_0\sim\mathcal N(m_0,\Sigma_0)$, is Gaussian, and Itô's formula gives for its mean and covariance $\dot{\hat\mu}_t=-c(A_t\hat\mu_t+b_t)$ and $\dot{\hat\Sigma}_t=-c(A_t\hat\Sigma_t+\hat\Sigma_tA_t)+I_d$, with $\hat\mu_0=m_0$, $\hat\Sigma_0=\Sigma_0$. Therefore $m(t,\cdot)=\mathcal N(\hat\mu_t,\hat\Sigma_t)$ is the law of $X_t$ and solves \eqref{FP}.

\emph{HJB equation.} We have $\kappa\log m(t,x)=-\frac\kappa2(x-\hat\mu_t)^\top\hat\Sigma_t^{-1}(x-\hat\mu_t)+\gamma_t$, with $\gamma_t:=-\frac\kappa2\log\det(2\pi\hat\Sigma_t)$, and $|\nabla_x v|^2=x^\top A_t^2x+2b_t^\top A_tx+|b_t|^2$ since $A_t$ is symmetric. The left-hand side of \eqref{eq:HJB} is therefore
\[
\tfrac12x^\top\big(\dot A_t-cA_t^2-\kappa\hat\Sigma_t^{-1}\big)x
+\big(\dot b_t-cA_tb_t+\kappa\hat\Sigma_t^{-1}\hat\mu_t\big)^{\!\top}x
+\Big(\dot a_t+\tfrac12\operatorname{tr}A_t-\tfrac c2|b_t|^2-\tfrac\kappa2\hat\mu_t^\top\hat\Sigma_t^{-1}\hat\mu_t+\gamma_t\Big).
\]
The first two terms vanish by the equations for $A_t$ and $b_t$, and the third one vanishes when $a_t$ solves the corresponding linear ODE; $a_t$ does not enter the drift.

\emph{Terminal condition.} $G(x,m(T,\cdot))=\lambda\log\frac{m(T,x)}{\mu_T(x)}=\frac\lambda2x^\top\big(\Sigma_T^{-1}-\hat\Sigma_T^{-1}\big)x+\lambda\big(\hat\Sigma_T^{-1}\hat\mu_T-\Sigma_T^{-1}m_T\big)^{\!\top}x+\mathrm{const}$, so that $v(T,\cdot)=G(\cdot,m(T,\cdot))$ is equivalent to the stated values of $A_T$ and $b_T$, together with $a_T=\mathrm{const}$.

\emph{Optimality.} The functionals $f(m)=\kappa\int(m\log m-m)$ and $g(m)=\lambda\int\big(m\log\frac{m}{\mu_T}-m\big)$ have derivatives $F=\kappa\log m$ and $G=\lambda\log\frac{m}{\mu_T}$, which satisfy (A2), strictly, for $\kappa,\lambda>0$. The pair $(v,m)$ satisfies (A4): $\nabla_x v$ is affine with bounded coefficients on $[0,T]$, and $v$, $F(t,\cdot,m(t,\cdot))$ and $G(\cdot,m(T,\cdot))$ are quadratic in $x$. Steps 2 and 3 of Appendix~\ref{demo-main-theorem} then show that the affine feedback $-c(A_tx+b_t)$ is the unique optimal control.
\end{proof}

\section{Proof of Proposition \ref{propGyongy}}
\label{demo-gyongy}
\begin{proof}
Write $\tilde X_t=(X^{(x)}_t,Y_t)\in\R^d\times\R$ for the lifted bridge. By Theorem~\ref{theorem:optimal_solution} applied in $\R^{d+1}$ with $f\equiv0$, $\tilde X_t$ has density $\tilde m(t,x,y)=\varphi(t,x,y)\,\bar\varphi(T-t,x,y)$. Since $\varphi\,\nabla_x\log\varphi=\nabla_x\varphi$, the drift $b$ of \eqref{eq:gyongy-sde} is the conditional expectation
\[
b(t,x)=\frac{\int\nabla_x\log\varphi(t,x,y)\,\tilde m(t,x,y)\,\dd y}{\int\tilde m(t,x,y)\,\dd y}
=\E\big[\nabla_x\log\varphi(t,\tilde X_t)\,\big|\,X^{(x)}_t=x\big].
\]
The $x$-component is the Itô process $\dd X^{(x)}_t=\delta_t\,\dd t+\dd W_t$, with $\delta_t:=\nabla_x\log\varphi(t,\tilde X_t)$ and identity diffusion matrix, and $\E\int_0^T|\delta_t|\,\dd t\le\big(T\,\E\int_0^T|\delta_t|^2\,\dd t\big)^{1/2}<\infty$ by the finite-energy assumption. Corollary~3.7 of \citet{brunick2013mimicking}, which extends \citet{gyongy1986mimicking} to random initial conditions and removes its boundedness and nondegeneracy assumptions, provides a weak solution $\hat X$ of $\dd\hat X_t=\hat b(t,\hat X_t)\,\dd t+\dd\hat W_t$, $\hat X_0\sim\mu_0$, with $\mathrm{Law}(\hat X_t)=\mathrm{Law}(X^{(x)}_t)$ for all $t$, where $\hat b(t,X^{(x)}_t)$ is a version of $\E[\delta_t\,|\,X^{(x)}_t]$ for almost every $t$. By the display above, $b(t,\cdot)=\hat b(t,\cdot)$ $\mathrm{Law}(X^{(x)}_t)$-a.e.\ for almost every $t$; since $\hat X_s$ has the law of $X^{(x)}_s$, Fubini's theorem gives $\E\int_0^T|b-\hat b|(s,\hat X_s)\,\dd s=0$, so $\hat X$ also solves \eqref{eq:gyongy-sde}. If $\nabla_x\log\varphi$ is bounded, so is $b$, and \eqref{eq:gyongy-sde} is unique in law by Girsanov's theorem, so every solution has these marginals. Consistently, integrating in $y$ the Fokker-Planck equation of $\tilde X$, whose $\partial_y$-divergence term integrates to zero, shows that $\rho_t:=\int\tilde m(t,\cdot,y)\,\dd y$ solves $\partial_t\rho=-\nabla_x\cdot(b\,\rho)+\frac12\Delta_x\rho$, the Fokker-Planck equation of \eqref{eq:gyongy-sde}. Finally, $\rho_0$ and $\rho_T$ are the $x$-marginals of $\tilde\mu_0$ and $\tilde\mu_T$, that is $\mu_0$ and $\mu_T$.
\end{proof}

\section{Numerical sanity check: branching with time-reflected coupling}
\label{app:branching-sanity-toy}

\noindent
Marked branching for systems of coupled semilinear equations is established \citep{HenryLabordere,henrylabordere2019branching,HenryLabordereTouzi2021}; the bridge system \eqref{PDE1}-\eqref{PDE2} is special in that the nonlinearity pairs $\varphi(t,\cdot)$ with $\bar\varphi(T-t,\cdot)$ (see the main text diagram).
The scalar model below isolates $u(t,x)\,u(T-t,x)$ as an analytic benchmark for that $T{-}t$ product.

\subsection{Sanity check}
\label{sec:toy-branching-u}

We illustrate the branching diffusion method on a simple one-dimensional
semilinear PDE for a single potential $u$:
\begin{equation}
\label{eq:toy-pde-system}
\begin{cases}
\partial_t u(t,x) + \frac12\,\partial_{xx} u(t,x)
+ u(t,x)\,u(T-t,x) \;=\; 0,\qquad (t,x)\in\R\times\R,\\[0.5em]
u(T,x) \;=\; 3\,\operatorname{sech}^2(x)-1-\tanh\Bigl(\dfrac{T}{2}\Bigr).
\end{cases}
\end{equation}

\noindent
The explicit solution is given by
\[
u(t,x) \;=\; 3\,\operatorname{sech}^2(x)-1-\tanh\Bigl(t-\frac{T}{2}\Bigr),
\]
so that $u(0,\cdot)$ is known in closed form. This example is used only as a
numerical sanity check of the branching implementation. In this toy case, we work with a single family
of particles (say ``black'' particles, representing $u$). We rewrite the
nonlinearity $u(t,x)u(T-t,x)$ in \eqref{eq:toy-pde-system} using the
\textit{canonical} form introduced in \eqref{canonical_form}:
\[
\partial_t u(t,x) + \frac12\,\partial_{xx} u(t,x)
+1 \times \left( \tfrac{1}{2}\times 2 u(t,x)\,u(T-t,x) + \tfrac{1}{2}\times 2u(t,x) - u(t,x) \right) \;=\; 0,
\]
i.e.\ branching rate $1$, probabilities $p_0=p_1=\tfrac12$ and weights $a_k/p_k=2$: at a branching time $s$, the particle either restarts alone at $(s,X_s)$, or is replaced by two particles started at $(s,X_s)$ and $(T-s,X_s)$.

\noindent Figure \ref{fig:branching-u0-validation} compares the analytical solution $u(0,x)$ as a function of $x$ with the Monte Carlo branching algorithm for $T=1$.
The branching estimator closely matches the analytical curve, confirming the correctness of the branching scheme.
To assess the impact of the nonlinear term, we also include the solution of the heat kernel with terminal condition $u(T,x)$.

\begin{figure}[H]
\centering
\includegraphics[width=0.7\textwidth]{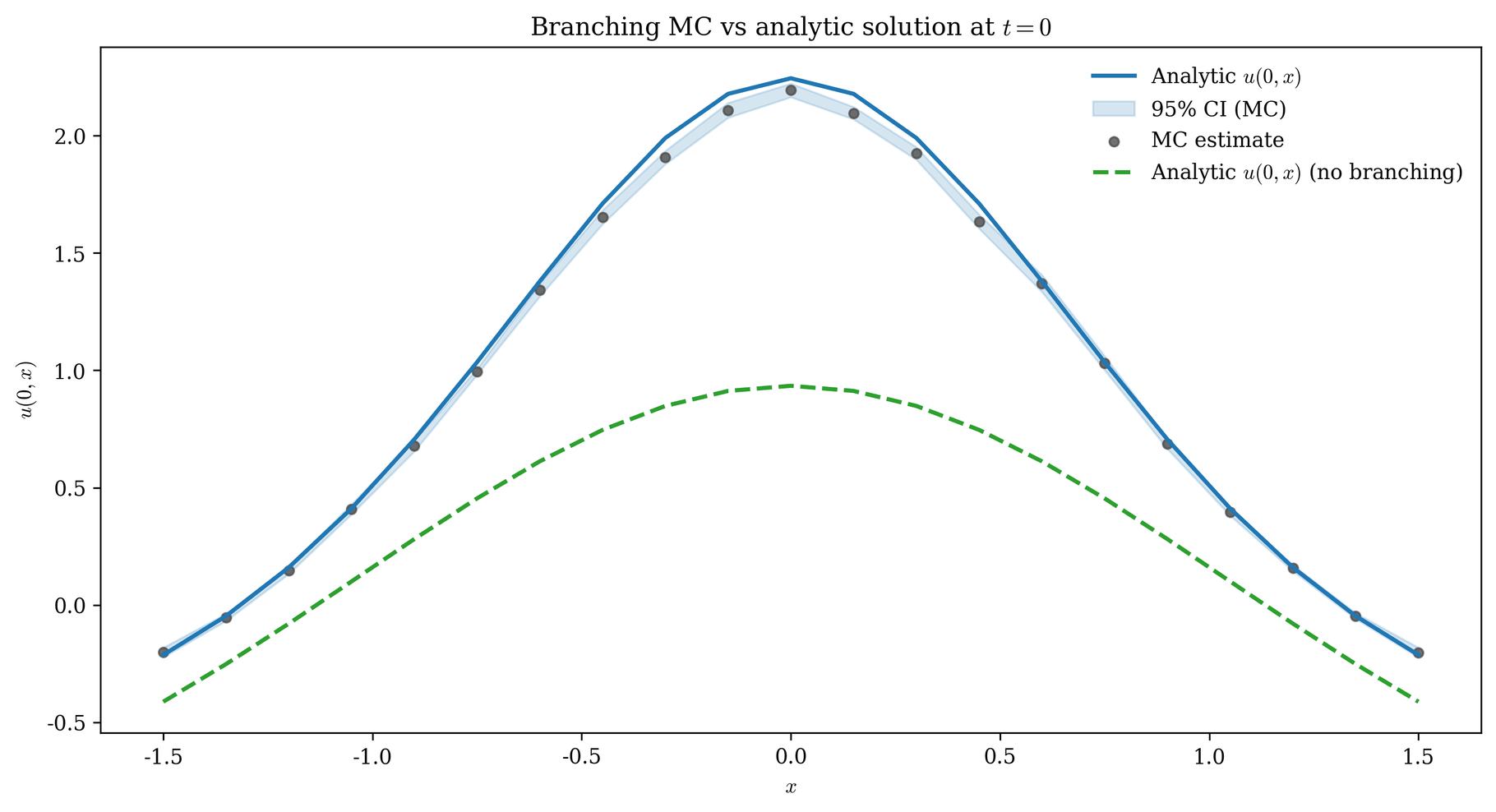}
\caption{
Comparison of the analytic solution $u(0,x)$ with the Monte Carlo estimate
$\hat u_M(0,x)$, in the branching case and the heat kernel,
highlighting the impact of the nonlinearity.}
\label{fig:branching-u0-validation}
\end{figure}

\section{Proof of Proposition \ref{prop:phi-branch-lamc}}
\label{demo-L1-L2-integrability}

\paragraph{Notation and mild solutions.}
We write $\varphi_T:=\varphi(T,\cdot)$, $\bar\varphi_T:=\bar\varphi(T,\cdot)$, $A:=\|\varphi_T\|_\infty$, $B:=\|\bar\varphi_T\|_\infty$, $b_k:=\|a_k\|_{L^\infty([0,T]\times\R^d)}$ and $\bar K:=\sum_k k\,p_k$ (the mean mark). Only $p_k>0$ for $a_k\not\equiv0$ is needed, with the convention $0/0=0$ in the ratios $a_k/p_k$ and $b_k^2/p_k$. Let $P_s h(x):=\E[h(x+W_s)]$ be the heat semigroup. In the canonical form \eqref{canonical_form}, \eqref{PDE1} reads $\partial_t\varphi+\frac12\Delta\varphi-c\varphi=-c\sum_k a_k(t,\cdot)\,\varphi(t,\cdot)^{k+1}\bar\varphi(T-t,\cdot)^{k}$, and symmetrically for \eqref{PDE2}. A \emph{mild solution} is a pair $(\varphi,\bar\varphi)$ of measurable functions on $[0,T]\times\R^d$ such that, for all $(t,x)$, with absolutely convergent integrals,
\begin{equation}\label{eq:mild}
\begin{aligned}
\varphi(t,x)&=e^{-c(T-t)}P_{T-t}\varphi_T(x)\\
&\quad+\int_t^T c\,e^{-c(s-t)}\,P_{s-t}\Big[\textstyle\sum_{k}a_k(s,\cdot)\,\varphi(s,\cdot)^{k+1}\,\bar\varphi(T-s,\cdot)^{k}\Big](x)\,\dd s,\\
\bar\varphi(t,x)&=e^{-c(T-t)}P_{T-t}\bar\varphi_T(x)\\
&\quad+\int_t^T c\,e^{-c(s-t)}\,P_{s-t}\Big[\textstyle\sum_{k}a_k(T-s,\cdot)\,\varphi(T-s,\cdot)^{k}\,\bar\varphi(s,\cdot)^{k+1}\Big](x)\,\dd s.
\end{aligned}
\end{equation}
These are the Duhamel formulas of \eqref{PDE1}-\eqref{PDE2} for the operator $\partial_t+\frac12\Delta-c$.

\paragraph{The tree.}
Particles are labelled by finite words (Ulam-Harris labelling), and each label carries an independent Brownian motion, an independent $\mathrm{Exp}(c)$ lifetime $\tau$ and an independent mark $K$ with law $(p_k)$. A particle born at time $b$ (on its own clock) at the point $y$ moves as $y+W_{\cdot-b}$. If $b+\tau\ge T$, it is a \emph{leaf} and contributes $\varphi_T(y+W_{T-b})$ if it is white, $\bar\varphi_T(y+W_{T-b})$ if it is black. Otherwise, with $s:=b+\tau$ and $z:=y+W_\tau$, it contributes the factor $a_K(s^\circ,z)/p_K$ of \eqref{eq:weight-update}, and it is replaced by $K+1$ particles of its own type born at $(s,z)$ and $K$ particles of the other type born at $(T-s,z)$. The root has generation $0$, the children of a particle of generation $j$ have generation $j+1$, and $D\in\mathbb N\cup\{\infty\}$ denotes the largest generation. On $\{D<\infty\}$, $\Phi$ is the product of all factors and of all leaf contributions; $\Phi^\circ_{t,x}$ (resp.\ $\Phi^\bullet_{t,x}$) is $\Phi$ for a white (resp.\ black) root born at $(t,x)$. We use the \emph{branching property}: conditionally on the lifetime, the mark and the path of the root, the subtrees rooted at its children are independent, and each is distributed as a tree rooted at the type and birth point of that child.

Because of the time reflection, a black child born at $T-s$ may live longer than its white parent, so the finiteness of the tree is not automatic: it is the object of Lemma~\ref{lem:tree-size}.

\begin{lemma}[Size of the tree]\label{lem:tree-size}
Let $n(t)\in[1,\infty]$ be the expected number of particles of a tree whose root is born at time $t$; it depends neither on the type nor on the position of the root. If $cT\bar K<1$, then
\[
n(t)\;\le\;1+\frac{c\,(T-t)\,(1+2\bar K)}{1-cT\bar K}\;<\;\infty
\]
and, in particular, $D<\infty$ almost surely. If $cT\bar K\ge1$, $n$ is unbounded on $[0,T]$.
\end{lemma}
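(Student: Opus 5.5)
The plan is to derive a renewal (Duhamel-type) equation for $n$ from the branching property, convert it into a linear integral inequality, and close that inequality with a Gr\"onwall/fixed-point argument. Type-independence comes first. The reproduction rules are symmetric: a white parent produces $K+1$ same-type and $K$ opposite-type children, and a black parent likewise. Lifetimes and marks have the same laws for both types, and positions play no role in counting. So the expected size depends only on the birth time $t$. To avoid manipulating a possibly infinite quantity, I will work with the truncated counts $n_j(t)$, the expected number of particles of generation $\le j$. These are finite by induction on $j$, nondecreasing in $j$, and $n_j\uparrow n$ by monotone convergence.

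Conditioning on the root's lifetime $\tau\sim\mathrm{Exp}(c)$ and mark $K$, and using the branching property of the tree, gives
\[
n_{j}(t)=1+\int_t^T c\,e^{-c(s-t)}\Big[(\bar K+1)\,n_{j-1}(s)+\bar K\,n_{j-1}(T-s)\Big]\dd s ,
\]
and the same identity holds for $n$ itself, possibly with value $+\infty$. If $n$ is finite, differentiating this identity yields $-n'(t)=c+c\bar K\big(n(t)+n(T-t)\big)$ with $n(T)=1$, hence
\[
n(t)=1+c(T-t)+c\bar K\Big(\int_t^T n(s)\,\dd s+\int_0^{T-t}n(u)\,\dd u\Big).
\]
In the truncated setting I will derive the analogous inequality directly: replace $n$ by $n_j$ on the left and by $n_{j-1}$ on the right, and absorb the exponential factors using $e^{-c(s-t)}\le1$ together with the identity $\int_t^T c\,e^{-c(s-t)}\dd s=1-e^{-c(T-t)}$. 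The aim is a bound of the form $n_j(t)-1\le c(T-t)\big(1+\bar K\,\cdots\big)$ with the right-hand side controlled by $\sup n_{j-1}$.

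To close the estimate, set $M_j:=\sup_{[0,T]}n_j<\infty$. Bounding the two integrals above by $(T-t)M_{j-1}$ would cost a factor $2\bar K$ and lead to the wrong condition $2cT\bar K<1$. The key observation is the time reflection: integrating in $t$, the weights combine as $\int_0^T\!\big(\int_t^T n+\int_0^{T-t}n\big)\dd t=\int_0^T\big(s+(T-s)\big)n(s)\,\dd s=T\int_0^T n$. This gives a closed bound on $I:=\int_0^T(n-1)$ with contraction factor exactly $cT\bar K$, namely $I\le cT\,(T/2+\bar KT)/(1-cT\bar K)$. Feeding this back into the pointwise identity, I will try to recover the stated constant $c(T-t)(1+2\bar K)/(1-cT\bar K)$, first bounding $\sup n$ through the contraction and then extracting the factor $(T-t)$ from the length of the integration window. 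I expect this bookkeeping to be the main obstacle: the right choice of supremum-type versus integral-type norm must give the sharp threshold $cT\bar K<1$ rather than $2cT\bar K<1$. If the integral route does not produce the exact constant, I would try the weighted supremum $\sup_t (n(t)-1)/(T-t)$. The resulting bound is uniform in $j$, so letting $j\to\infty$ gives the bound on $n$, and $D<\infty$ a.s. follows because $\E[\#\text{particles}]<\infty$.

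For the converse with $cT\bar K\ge1$, I will argue by contradiction. Suppose $n$ were bounded. Then the exact identity for $n$ holds, and integrating it in $t$ as above gives $I=cT^2/2+c\bar KT\,(I+T)$, where $I=\int_0^T(n-1)$ is now finite. Rearranging yields $(1-cT\bar K)\,I=cT^2/2+c\bar KT^2>0$. This is impossible when $cT\bar K\ge1$ and $I\ge0$, so $n$ must be unbounded on $[0,T]$.
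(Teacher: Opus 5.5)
\textbf{Gap in the forward direction: the truncated inequality.} Your converse is correct. It is a valid variant of the paper's argument. The paper uses $n'(t)=-c\big(1+\bar K S(t)\big)$ to show that $S(t)=n(t)+n(T-t)$ is constant and then solves for $n$; you instead integrate the identity over $[0,T]$ to get $(1-cT\bar K)I=cT^2/2+c\bar KT^2>0$.

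The gap is in the forward direction, at the step you defer to ``bookkeeping''. You propose to obtain
\[
n_j(t)-1\le c(T-t)+c\bar K\Big(\int_t^T n_{j-1}(s)\,\dd s+\int_0^{T-t}n_{j-1}(u)\,\dd u\Big)
\]
by bounding $e^{-c(s-t)}\le 1$. That bound discards the exact cancellation between the exponential killing of the root and its extra same-type child (the $1$ in $1+\bar K$). What it actually yields is the right-hand side above plus $c\int_t^T(n_{j-1}-1)\,\dd s$.

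With this extra term, your integral route only closes under $cT(1+\bar K)<1$, and the weighted-sup route only under $cT(\bar K+\tfrac12)<1$. Moreover, the stated bound is attained: $n$ equals $N(t):=1+c(T-t)(1+2\bar K)/(1-cT\bar K)$ exactly. Hence iterating the lossy inequality can never reproduce the stated constant. As written, the key estimate is not established.

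\textbf{How to repair it.} The inequality you want is true, but it needs a sign argument rather than $e^{-c(s-t)}\le 1$. Differentiating the recursion gives $n_j'(t)=c\big(n_j(t)-1\big)-c\big[(1+\bar K)n_{j-1}(t)+\bar K n_{j-1}(T-t)\big]$, that is,
\[
-n_j'(t)=c+c\bar K\big(n_{j-1}(t)+n_{j-1}(T-t)\big)-c\big(n_j(t)-n_{j-1}(t)\big).
\]
The last term is $\le 0$ because $n_j\ge n_{j-1}$. Then integrate from $t$ to $T$, using $n_j(T)=1$.

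With this, your fallback closes at once. Set $\gamma_j:=\sup_{t<T}(n_j(t)-1)/(T-t)$. The reflection identity $\int_t^T(T-s)\,\dd s+\int_0^{T-t}(T-u)\,\dd u=T(T-t)$ gives $\gamma_j\le c(1+2\bar K)+cT\bar K\,\gamma_{j-1}$ with $\gamma_0=0$. Hence $\gamma_j\le c(1+2\bar K)/(1-cT\bar K)$, which is exactly the stated bound.

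The integral route alone gives only boundedness of $n$, because the windows $[t,T]$ and $[0,T-t]$ overlap for $t<T/2$. You would then still need the symmetry argument to identify $n=N$.

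The paper avoids all of this. It checks directly that $N$ is an exact fixed point of the monotone map $n_j\mapsto n_{j+1}$, so $n_0=1\le N$ propagates to $n_j\le N$ by induction, with no differentiation of $n_j$.
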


\begin{proof}
Let $n_j(t)$ be the expected number of particles of generation at most $j$. The root branches at $s\in[t,T)$ with density $c\,e^{-c(s-t)}$, and then has on average $1+\bar K$ children born at $s$ and $\bar K$ children born at $T-s$. By the branching property, $n_0=1$ and
\[
n_{j+1}(t)=1+\int_t^T c\,e^{-c(s-t)}\big[(1+\bar K)\,n_j(s)+\bar K\,n_j(T-s)\big]\,\dd s .
\]
A direct computation shows that $N(t):=1+c(T-t)(1+2\bar K)/(1-cT\bar K)$ satisfies the same equation with $n_j$ and $n_{j+1}$ both replaced by $N$. As $N\ge1=n_0$ and the right-hand side is nondecreasing in $n_j$, induction gives $n_j\le N$ for all $j$, and $n=\lim_j n_j\le N$ by monotone convergence.

Conversely, if $n$ is bounded, it solves the equation above (monotone convergence), hence is continuously differentiable, with $n'(t)=c\,(n(t)-1)-c\big[(1+\bar K)n(t)+\bar K\,n(T-t)\big]=-c\big(1+\bar K S(t)\big)$, where $S(t):=n(t)+n(T-t)$. Then $S'(t)=n'(t)-n'(T-t)=0$ because $S(T-t)=S(t)$, so $S$ is a constant, $n(t)=1+c(1+\bar K S)(T-t)$ since $n(T)=1$, and evaluating $S=n(t)+n(T-t)$ gives $S\,(1-cT\bar K)=2+cT>0$. This is impossible when $cT\bar K\ge1$.
\end{proof}

\begin{lemma}[Time-reflected majorant system]\label{lem:majorant}
Let $\alpha_0,\beta_0>0$, $q_k\ge0$ for $k=0,\dots,K$, and $Q(r):=\sum_k q_k r^k$. Consider, for bounded nonnegative functions $U,\bar U$ on $[0,T]$, the system
\begin{equation}\label{eq:majorant-system}
\begin{aligned}
U(t)&=\alpha_0e^{-c(T-t)}+\int_t^T c\,e^{-c(s-t)}\textstyle\sum_k q_k\,U(s)^{k+1}\,\bar U(T-s)^{k}\,\dd s,\\
\bar U(t)&=\beta_0e^{-c(T-t)}+\int_t^T c\,e^{-c(s-t)}\textstyle\sum_k q_k\,U(T-s)^{k}\,\bar U(s)^{k+1}\,\dd s .
\end{aligned}
\end{equation}
\begin{itemize}
\item[(a)] $(U,\bar U)$ solves \eqref{eq:majorant-system} if and only if $U(t)=\alpha_0e^{c(T-t)(Q(r)-1)}$ and $\bar U(t)=\beta_0e^{c(T-t)(Q(r)-1)}$ for some $r>0$ such that
\begin{equation}\label{eq:fixed-point}
r=\Psi(r):=\alpha_0\beta_0\exp\big(cT\,(Q(r)-1)\big);
\end{equation}
then $U(t)\,\bar U(T-t)=r$ for all $t\in[0,T]$.
\item[(b)] \eqref{eq:fixed-point} has a root if and only if $\Psi(\bar r)\le\bar r$ for some $\bar r>0$. The smallest root $r_*$ is then the limit of the nondecreasing sequence $r_0=0$, $r_{j+1}=\Psi(r_j)$.
\item[(c)] Let $M$ be the functional of the tree obtained by replacing every branching factor by $q_K/p_K$ and every white (resp.\ black) leaf contribution by $\alpha_0$ (resp.\ $\beta_0$), and let $u(t):=\E[M\,\mathbf 1_{\{D<\infty\}}]$ and $\bar u(t)$ be its expectations for a white and a black root born at $t$. If \eqref{eq:fixed-point} has a root, $(u,\bar u)$ is the solution of (a) with $r=r_*$. Otherwise, $u$ or $\bar u$ is unbounded.
\end{itemize}
\end{lemma}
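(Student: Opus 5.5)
For (a), I will differentiate \eqref{eq:majorant-system} in $t$. Any bounded solution is continuous, hence $C^1$ by the integral form. It satisfies
\[
U'(t)=cU(t)-c\sum_k q_kU(t)^{k+1}\bar U(T-t)^k=-cU(t)\big(Q(R(t))-1\big),
\]
where $R(t):=U(t)\bar U(T-t)$, with terminal value $U(T)=\alpha_0$. Symmetrically,
\[
\bar U'(t)=-c\bar U(t)\big(Q(\bar U(t)U(T-t))-1\big).
\]
The key observation is that $R$ is constant. Writing $\tilde R(t):=\bar U(t)U(T-t)=R(T-t)$, we get
\[
R'(t)=U'(t)\bar U(T-t)-U(t)\bar U'(T-t)=-cR\big(Q(R(t))-1\big)+cR\big(Q(\tilde R(T-t))-1\big)=0,
\]
because $\tilde R(T-t)=R(t)$. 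Hence $R\equiv r$. The linear ODEs then integrate to the stated exponentials, and evaluating $r=U(0)\bar U(T)=\alpha_0\beta_0e^{cT(Q(r)-1)}$ gives \eqref{eq:fixed-point}. Conversely, I plug the exponential ansatz back in. The integrand becomes $q_k$-weighted powers of $r$ times $U(s)$, and the Duhamel integral is explicit, so the check is routine. Positivity $r>0$ holds since $\alpha_0,\beta_0>0$.

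For (b), $\Psi$ is continuous and nondecreasing on $[0,\infty)$ with $\Psi(0)>0$. If $\Psi(\bar r)\le\bar r$, I induct to get $r_j\le\bar r$ and $r_j\le r_{j+1}$. The sequence therefore converges to a fixed point $r_*$, and $r_*$ is below any root by the same induction. Conversely, a root trivially gives such an $\bar r$.

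Part (c) is the main step. I truncate by generation: let $u_j,\bar u_j$ be the expectations of $M$ restricted to trees with $D\le j$. By the branching property, conditioning on the root's lifetime and mark, $(u_{j+1},\bar u_{j+1})$ equals the right-hand side of \eqref{eq:majorant-system} evaluated at $(u_j,\bar u_j)$. Here the factor $p_K$ cancels $q_K/p_K$, $1+K$ same-type children born at $s$ give $U(s)^{k+1}$, and $K$ opposite children born at $T-s$ give $\bar U(T-s)^k$. The leaf term is $\alpha_0e^{-c(T-t)}$. Since all quantities are nonnegative and the map is monotone, $u_j\uparrow u$ by monotone convergence.

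I will show by induction that $(u_j,\bar u_j)\le(U,\bar U)$ with $r=r_*$. The base case $j=0$ (a leaf root) holds since $U\ge\alpha_0e^{-c(T-t)}$. This gives a bounded limit, which solves the system by monotone convergence and so, by (a), is the exponential solution for some root $r\le r_*$; minimality of $r_*$ forces $r=r_*$. The delicate point is comparing $u(t)\bar u(T-t)$ with $r_*$. I expect the main obstacle to be making this rigorous, and I plan to handle it by directly tracking $\rho_j:=\sup_t u_j(t)\bar u_j(T-t)$. I will compare this with the scalar iteration $r_j$ of (b), bounding $u_{j+1}$ through Gronwall in terms of $Q(\rho_j)$. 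If no root exists, suppose both $u$ and $\bar u$ are bounded. They then solve the system, and (a) yields a root, which is a contradiction; so $u$ or $\bar u$ is unbounded.
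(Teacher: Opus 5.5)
Your proposal is correct and follows the paper's proof essentially step for step. For (a) you differentiate to show $U(t)\bar U(T-t)$ is constant; for (b) you use the monotone scalar iteration; and for (c) you take the generation-truncated recursion, in which $(u_{j+1},\bar u_{j+1})$ is the right-hand side of the majorant system at $(u_j,\bar u_j)$, dominate it inductively by the $r_*$-solution $(U_*,\bar U_*)$, and conclude by monotone convergence, part (a) and minimality of $r_*$.

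The ``delicate point'' you flag is in fact immediate from that domination: $r=u(t)\,\bar u(T-t)\le U_*(t)\,\bar U_*(T-t)=r_*$, and the paper simply writes $r=u(0)\beta_0\le U_*(0)\beta_0=r_*$. So the planned $\rho_j$/Gronwall comparison with the scalar iteration is unnecessary.
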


\begin{proof}
(a) Let $(U,\bar U)$ be a bounded nonnegative solution. The right-hand sides are continuous in $t$, hence so are $U,\bar U$, and then they are $C^1$. Since $\sum_kq_kU^{k+1}\bar U_{T-t}^{k}=U\,Q(U\bar U_{T-t})$, differentiation gives
\[
U'(t)=c\,U(t)\big[1-Q\big(U(t)\bar U(T-t)\big)\big],\qquad
\bar U'(t)=c\,\bar U(t)\big[1-Q\big(U(T-t)\bar U(t)\big)\big].
\]
For $R(t):=U(t)\bar U(T-t)$, this yields $R'(t)=U'(t)\bar U(T-t)-U(t)\bar U'(T-t)=c\,R\,[1-Q(R)]-c\,R\,[1-Q(R)]=0$. Hence $R\equiv r$, $U'=c(1-Q(r))\,U$ with $U(T)=\alpha_0$, so $U(t)=\alpha_0e^{c(T-t)(Q(r)-1)}$; similarly for $\bar U$; and $r=U(0)\bar U(T)=\alpha_0\beta_0e^{cT(Q(r)-1)}$. Conversely, if $r>0$ solves \eqref{eq:fixed-point}, the functions above satisfy $U(t)\bar U(T-t)=\alpha_0\beta_0e^{cT(Q(r)-1)}=r$, hence the linear equation $U'-cU=-c\sum_kq_kU^{k+1}\bar U_{T-t}^k$ with $U(T)=\alpha_0$, whose variation-of-constants formula is the first line of \eqref{eq:majorant-system}; the second line is obtained in the same way.

(b) $\Psi$ is continuous and nondecreasing, with $\Psi(0)>0$. If $\Psi(\bar r)\le\bar r$, induction gives $r_j\le r_{j+1}\le\bar r$, so $r_j$ converges to a root $r_*\le\bar r$; any root $r$ satisfies $r_j\le r$ for all $j$, hence $r_*\le r$. The converse is obvious.

(c) Let $u_j(t):=\E[M\,\mathbf 1_{\{D\le j\}}]$ and $\bar u_j(t)$ be the analogous quantities. The root is a leaf with probability $e^{-c(T-t)}$, and $\{D\le j+1\}$ means that all the subtrees of its children have depth at most $j$. Hence, by the branching property, $u_0(t)=\alpha_0e^{-c(T-t)}$, $\bar u_0(t)=\beta_0e^{-c(T-t)}$, and $(u_{j+1},\bar u_{j+1})$ is the right-hand side of \eqref{eq:majorant-system} evaluated at $(u_j,\bar u_j)$; note that $p_k\cdot q_k/p_k=q_k$. If a root exists, let $(U_*,\bar U_*)$ be the solution of (a) with $r=r_*$. Since $U_*\ge u_0$, $\bar U_*\ge\bar u_0$ and the right-hand side is nondecreasing, $u_j\le U_*$ and $\bar u_j\le\bar U_*$ for all $j$. By monotone convergence, $(u,\bar u)=\lim_j(u_j,\bar u_j)$ is a bounded solution of \eqref{eq:majorant-system}, so by (a) it is associated with a root $r=u(0)\,\beta_0\le U_*(0)\,\beta_0=r_*$; hence $r=r_*$ and $(u,\bar u)=(U_*,\bar U_*)$. If no root exists, $(u,\bar u)$ cannot be bounded, since it would then solve \eqref{eq:majorant-system}.
\end{proof}

\begin{proof}[Proof of Proposition~\ref{prop:phi-branch-lamc}]
\emph{Step 1: finiteness.} By (i) and Lemma~\ref{lem:tree-size}, $D<\infty$ a.s., so that $\Phi^\circ_{t,x}$ and $\Phi^\bullet_{t,x}$ are well defined.

\emph{Step 2: square integrability.} Let $M_1$ be the functional of Lemma~\ref{lem:majorant}(c) with $q_k=b_k$, $\alpha_0=A$ and $\beta_0=B$, and $M_1^\circ$, $M_1^\bullet$ its values for a white, resp.\ black, root born at $t$. Since $|a_k(s^\circ,z)|\le b_k$, $|\varphi_T|\le A$ and $|\bar\varphi_T|\le B$, we have $|\Phi|\le M_1$ pathwise. Moreover, $M_1^2$ is the functional of Lemma~\ref{lem:majorant}(c) with $q_k=b_k^2/p_k$, $\alpha_0=A^2$ and $\beta_0=B^2$, because $(b_K/p_K)^2=(b_K^2/p_K)/p_K$. Condition (ii) is $\Psi(r)\le r$ for this choice, so Lemma~\ref{lem:majorant}(b)-(c) gives $\E[\Phi^2]\le\E[M_1^2]=U_2(t)<\infty$, where $U_2$ is the corresponding bounded solution: $\Phi\in L^2$. By Cauchy-Schwarz, $\E[M_1]\le\sqrt{U_2(t)}$ is bounded on $[0,T]$, so by Lemma~\ref{lem:majorant}(c) the equation $r=AB\exp\big(cT(\sum_k b_kr^k-1)\big)$ has a smallest root $r_1$, and
\begin{equation}\label{eq:majorant-L1}
\begin{aligned}
\E\big[|\Phi^\circ_{t,x}|\big]&\le\E[M^\circ_1]=U_1(t):=A\,e^{c(T-t)(\sum_k b_kr_1^k-1)},\\
\E\big[|\Phi^\bullet_{t,x}|\big]&\le\E[M^\bullet_1]=\bar U_1(t):=B\,e^{c(T-t)(\sum_k b_kr_1^k-1)} .
\end{aligned}
\end{equation}

\emph{Step 3: mild solution.} Let $u(t,x):=\E[\Phi^\circ_{t,x}]$ and $\bar u(t,x):=\E[\Phi^\bullet_{t,x}]$. Consider a white root born at $(t,x)$, with lifetime $\tau$, mark $K$ and path $x+W$. On $\{\tau\ge T-t\}$, which has probability $e^{-c(T-t)}$, $\Phi=\varphi_T(x+W_{T-t})$. On $\{\tau<T-t\}$, with $s:=t+\tau$ and $z:=x+W_\tau$, the branching property gives
\[
\E\big[\Phi\,\big|\,\tau,K,W\big]=\frac{a_K(s,z)}{p_K}\,u(s,z)^{K+1}\,\bar u(T-s,z)^{K}.
\]
The bound $|\Phi|\le M_1$ justifies Fubini's theorem. Averaging over $K\sim(p_k)$, over $W$, and over $\tau$ with density $c\,e^{-cr}$ on $[0,T-t)$ yields the first line of \eqref{eq:mild}. The second line follows in the same way, the factor of a black root branching at $s$ being $a_K(T-s,z)/p_K$. Hence $(u,\bar u)$ is a mild solution, and $|u|\le U_1$, $|\bar u|\le\bar U_1$ by \eqref{eq:majorant-L1}.

\emph{Step 4: uniqueness in the dominated class.} Let $(\varphi,\bar\varphi)$ be a mild solution with $|\varphi(t,x)|\le U_1(t)$ and $|\bar\varphi(t,x)|\le\bar U_1(t)$. For $j\ge0$, let $\Phi_j$ be the functional of the tree stopped at generation $j$: particles of generation $j$ are not evolved and contribute $\varphi$ (white) or $\bar\varphi$ (black), evaluated at their birth time and position. Iterating \eqref{eq:mild} $j$ times, with the branching property as in Step~3, gives $\varphi(t,x)=\E[\Phi_j]$ for a white root born at $(t,x)$. On $\{D<j\}$ no particle of generation $j$ exists, so that $\Phi_j=\Phi$. Let $M_{1,j}$ be $M_1$ stopped at generation $j$ in the same way, with contributions $U_1$, $\bar U_1$ for the particles of generation $j$; then $|\Phi_j|\le M_{1,j}$, and $\E[M_{1,j}]=U_1(t)$ because $(U_1,\bar U_1)$ is a fixed point of \eqref{eq:majorant-system} with $q_k=b_k$. Therefore
\[
\big|\varphi(t,x)-\E\big[\Phi\,\mathbf 1_{\{D<j\}}\big]\big|
\le\E\big[M_{1,j}\,\mathbf 1_{\{D\ge j\}}\big]
=U_1(t)-\E\big[M_1\,\mathbf 1_{\{D<j\}}\big]
\xrightarrow[j\to\infty]{}U_1(t)-\E[M_1]=0,
\]
by monotone convergence, Step~1 and \eqref{eq:majorant-L1}. Since $\E[\Phi\,\mathbf 1_{\{D<j\}}]\to\E[\Phi]=u(t,x)$ by dominated convergence, $\varphi=u$; similarly $\bar\varphi=\bar u$.
\end{proof}

\begin{remark}[Checking (i)-(ii) in practice]\label{rmk:branching-conditions}
(a) \emph{Linear cases} ($F\equiv0$, or $F=V$ as in Appendix~\ref{sec:potential-running-cost}): $K=0$, so (i) holds trivially, and $Q$ is constant, so (ii) holds with $r=A^2B^2e^{cT(b_0^2/p_0-1)}$, for every horizon $T$. This covers all the experiments of Section~\ref{sec:experiments} except the Gross-Pitaevskii ones, as soon as the trained potentials are bounded.

(b) \emph{Gross-Pitaevskii coupling} $F(m)=\kappa m$, i.e.\ $a_0=1$ and $a_1=-\kappa$ in \eqref{canonical_form}: (i) reads $cTp_1<1$. Since $r=\alpha e^{\beta r}$ has a positive root if and only if $e\,\alpha\beta\le1$, (ii) reads
\[
e\,cT\,\frac{\kappa^2}{p_1}\,A^2B^2\,e^{cT(1/p_0-1)}\le1 .
\]
In the Gross-Pitaevskii experiments of Section~\ref{sec:experiments} ($c\le0.3$, with the horizon $T=2$ of Appendix~\ref{app:implementation-details}), (i) holds whatever $p_1$, since $cTp_1\le0.6$. Condition (ii) involves the sup norms $A$, $B$ of the trained potentials and has to be checked a posteriori; when it fails, the trees remain finite by (i), but the estimator may have infinite variance.

(c) \emph{The conditions cannot be dropped.} If $a_k\ge0$ and $\varphi_T\equiv A$, $\bar\varphi_T\equiv B$ are constants, then $\Phi=M_1$, and by Lemma~\ref{lem:majorant}(c) $t\mapsto\E[\Phi^\circ_{t,x}]$ or $t\mapsto\E[\Phi^\bullet_{t,x}]$ is unbounded as soon as $r=AB\exp\big(cT(\sum_ka_kr^k-1)\big)$ has no root, although all the data are bounded. For instance, for $a=(0,1)$, $p=(\frac12,\frac12)$ and $c=T=1$, a root exists if and only if $AB\le1$.
\end{remark}

\section{Feynman-Galton-Watson diagrams} \label{feynman-galton-watson-diagrams}
Each realization of the branching system is a Galton-Watson tree: lines correspond to Brownian trajectories between branching events, and vertices represent branchings. 
This is directly analogous to tree Feynman diagrams: dashed lines depict white particles (\(\varphi\)), solid lines depict black particles (\(\bar\varphi\)), and trivalent vertices encode the cubic nonlinearity. 
For the cubic nonlinearity $F_p(m)=m$ in \eqref{canonical_form} (i.e.\ $a_0=0$, $a_1=1$, and we take $p_1=1$), the only branching is \(k=1\):
\[
\circ \;\to\; 2\,\circ + 1\,\bullet, 
\qquad 
\bullet \;\to\; 1\,\circ + 2\,\bullet,
\]
so each vertex corresponds to one incoming line splitting into three outgoing lines.  
Iterating these vertices yields a perturbative expansion in the coupling constant \(c\): a diagram with \(n\) branchings contributes a weight of order \(c^n\).  

\no Importantly, if we truncate the Galton-Watson expansion to order $0$ (no branching), the dynamics are given only by free Brownian propagation. 
In this case, the system reduces to the \emph{classical Schr\"odinger bridge}, and the PDEs collapse to the linear forward-backward heat equations. 
Higher-order branchings then represent nonlinear corrections beyond this baseline. Appendix~\ref{app:branching-sanity-toy} gives a scalar toy sanity check of the branching scheme for the $T{-}t$ coupling in \eqref{PDE1}-\eqref{PDE2}.
\begin{figure}[H]
\centering
\begin{tikzpicture}[x=1.1cm,y=1.1cm]

\begin{scope}[shift={(0,0)}]
  \draw[whitep] (0,0) -- (3,0);
  \node[lbl,above] at (1.5,0.6) {$c^0$};

  \draw[->] (0,-1.2) -- (3.2,-1.2) node[right] {$t$};
  \draw (0,-1.3) -- (0,-1.1);
  \node[below=3pt] at (0,-1.3) {$0$};
  \draw (3,-1.3) -- (3,-1.1);
  \node[below=3pt] at (3,-1.3) {$T$};
\end{scope}

\node at (3.45,0) {$+$};

\begin{scope}[shift={(3.82,0)}]
  \draw[whitep] (0,0) -- (1.4,0);
  \node[vtx] (s1) at (1.4,0) {};

  \draw[whitep] (s1) -- (3,0.5);
  \draw[whitep] (s1) -- (3,-0.5);

  \node[vtx] (Ts1) at (2.0,0) {};
  \draw[blackp] (Ts1) -- (3,0);

  \node[lbl,above] at (1.9,0.8) {$c^1$};

  \draw[->] (0,-1.2) -- (3.2,-1.2) node[right] {$t$};
  \draw (0,-1.3) -- (0,-1.1);
  \node[below=3pt] at (0,-1.3) {$0$};
  \draw (3,-1.3) -- (3,-1.1);
  \node[below=3pt] at (3,-1.3) {$T$};

  \draw[dashed] (1.4,-1.2) -- (1.4,0);
  \node[lbl,below=1pt] at (1.4,-1.2) {$\tiny s$};

  \draw[dashed] (2.0,-1.2) -- (2.0,0);
  \node[lbl,below=1pt] at (2.0,-1.2) {$\tiny T-s$};
\end{scope}

\node at (7.18,0) {$+$};

\begin{scope}[shift={(7.48,0)}]
  \draw[whitep] (0,0) -- (1.4,0);
  \node[vtx] (WsOne) at (1.4,0) {};

  \draw[whitep] (WsOne) -- (2.4,0.6);  
  \draw[whitep] (WsOne) -- (3,-0.5);

  \node[vtx] (BsOne) at (2.0,0) {};
  \draw[blackp] (BsOne) -- (3,0.0);

  \node[vtx] (WsTwo) at (2.4,0.6) {};

  \draw[whitep] (WsTwo) -- (3,0.8);
  \draw[whitep] (WsTwo) -- (3,0.4);

  \node[vtx] (BsTwo) at (0.6,0.6) {};
  \draw[blackp] (BsTwo) -- (3,1.0);

  \node[lbl,above] at (2.0,1.3) {$c^2$};

  \draw[->] (0,-1.2) -- (3.2,-1.2) node[right] {$t$};
  \draw (0,-1.3) -- (0,-1.1);
  \node[below=3pt] at (0,-1.3) {$0$};
  \draw (3,-1.3) -- (3,-1.1);
  \node[below=3pt] at (3,-1.3) {$T$};

  \draw[dashed] (1.4,-1.2) -- (1.4,1.0);
  \node[lbl,below=1pt] at (1.4,-1.2) {$\tiny s_1$};

  \draw[dashed] (2.0,-1.2) -- (2.0,1.0);
  \node[lbl,below=1pt] at (2.0,-1.2) {$\tiny T\!-\!s_1$};

  \draw[dashed] (2.4,-1.2) -- (2.4,1.0);
  \node[lbl,below=10pt] at (2.4,-1.2) {$\tiny s_2$};

  \draw[dashed] (0.6,-1.2) -- (0.6,1.0);
  \node[lbl,below=10pt] at (0.6,-1.2) {$\tiny T\!-\!s_2$};
\end{scope}

\end{tikzpicture}

\caption{
First three contributions in the perturbative expansion (Gross-Pitaevskii cubic branching).  
Dashed lines represent white particles ($\varphi$), solid lines represent black particles ($\bar\varphi$). Weights with $a_1=p_1=1$:  
order $c^0$, no branching, i.e.\ classical Schr\"odinger bridge / linear heat kernel;  
order $c^1$ with one branching at $s$ (white offspring starting at $s$, black offspring at $T-s$);  
order $c^2$ with two branchings at $s_1,s_2$ : offspring of the same type as the parent start at $s_i$, while offspring of the other type start at $T-s_i$.  
The horizontal time axes (all at the same vertical level) show time from $0$ to $T$, with dashed vertical lines indicating the splitting times.
}
\end{figure}

\section{Implementation details}
\label{app:implementation-details}

Training and sampling both rely on stochastic quantities built from branching trees and on differentiating $\log \widehat{\varphi}(t,x)$ with respect to $x$ (Algorithm~\ref{alg:meta-sample-ssb}).
When coordinates have very different scales or heavy tails, (i) products of terminal factors in the branching estimator can span many orders of magnitude, inflating Monte Carlo variance, and (ii) autograd through $\log \varphi_\theta$ and the reweighted score becomes ill-conditioned relative to the isotropic unit-variance reference noise in the SDE.
We therefore normalize inputs before bridge estimation, using one of the two mechanisms below depending on dimensionality and data modality.

\subsection{Covariance whitening in data space}

Let $X_0\sim\mu_0$ and $X_T\sim\mu_T$. We define
\begin{align*}
\hat{X}_0 &:= C_0^{-\frac{1}{2}} \bigl(X_0 - \mathbb{E}^{\mu_0}[X_0]\bigr), \quad C_0 := \mathrm{Cov}(\mu_0), \\[0.5em]
\hat{X}_T &:= \sigma \, C_T^{-\frac{1}{2}} \bigl(X_T - \mathbb{E}^{\mu_T}[X_T]\bigr), \quad C_T := \mathrm{Cov}(\mu_T), \quad \sigma^2 = 2,
\end{align*}
train the bridge on $\hat{\mu}_0 := \mathrm{Law}(\hat{X}_0)$ and $\hat{\mu}_T := \mathrm{Law}(\hat{X}_T)$, and map generated samples back to the original coordinates by the inverse affine map.
If $\mu_0$ and $\mu_T$ are Gaussian, then $\hat{\mu}_0\sim\mathcal{N}(0,I)$, $\hat{\mu}_T\sim\mathcal{N}(0,\sigma^2 I)$ and $\hat{\mu}_T=\hat{\mu}_0\star\mathcal{N}(0,I)$, which matches the classical Schr\"odinger bridge with $(\varphi_{\mathrm{NN}},\bar{\varphi}_{\mathrm{NN}})=(0,0)$ at horizon $T=1$.
In practice we set $T=2$ to retain nontrivial interpolation; when $\mu_T$ is close to Gaussian, initializing $(\varphi_{\mathrm{NN}},\bar{\varphi}_{\mathrm{NN}})=(0,0)$ is already near optimal with little additional training.

\subsection{Latent representations for imaging}

For high-dimensional images, empirical covariances are singular or badly conditioned (e.g.\ near-constant background pixels), so $C_T^{-1/2}$ is unstable and pixel-space whitening is inappropriate.
We instead encode images with a pretrained VAE (MNIST) or ALAE (FFHQ), yielding low-dimensional codes that are approximately standardized and better aligned with the isotropic diffusion reference; the bridge is trained in that latent space and decoded for visualization.

\section{Results details}
\label{app:results-details}

\subsection{2D synthetic experiments}

We adopt synthetic two-dimensional distributions commonly used in the literature \citep{bortoli2021diffusion}:  
\begin{itemize}
  \item \textbf{$\mu_0:=$ Gaussian $\to$ $\mu_T:=$ Mixture of Gaussians.}  
  The source $\mu_0$ is a centered Gaussian with variance $\sigma^2 I$ ($\sigma=5$), and the target $\mu_T$ is a mixture of $8$ Gaussians. 
  \item \textbf{$\mu_0:=$ Moon curves $\to$ $\mu_T:=$ Swiss curves.}  
  The source $\mu_0$ is a composition of two Moon-shaped distributions, and the target $\mu_T$ is a Swiss curve.
\end{itemize}  

\no The recovered Schr\"odinger bridges for these two cases are shown in Figure~\ref{fig:generated_vs_true_gaussians}.  
The temporal evolution of the interpolating flow illustrates how the source progressively deforms into the target (Figures~\ref{fig:evolution_strip}).  

\no Training dynamics are summarized in Figure~\ref{fig:generation_evolution_grid}, see Appendix \ref{additional-figures}.  As the number of epochs increases, the generated terminal distribution $\mathrm{Law}(X_T)$ gradually converges toward the target law $\mu_T$ (Figure~\ref{fig:generation_evolution_grid}).  
At the same time, the potentials $(\varphi_\theta,\bar\varphi_\eta)$ sharpen as training progresses, reflecting improved discrimination between source and target (Figures~\ref{fig:phi_evolution_grid} and \ref{fig:phi_bar_evolution_grid}, see Appendix \ref{additional-figures}).  

\begin{figure}[H]
  \centering
  \begin{subfigure}{0.48\textwidth}
    \centering
    \includegraphics[width=\linewidth]{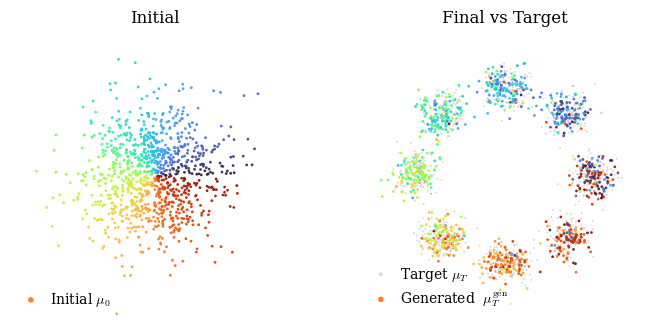}
    \caption{Gaussian $\to$ mixture of Gaussians.}
    \label{fig:gaussian_to_mixture}
  \end{subfigure}\hfill
  \begin{subfigure}{0.48\textwidth}
    \centering
    \includegraphics[width=\linewidth]{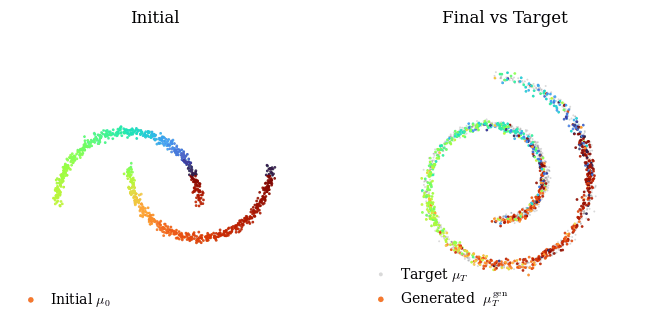}
    \caption{Moon curves $\to$ Swiss curve.}
    \label{fig:moon_to_swiss}
  \end{subfigure}
  \caption{Classical Schr\"odinger bridges in $\R^2$.}
  \label{fig:generated_vs_true_gaussians}
\end{figure}

\begin{figure}[H]
  \centering
  \includegraphics[width=\textwidth]{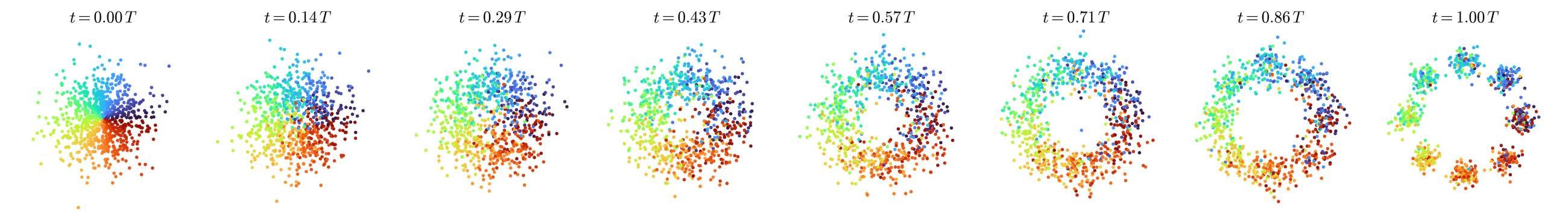}
  \caption{Extended view of the temporal evolution of $\mathrm{Law}(X_t)$ (Moons $\to$ Swiss).}
  \label{fig:evolution_strip_2}
  \end{figure}

  \subsection{Comparison on the 2D benchmark}
  \label{app:comparison-2d}
  
  We compare SBNL with LightSB \citep{korotin2023light}, LightSBB-M \citep{alouadi2026lightsbbmbridgingschrodingerbass}, DSBM \citep{shi2023diffusion} and BranchSBM \citep{tang2026branchedschrodingerbridgematching} on the Gaussian $\to$ mixture of $8$ Gaussians benchmark from Appendix~\ref{app:results-details}. We report (i) terminal samples for each method (Figure~\ref{fig:comparison_2D_samples_full}), (ii) the closed-form mixture $W_2$ distance between the generated terminal law and $\mu_T$ (Figure~\ref{fig:comparison_MW2_full}), and (iii) the per-mode coverage of the $8$ components (Figure~\ref{fig:comparison_per_mode_coverage_full}).
  
  \subsubsection{Computation of the comparison metrics}
  \label{app:comparison-metrics}
  
  Let $\hat\mu_T$ denote the empirical distribution of $N=10^4$ generated terminal samples and $\mu_T$ the target mixture of $8$ isotropic Gaussians in $\R^2$.
  
  \paragraph{Mixture Wasserstein $MW_2$.}
  We first fit a Gaussian mixture with $K=8$ components to the generated samples by expectation--maximization (sklearn's \texttt{GaussianMixture}, full covariances, restarted $5$ times, best log-likelihood kept), obtaining $\hat p = \sum_{k=1}^K \hat w_k \, \mathcal N(\hat m_k,\hat \Sigma_k)$. We compare it to the known target mixture $p^\star = \sum_{k=1}^K w_k^\star \, \mathcal N(m_k^\star,\Sigma_k^\star)$ using the \emph{mixture Wasserstein} distance \citep{delon2020wasserstein}
  \[
  MW_2^2(\hat p, p^\star) \;=\; \min_{\pi \in \Pi(\hat w, w^\star)} \; \sum_{k,\ell} \pi_{k\ell} \, W_2^2\!\bigl(\mathcal N(\hat m_k,\hat \Sigma_k),\, \mathcal N(m_\ell^\star,\Sigma_\ell^\star)\bigr),
  \]
  where the $W_2$ distance between Gaussians has the closed form
  $W_2^2(\mathcal N(m_1,\Sigma_1),\mathcal N(m_2,\Sigma_2)) = \|m_1-m_2\|^2 + \Tr\!\bigl(\Sigma_1+\Sigma_2 - 2(\Sigma_1^{1/2}\Sigma_2\Sigma_1^{1/2})^{1/2}\bigr)$, so $MW_2$ reduces to a small linear program on the $K \times K$ cost matrix and is computed in closed form.
  
  \paragraph{Sliced Wasserstein $SW_2$.}
  We use the standard Monte Carlo estimator of the sliced $2$-Wasserstein distance,
  \[
  SW_2^2(\hat\mu_T,\mu_T) \;=\; \frac{1}{L}\sum_{\ell=1}^{L} W_2^2\!\bigl(P_{\theta_\ell\sharp}\hat\mu_T,\, P_{\theta_\ell\sharp}\mu_T\bigr),
  \]
  where $\theta_\ell$ are $L=200$ directions drawn uniformly on $\mathbb S^{1}$ and $P_\theta x = \theta^\top x$; each $1$D $W_2^2$ is evaluated exactly on the sorted projected samples (with $10^4$ samples for both $\hat\mu_T$ and $\mu_T$).
  
  \paragraph{Empirical $W_2^{\text{emp}}$.}
  We solve the discrete optimal transport between $\hat\mu_T$ and an i.i.d.\ sample of size $10^4$ from $\mu_T$ using the POT library (exact LP solver on the squared Euclidean cost), reporting $W_2^{\text{emp}} = (\sum_{ij} \pi_{ij}^\star \|x_i - y_j\|^2)^{1/2}$.
  
  \paragraph{Energy distance $E$.}
  We use the kernel-based energy distance \citep{szekely2004testing}, estimated by U-statistics on the same $10^4$ samples from $\hat\mu_T$ and $\mu_T$:
  \[
  E^2(\hat\mu_T,\mu_T) \;=\; 2\,\mathbb E\|X-Y\| - \mathbb E\|X-X'\| - \mathbb E\|Y-Y'\|, \quad X,X'\sim\hat\mu_T,\; Y,Y'\sim\mu_T.
  \]
  $E \ge 0$, with $E=0$ iff the two distributions coincide.
  
  \paragraph{Moment and coverage diagnostics.}
  The mean error $|\Delta\mu|=\|\mathbb E_{\hat\mu_T}[X]-\mathbb E_{\mu_T}[X]\|$ and the covariance error $\|\Delta\Sigma\|_F = \|\mathrm{Cov}(\hat\mu_T)-\mathrm{Cov}(\mu_T)\|_F$ are computed from empirical moments. The per-mode coverage assigns each generated sample to its nearest component center in $\mu_T$ and reports the resulting empirical frequencies (Figure~\ref{fig:comparison_per_mode_coverage_full}); the uniform reference is $1/K = 1/8$.

  \paragraph{Confidence intervals.}
  Each metric is a random variable of the generated sample, so every value reported in Table~\ref{tab:comparison_2D_metrics} and Figures~\ref{fig:comparison_MW2_full}-\ref{fig:comparison_W2emp_full} is accompanied by a bootstrap confidence interval. 
  For every method and every metric we draw $B=5$ resamples with replacement of both the prediction tensor (size $N=10^4$) and a freshly drawn reference sample from $\mu_T$, then recompute the entire metric pipeline on each replica-including refitting the $8$-component GMM with a fresh seed for $MW_2$, so that variability from the GMM fit is also captured. 
  The reported point estimate is the mean across replicas and the $\pm$ value is the sample standard deviation (with Bessel correction). 
  For the empirical Wasserstein distance, the $10^4 \times 10^4$ EMD is too heavy inside the bootstrap loop, so we down-sample to $2000$ points per side in each replica (hence the $W_2^{\text{emp}}$ label in Figure~\ref{fig:comparison_W2emp_full}). 
  Per-mode coverage error bars in Figure~\ref{fig:comparison_per_mode_coverage_full} are obtained the same way. 
  Each method is evaluated from a single trained checkpoint, so the reported intervals reflect \emph{sample/estimator} uncertainty rather than training-seed variability.

  \begin{figure}[H]
  \centering
  \includegraphics[width=\textwidth]{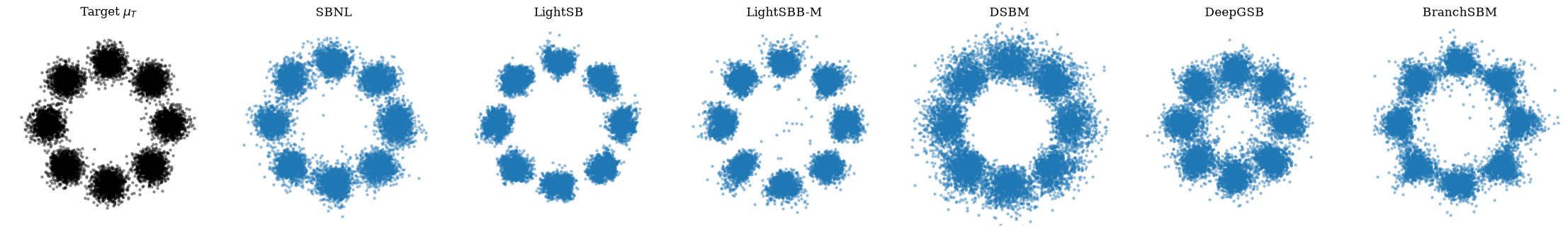}
  \caption{Terminal samples on Gaussian $\to$ mixture of $8$ Gaussians for each method.}
  \label{fig:comparison_2D_samples_full}
  \end{figure}
  

  \begin{figure}[H]
  \centering
  \begin{subfigure}[t]{0.27\textwidth}
    \centering
    \includegraphics[width=\linewidth]{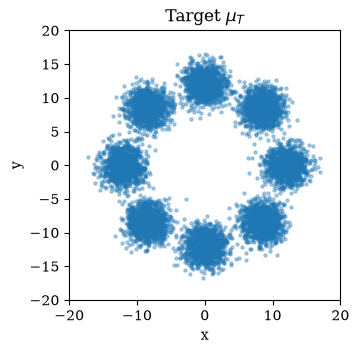}
    \caption{Target $\mu_T$}
    \label{fig:comparison_alone_target}
  \end{subfigure}\hfill
  \begin{subfigure}[t]{0.27\textwidth}
    \centering
    \includegraphics[width=\linewidth]{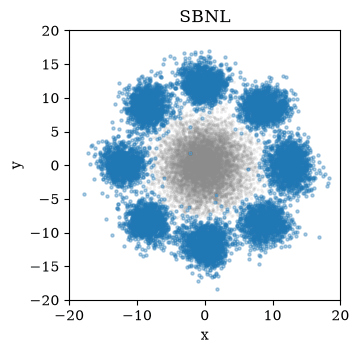}
    \caption{SBNL}
    \label{fig:comparison_alone_sbnl}
  \end{subfigure}\hfill
  \begin{subfigure}[t]{0.27\textwidth}
    \centering
    \includegraphics[width=\linewidth]{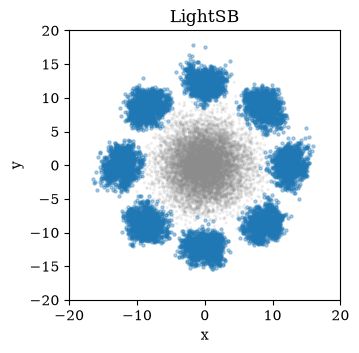}
    \caption{LightSB}
    \label{fig:comparison_alone_lightsb}
  \end{subfigure}\\[0.5em]
  \begin{subfigure}[t]{0.27\textwidth}
    \centering
    \includegraphics[width=\linewidth]{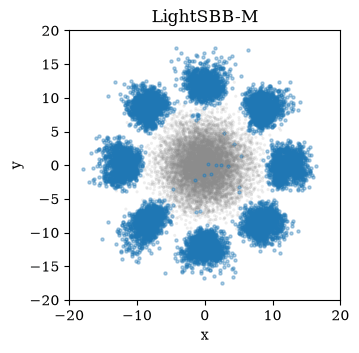}
    \caption{LightSBB-M}
    \label{fig:comparison_alone_lightsb_m}
  \end{subfigure}\hfill
  \begin{subfigure}[t]{0.27\textwidth}
    \centering
    \includegraphics[width=\linewidth]{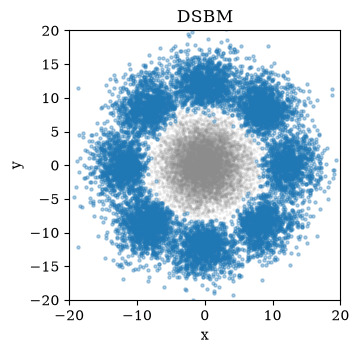}
    \caption{DSBM}
    \label{fig:comparison_alone_dsbm}
  \end{subfigure}\hfill
  \begin{subfigure}[t]{0.27\textwidth}
    \centering
    \includegraphics[width=\linewidth]{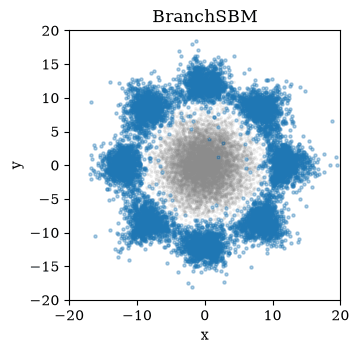}
    \caption{BranchSBM}
    \label{fig:comparison_alone_branchsbm}
  \end{subfigure}\hfill
  \begin{subfigure}[t]{0.27\textwidth}
    \centering
    \includegraphics[width=\linewidth]{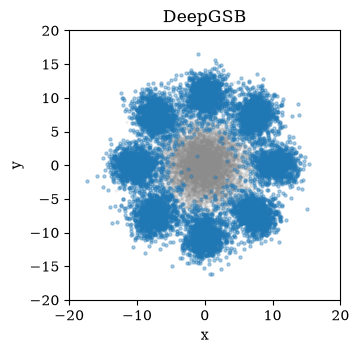}
    \caption{DeepGSB}
    \label{fig:comparison_alone_deepgsb}
  \end{subfigure}
  \caption{Terminal samples per method on the Gaussian $\to$ mixture of $8$ Gaussians benchmark.}
  \label{fig:comparison_alone_grid}
  \end{figure}

  \begin{figure}[H]
  \centering
  \begin{subfigure}[t]{0.49\textwidth}
    \centering
    \includegraphics[width=\linewidth]{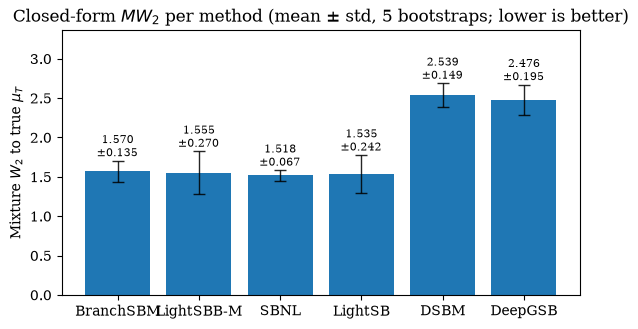}
    \caption{Closed-form mixture $W_2$ between generated terminal law and $\mu_T$.}
    \label{fig:comparison_MW2_full}
  \end{subfigure}\hfill
  \begin{subfigure}[t]{0.49\textwidth}
    \centering
    \includegraphics[width=\linewidth]{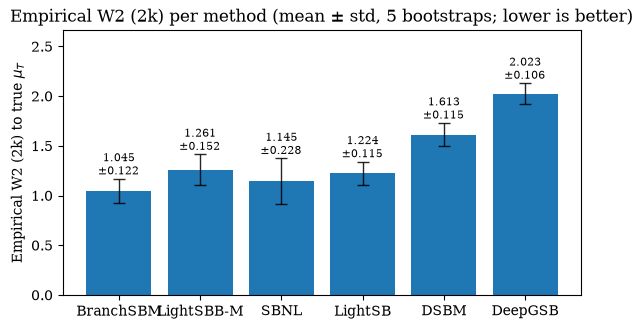}
    \caption{Empirical $W_2$ between generated terminal law and $\mu_T$.}
    \label{fig:comparison_W2emp_full}
  \end{subfigure}
  \caption{Per-method comparison on the Gaussian $\to$ mixture of $8$ Gaussians benchmark (lower is better; mean $\pm$ std over $5$ bootstraps).}
  \label{fig:comparison_W2_full}
  \end{figure}
  
  \begin{figure}[H]
  \centering
  \includegraphics[width=0.7\textwidth]{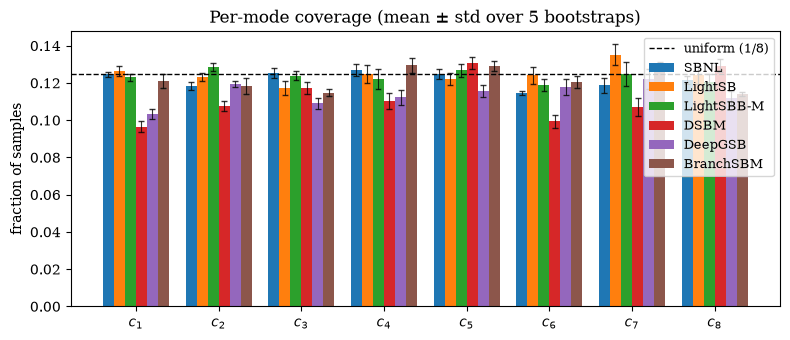}
  \caption{Per-mode coverage of the $8$ target components. The dashed line marks the uniform reference $1/8$.}
  \label{fig:comparison_per_mode_coverage_full}
  \end{figure}

\subsection{Potential running cost}
\label{sec:potential-running-cost}

We now consider a \emph{state-dependent} running cost of potential type,
\[
f\big(m(t,\cdot)\big)=\int_{\R^d} V(x)\,m(t,x)\,\dd x
\qquad\Longrightarrow\qquad
F(t,x,\varphi,\bar\varphi)=V(x),
\]
so that the canonical PDE \eqref{PDE1} reads $\partial_t\varphi+\tfrac12\Delta\varphi=c\,\varphi\,\big(1-(1-V(x))\big)$. The interaction is thus fully specified by a single \textbf{callable} $V:\R^d\to\R$, passed to the model and evaluated on the fly during both training and sampling to encode the environment.

\paragraph{Branching mechanism.}
Matching the canonical form $F=1-F_p$ with $F_p(t,x,m)=1-V(x)$ gives a single degree-zero coefficient $a_0(x)=1-V(x)$ (and $a_k=0$ for $k\ge1$). With $p_0=1$, the only branching outcome is that a particle continues as a single copy of the \emph{same} type, and each branching event reweights the tree by
\[
\mathcal W\;\longrightarrow\;\mathcal W\cdot\frac{a_0(X_s)}{p_0}=\mathcal W\cdot\big(1-V(X_s)\big).
\]
There are no offspring and no type flips: the branching estimator of Proposition~\ref{prop:phi-branch-lamc} reduces to a pure Feynman-Kac weighting of Brownian paths (Appendix~\ref{app:special-cases}, confining potentials), with $V$ bounded so that $1-V>0$ keeps the weights positive.

\paragraph{Training and sampling.}
Because $F$ is state- but not density-dependent, the system \eqref{PDE1}-\eqref{PDE2} is linear and $\varphi(0,x)$ is estimated by the reweighted trees above; the potentials are trained with the MMD objective \eqref{eq:J-mc}. Two practical ingredients make the importance-sampled estimator stable. First, the interior weights $W=\widehat A\,\bar\varphi_\eta/\mu^*$ (resp.\ $\widehat B\,\varphi_\theta/\mu^*$) are heavy-tailed when $\mu^*$ is too narrow, so we take a \emph{wide} Gaussian reference $\mu^*$ covering the region actually explored by the branching diffusion. Second, we use \emph{self-normalized} importance sampling (dividing each weight vector by its own sum, gradient preserved), which turns the interior expectations into low-variance ratio estimators comparing the \emph{normalized} residual measures. At sampling time, the drift $\nabla\log\varphi$ is estimated by the same weighted trees (Algorithm~\ref{alg:meta-sample-ssb}), so $V$ deflects trajectories away from high-$V$ regions.

\paragraph{Endpoint offset trick.}
When $V$ has an asymmetric geometry (e.g.\ gates at different heights, a curved valley), we place $\mu_0$ and $\mu_T$ at \emph{different} anchor points aligned with the entrance/exit of the low-energy corridor, rather than symmetrically. This gives the score a clear valley to follow through the obstacle. Transporting mass over a distance $\sim\!\ell$ requires the drift to integrate to $\sim\!\ell$, so we also use a longer horizon $T$ (with a small rate $c$): too small a $T$ leaves the drift no time to route the mass around the barrier.

\begin{figure}[H]
\centering
\captionsetup{font=footnotesize}

\includegraphics[width=0.55\textwidth]{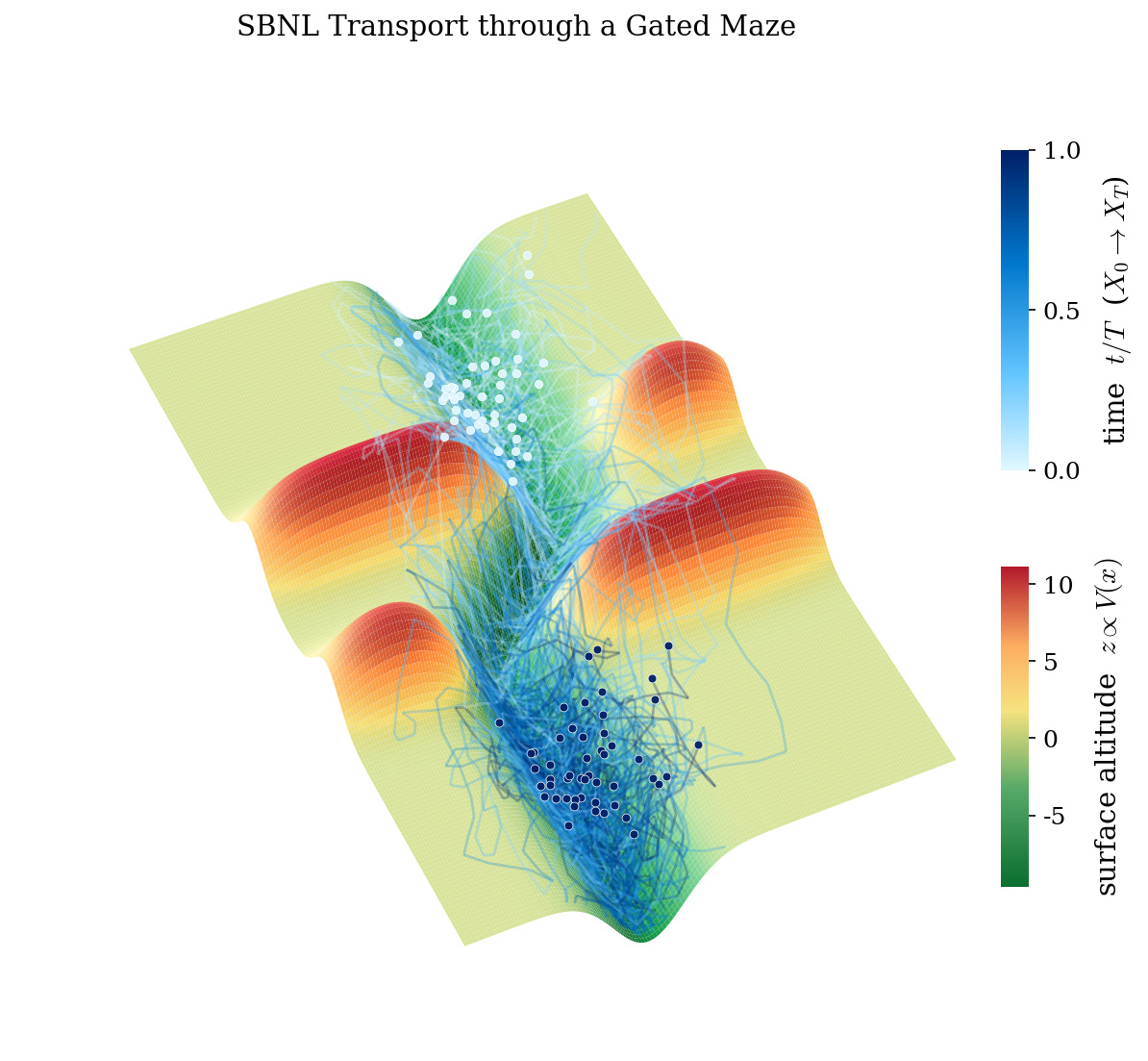}\\[-4pt]
\includegraphics[width=0.9\textwidth]{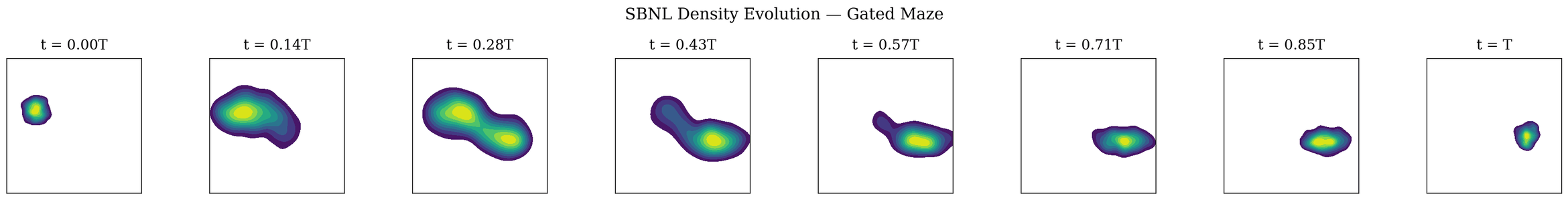}

\caption{Gated maze: SBNL transport and estimated density evolution $m(t,\cdot)=\varphi(t,\cdot)\bar\varphi(T-t,\cdot)$.}
\label{fig:potential_maze}
\end{figure}

\begin{figure}[H]
\centering
\captionsetup{font=footnotesize}

\includegraphics[width=0.55\textwidth]{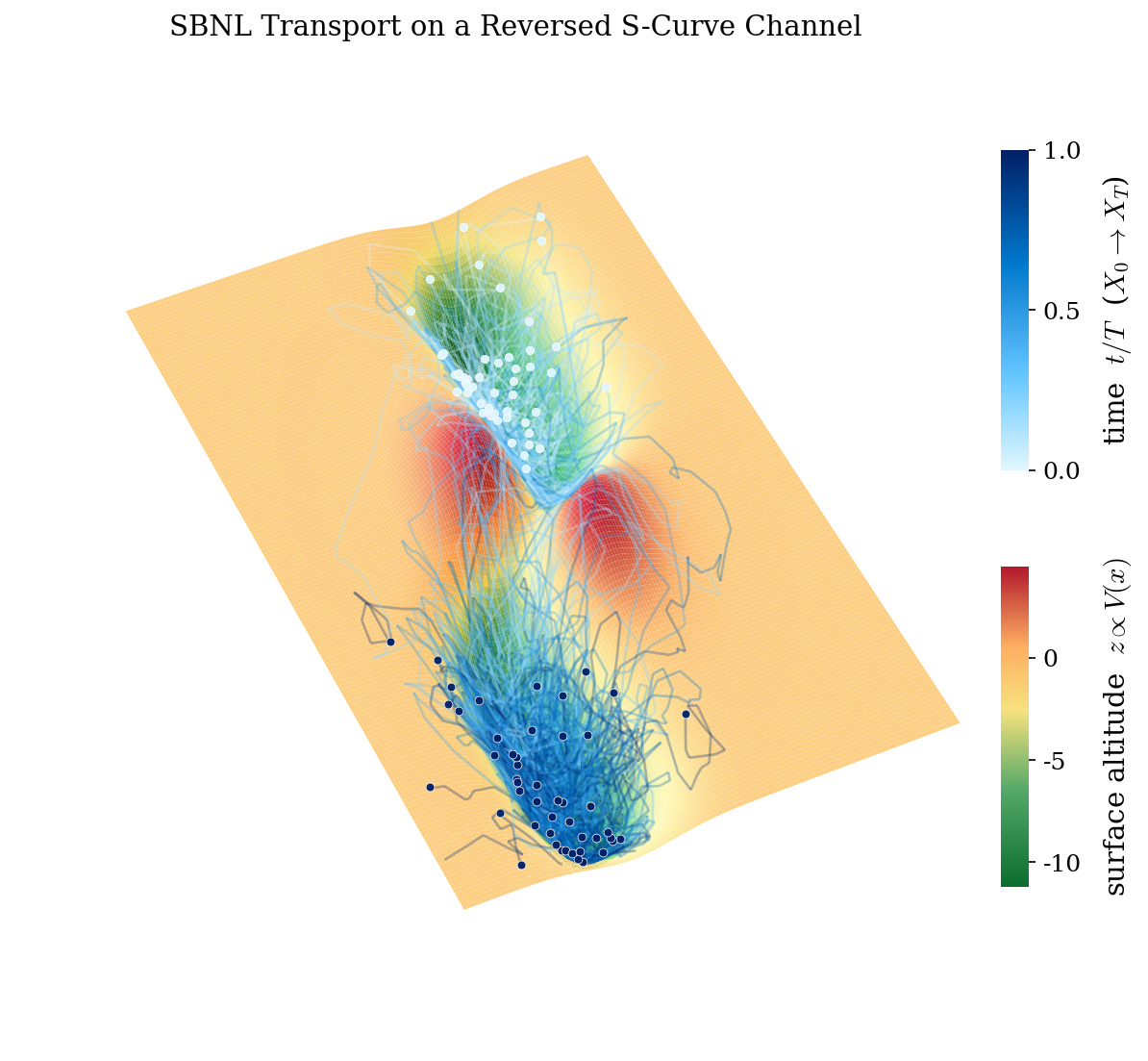}\\[-4pt]
\includegraphics[width=0.9\textwidth]{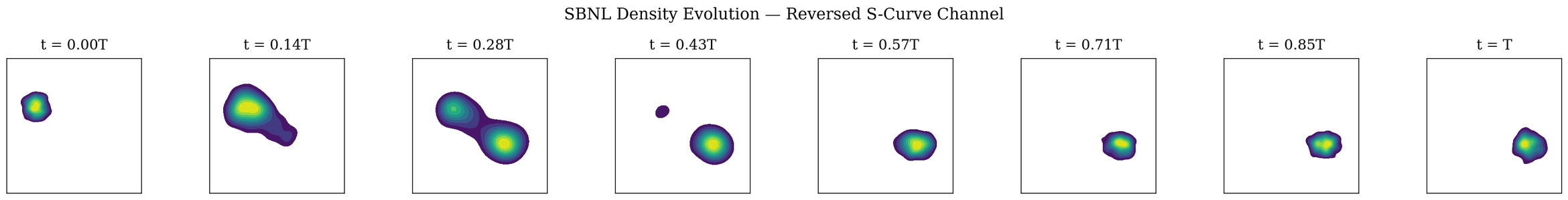}

\caption{S-curve: SBNL transport and estimated density evolution $m(t,\cdot)=\varphi(t,\cdot)\bar\varphi(T-t,\cdot)$.}
\label{fig:potential_scurve}
\end{figure}

\begin{figure}[H]
\centering
\captionsetup{font=footnotesize}

\includegraphics[width=0.55\textwidth]{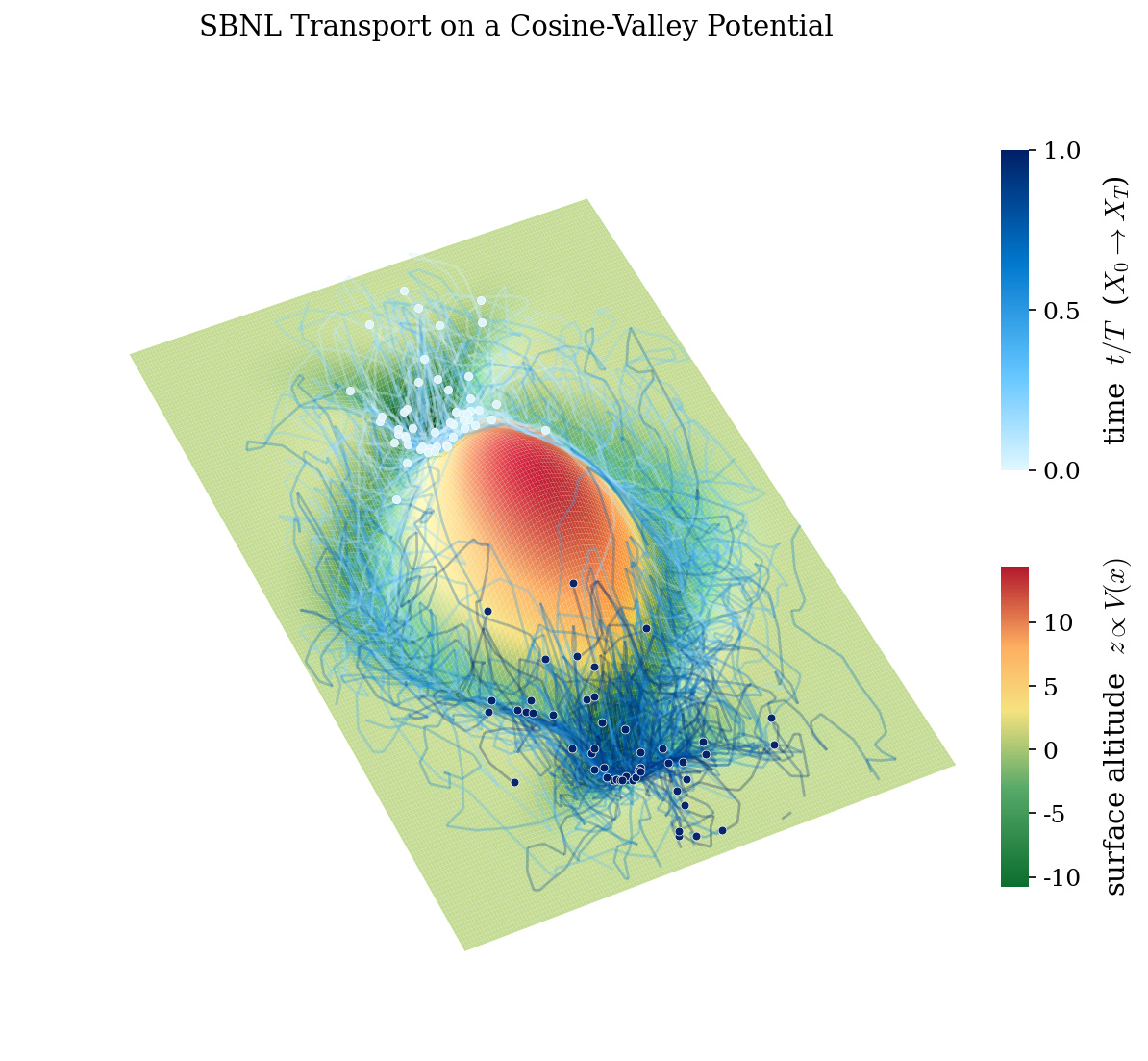}\\[-4pt]
\includegraphics[width=0.9\textwidth]{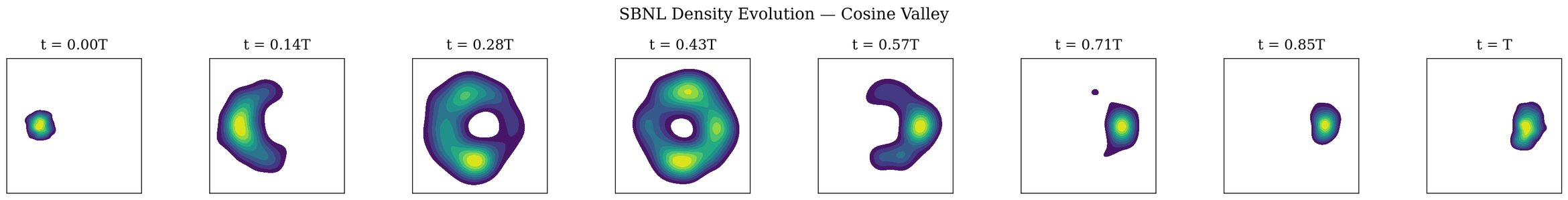}

\caption{Valley: SBNL transport and estimated density evolution $m(t,\cdot)=\varphi(t,\cdot)\bar\varphi(T-t,\cdot)$.}
\label{fig:potential_valley}
\end{figure}

\begin{figure}[H]
\centering
\captionsetup{font=footnotesize}

\includegraphics[width=0.55\textwidth]{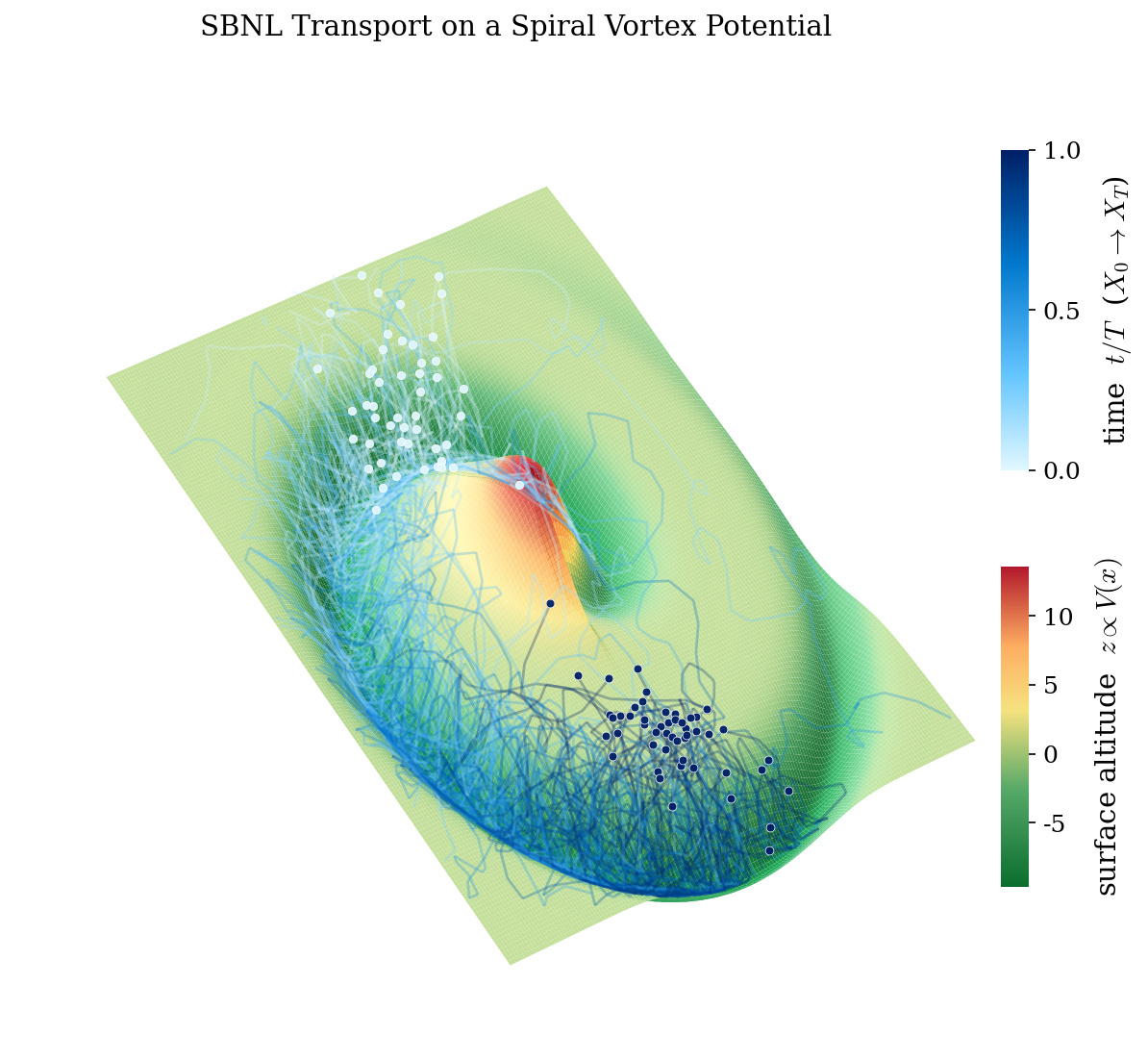}\\[-4pt]
\includegraphics[width=0.9\textwidth]{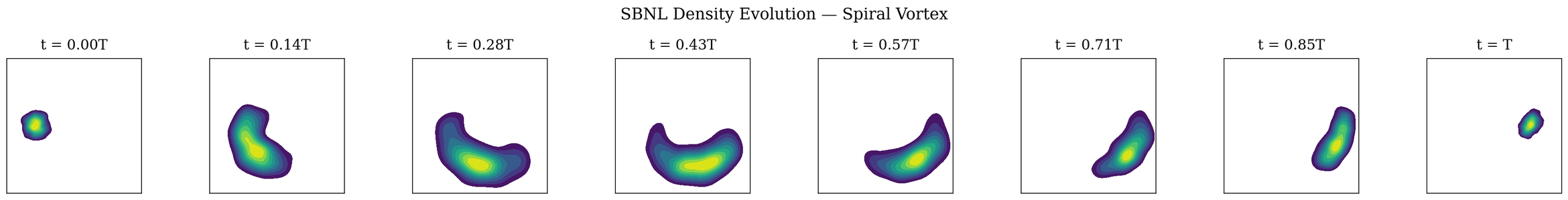}

\caption{Spiral: SBNL transport and estimated density evolution $m(t,\cdot)=\varphi(t,\cdot)\bar\varphi(T-t,\cdot)$.}
\label{fig:potential_spiral}
\end{figure}

\begin{figure}[H]
\centering
\captionsetup{font=footnotesize}

\includegraphics[width=0.55\textwidth]{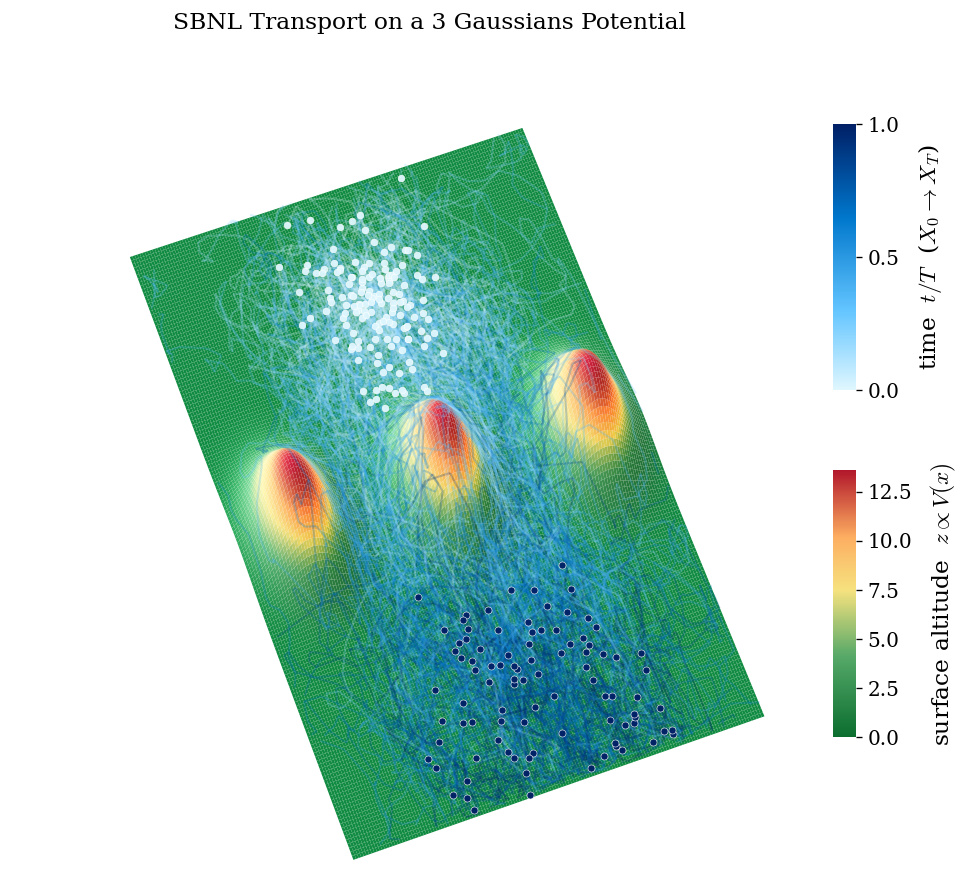}\\[-4pt]
\includegraphics[width=0.9\textwidth]{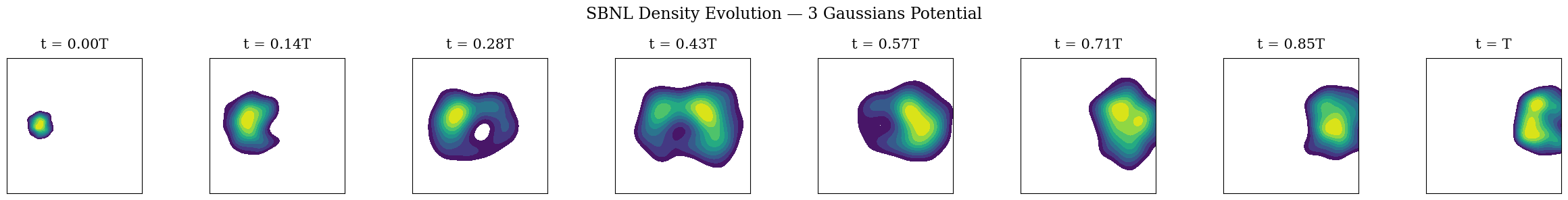}

\caption{Three Gaussians: SBNL transport and estimated density evolution $m(t,\cdot)=\varphi(t,\cdot)\bar\varphi(T-t,\cdot)$.}
\label{fig:potential_threegauss}
\end{figure}

\subsection{50D synthetic experiments}
\label{app:50D-synthetic-experiments}

We consider a nonlinear bridge between Gaussian mixtures in $\mathbb{R}^{50}$ with $c=0.1$.  
Samples and trajectories are visualized by projection onto the first two PCA directions, with the PCA basis fit on pooled validation draws from $\mu_0$ and $\mu_T$.  
Figure~\ref{fig:gen_vs_true_50D} compares generated and target marginals after training; Figure~\ref{fig:evolution_50D} shows the evolution of projected trajectories.

\begin{figure}[H]
  \centering
  \includegraphics[width=0.7\textwidth]{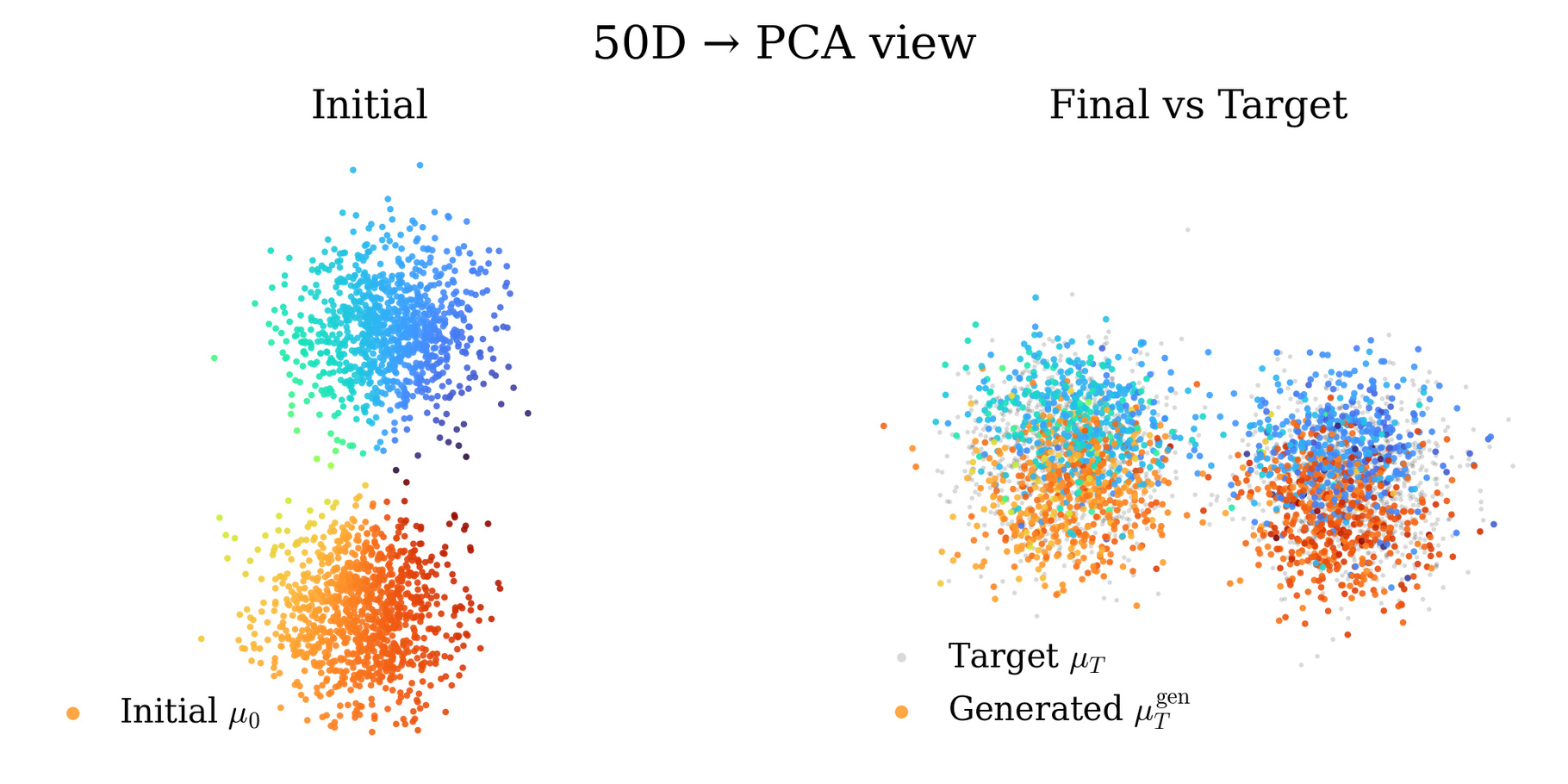}
  \caption{Generated vs.~true samples in $\R^{50}$ (PCA projection).}
  \label{fig:gen_vs_true_50D}
  \end{figure}
  
  \begin{figure}[H]
  \centering
  \includegraphics[width=\textwidth]{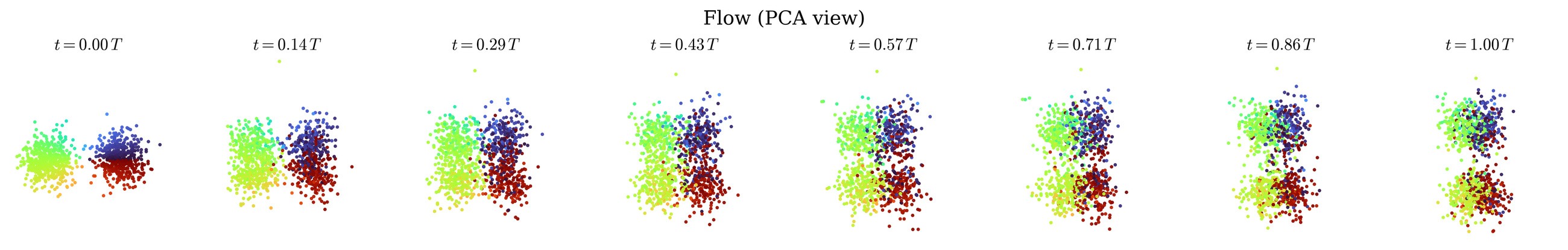}
  \caption{Temporal evolution in $\R^{50}$ (PCA projection).}
  \label{fig:evolution_50D}
  \end{figure}

\begin{figure}[H]
    \centering
    \begin{subfigure}[b]{0.32\textwidth}
        \centering
        \includegraphics[width=\textwidth]{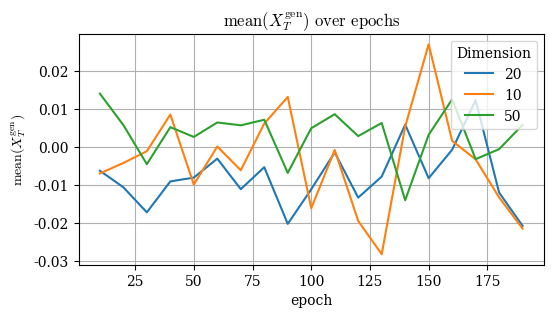}
        \caption{Mean of the generated samples across different dimensions (target mean: $0$).}
        \label{fig:mean_X_T_gen_over_epochs}
    \end{subfigure}
    \hfill
    \begin{subfigure}[b]{0.32\textwidth}
        \centering
        \includegraphics[width=\textwidth]{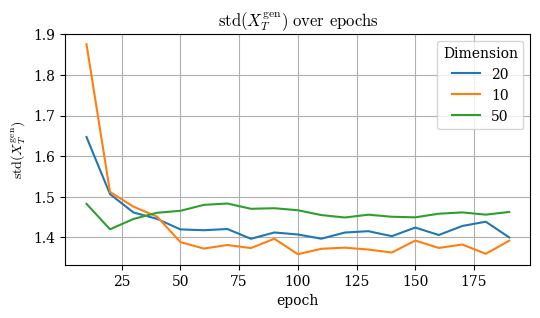}
        \caption{Std of the generated samples across different dimensions (target std: $\sqrt{2}$).}
        \label{fig:std_X_T_gen_over_epochs}
    \end{subfigure}
    \hfill
    \begin{subfigure}[b]{0.32\textwidth}
        \centering
        \includegraphics[width=\textwidth]{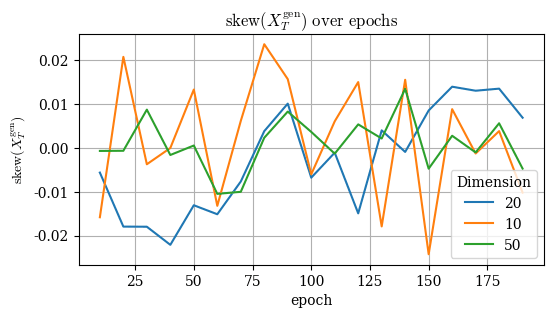}
        \caption{Skewness of the generated samples across different dimensions (target skewness: $0$).}
        \label{fig:skew_X_T_gen_over_epochs}
    \end{subfigure}
    \caption{Evolution of the first three moments (mean, standard deviation, skewness) of the generated samples over training epochs, for different latent dimensions.}
    \label{fig:three_in_a_row}
\end{figure}

\subsection{MNIST in latent space}
We consider a nonlinear bridge between MNIST digits in latent space with $c=0.1$.  
Samples and trajectories are visualized by projection onto the first two PCA directions, with the PCA basis fit on pooled validation draws from $\mu_0$ and $\mu_T$ (Figure~\ref{fig:image_diffusion_1_to_3}).  
Figure~\ref{fig:initial_vs_generated_latent} projects the $d{=}32$ latent codes onto the first two PCA directions (basis fit on pooled validation samples from $\mu_0$ and $\mu_T$): the left panel shows the initial law $\mu_0$, and the right panel overlays generated terminal latents on the target $\mu_T$, illustrating that the bridge maps the source geometry into the target bulk while preserving local structure visible in the point colors.
Figure~\ref{fig:generated_samples_1_to_3} compares generated and target digits after training.

\begin{figure}[H]
  \centering
  \begin{subfigure}[b]{0.49\textwidth}
    \centering
    \includegraphics[width=\textwidth]{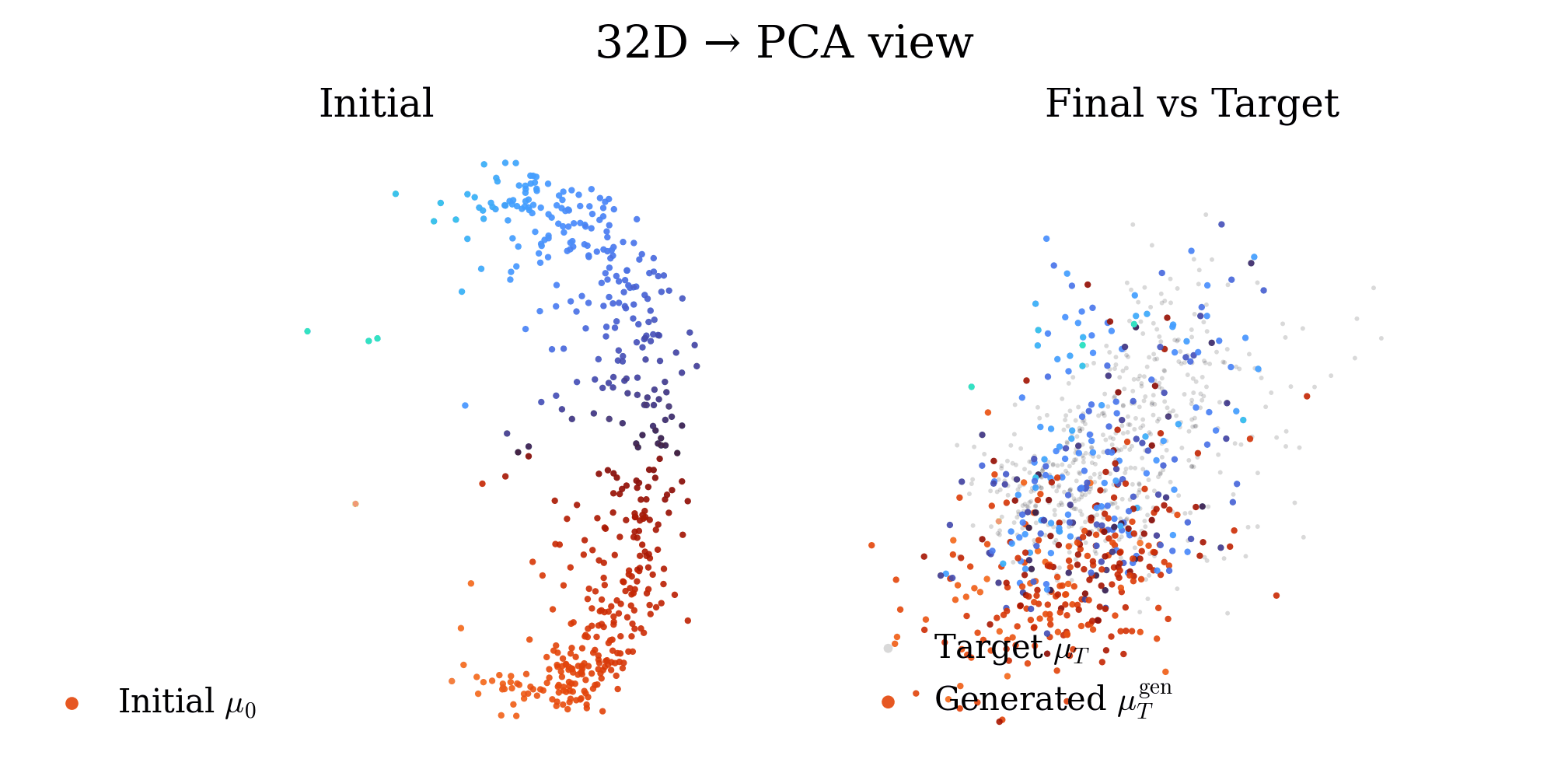}
    \caption{PCA view of $\mu_0$ (left) and of generated terminal samples vs.\ target $\mu_T$ (right; target in gray, generated colored).}
    \label{fig:initial_vs_generated_latent}
  \end{subfigure}
  \hfill
  \begin{subfigure}[b]{0.49\textwidth}
    \centering
    \includegraphics[width=\textwidth]{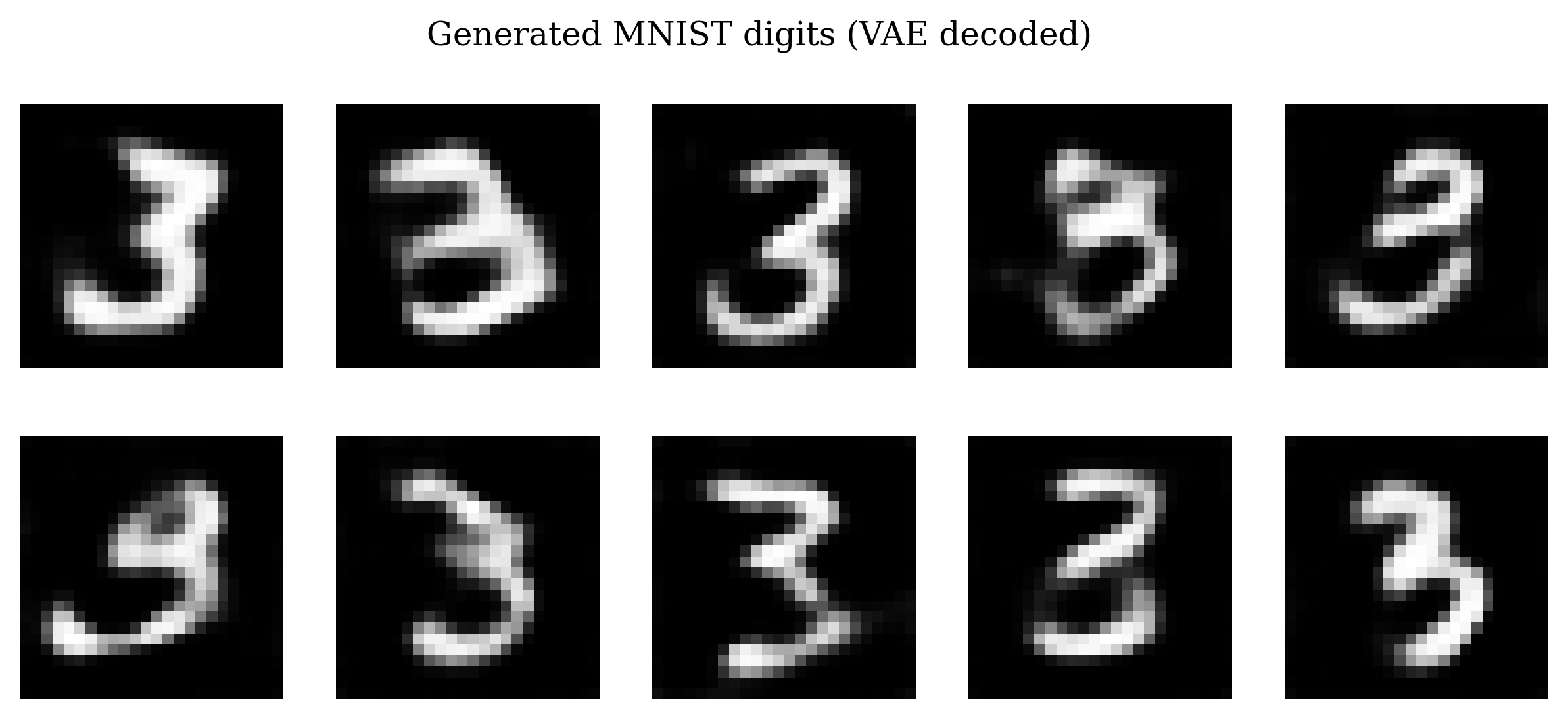}
    \caption{Generated MNIST digits obtained by decoding latent interpolations.}
    \label{fig:generated_samples_1_to_3}
  \end{subfigure}
  \caption{MNIST latent bridge ($d{=}32$): latent PCA visualization and decoded samples.}
  \label{fig:mnist_latent_and_decoded_row}
\end{figure}

\subsection{FFHQ in latent space}

\begin{figure}[H]
  \centering
  \includegraphics[width=0.95\textwidth]{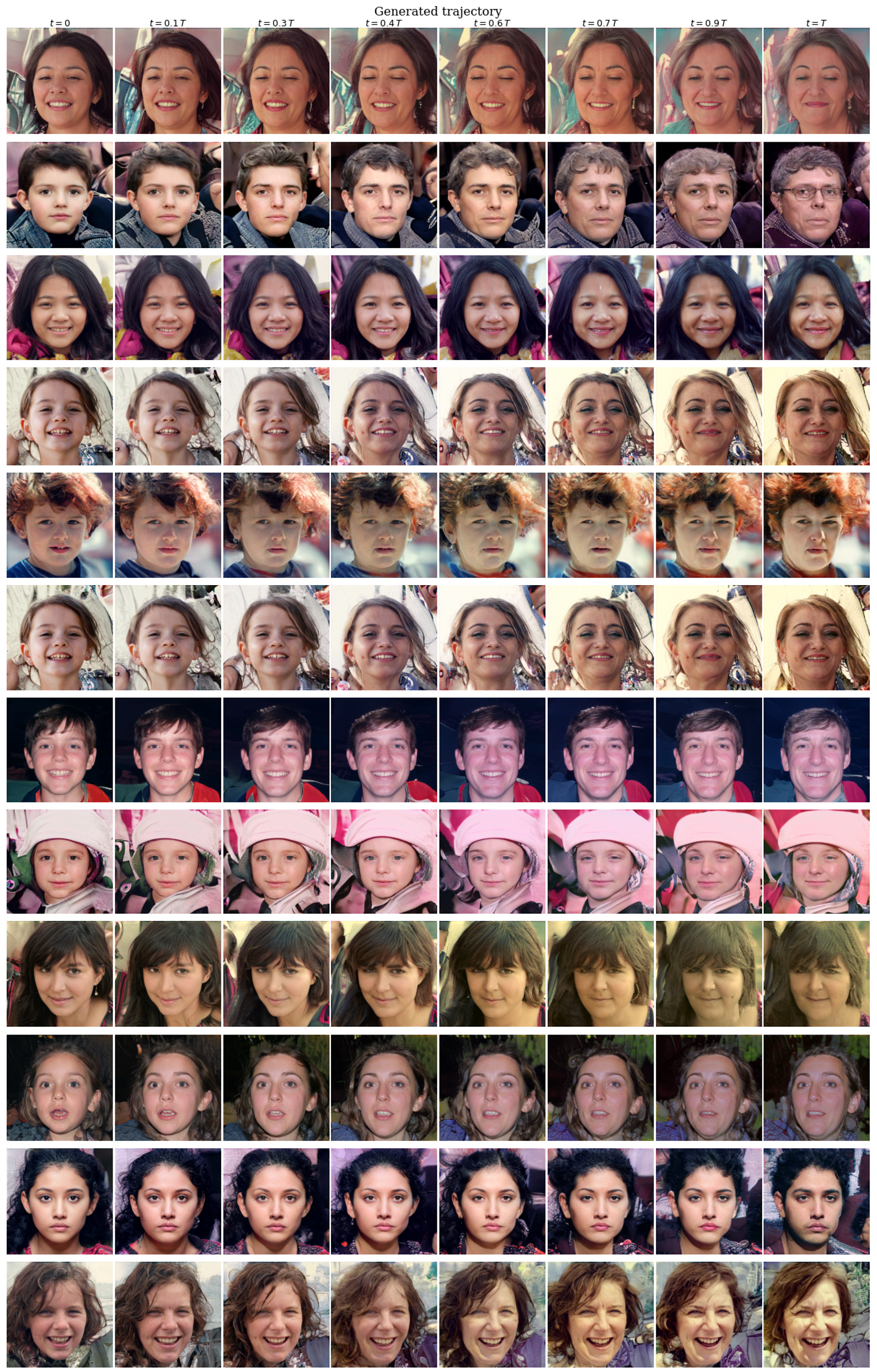}
  \caption{FFHQ young-to-old: decoded trajectories during sampling.}
  \label{fig:young_old_traj_large}
  \end{figure}

\subsection{Comparison on FFHQ}
\label{app:comparison-ffhq}

We extend the FFHQ young-to-old comparison from Figure~\ref{fig:comparison_ffhq_main} to a larger panel of source children and all baselines, in ALAE latent space. Figure~\ref{fig:comparison_ffhq_full} shows, for each row, the source child ($\mu_0$) on the left, the terminal samples produced by SBNL, LightSB, LightSBB-M, DSBM and BranchSBM (with separate female and male conditioning for BranchSBM), and a reference adult image drawn at random from $\mu_T$ on the right.

\begin{figure}[H]
\centering
\includegraphics[width=0.95\textwidth]{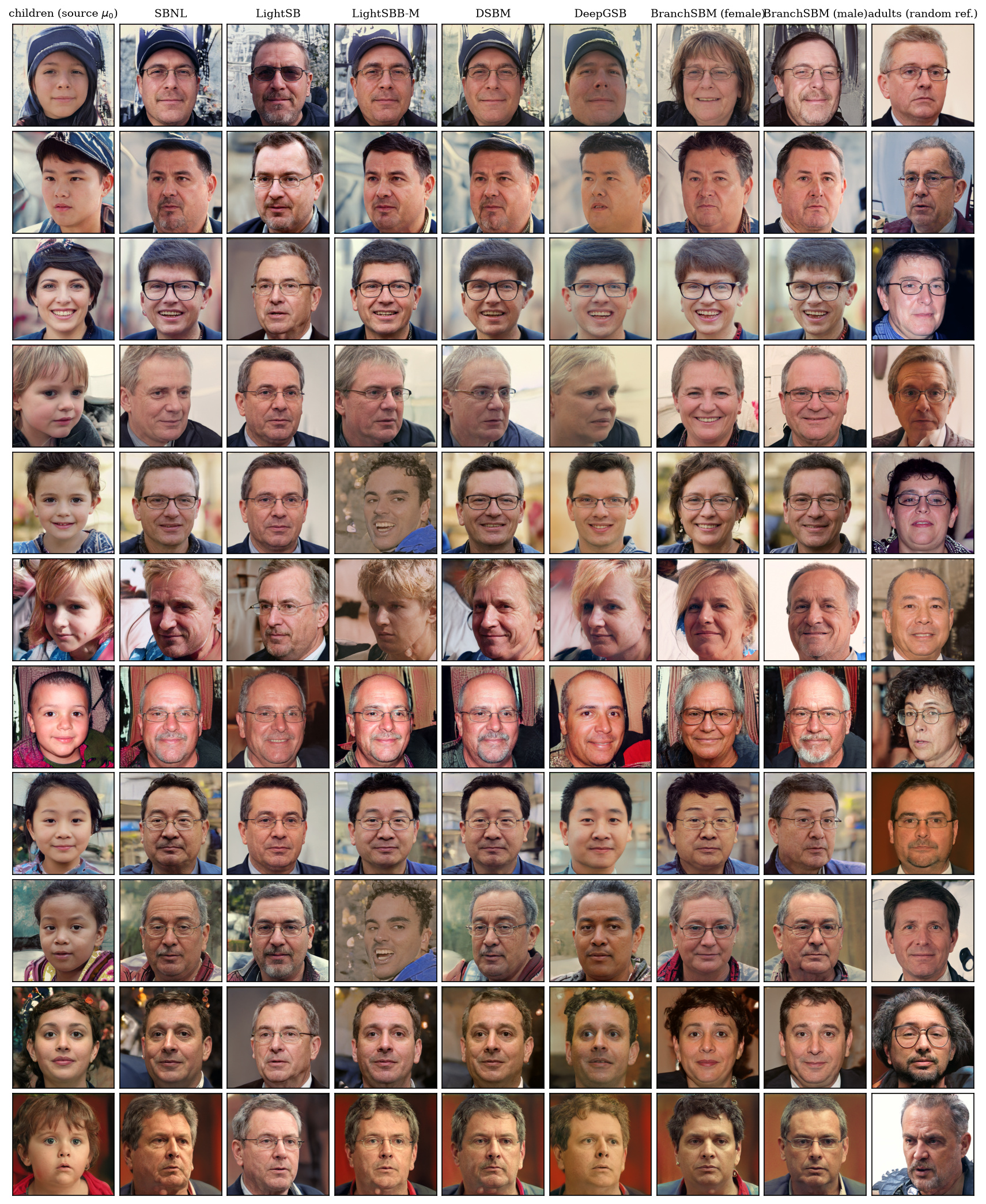}
\caption{FFHQ young-to-old comparison in ALAE latent space, extended panel.}
\label{fig:comparison_ffhq_full}
\end{figure}

\section{Additional figures}

\subsection{Evolution of the potentials during training}

\label{additional-figures}
\begin{figure}[H]
\centering
\setlength{\tabcolsep}{2pt}
\renewcommand{\arraystretch}{0}
\begin{tabular}{ccccc}
\includegraphics[width=0.19\textwidth]{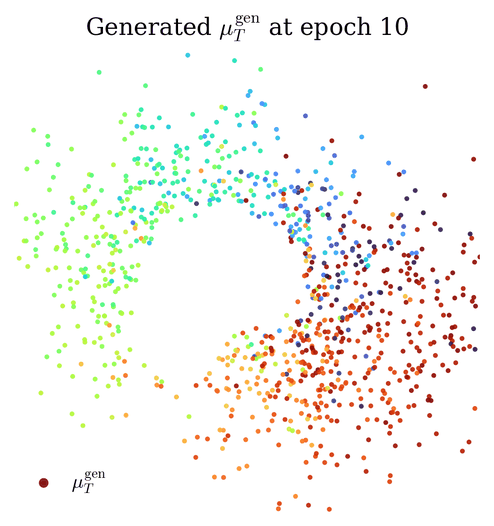} &
\includegraphics[width=0.19\textwidth]{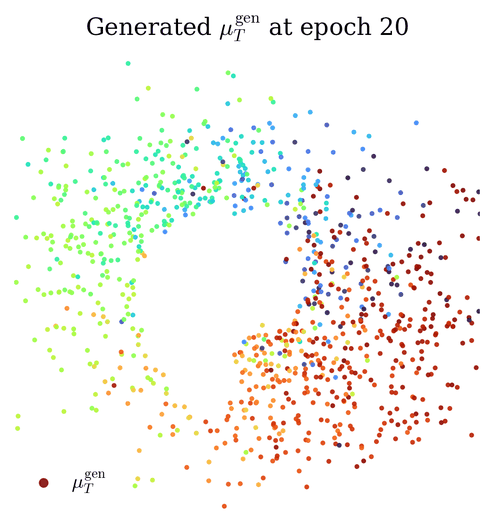} &
\includegraphics[width=0.19\textwidth]{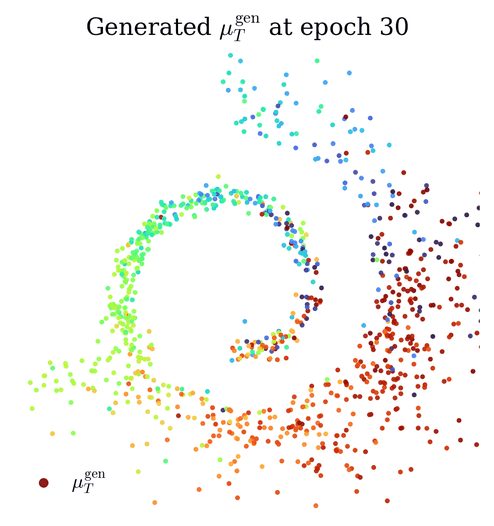} &
\includegraphics[width=0.19\textwidth]{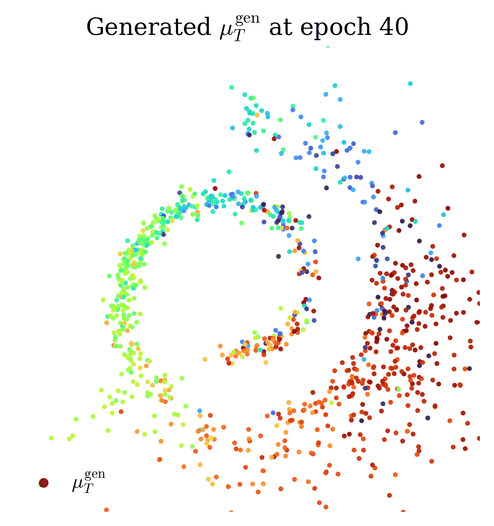} &
\includegraphics[width=0.19\textwidth]{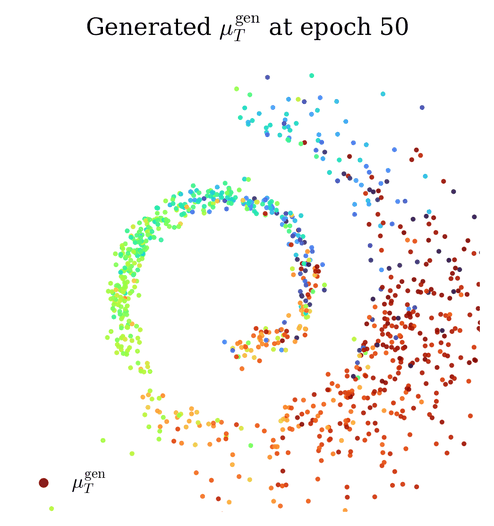} \\
\includegraphics[width=0.19\textwidth]{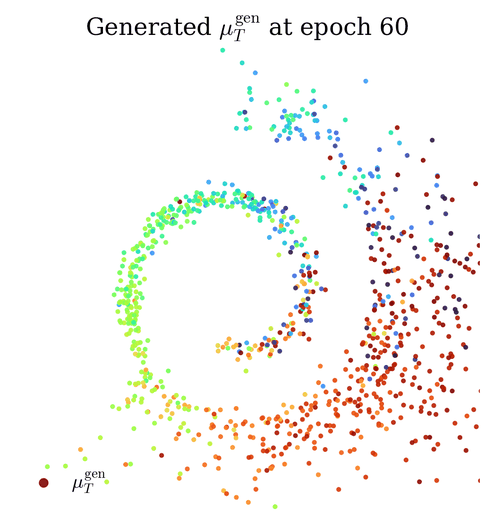} &
\includegraphics[width=0.19\textwidth]{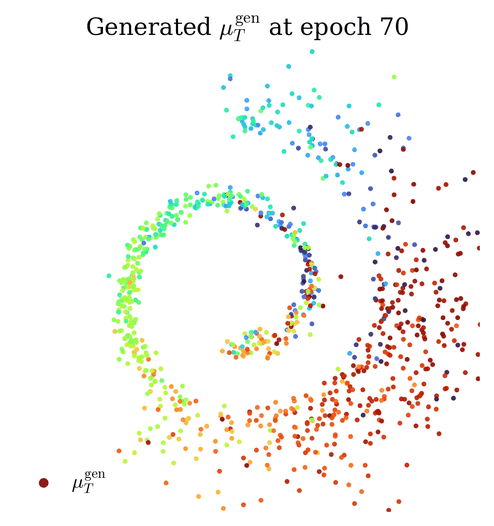} &
\includegraphics[width=0.19\textwidth]{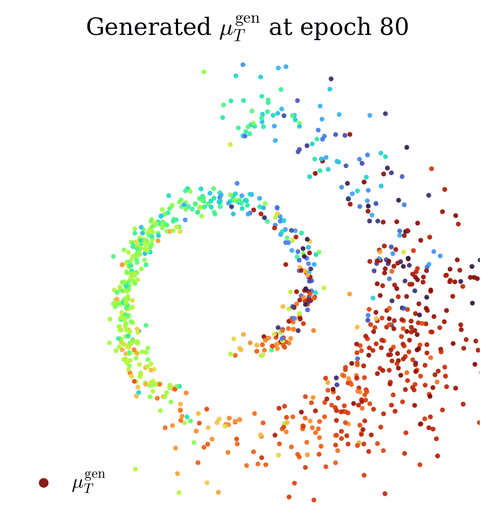} &
\includegraphics[width=0.19\textwidth]{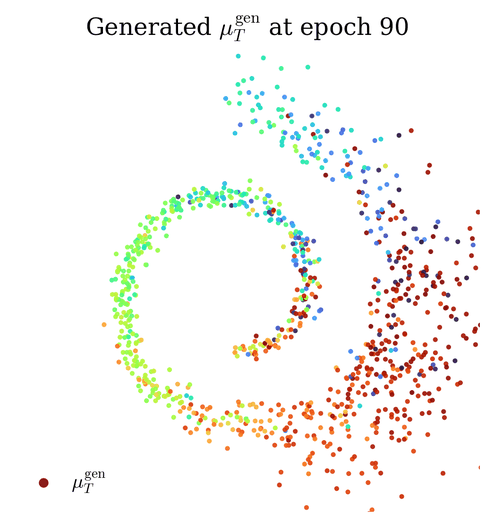} &
\includegraphics[width=0.19\textwidth]{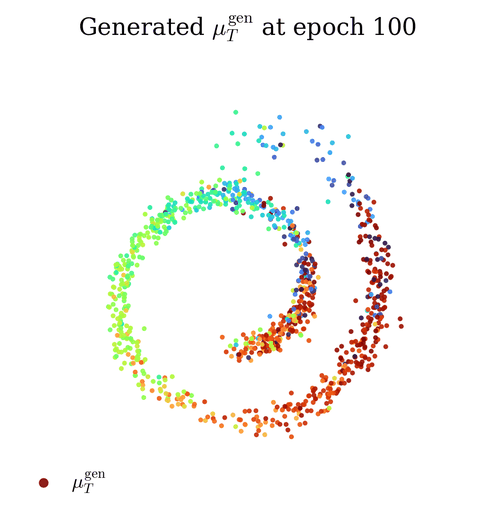} \\
\end{tabular}
\caption{Convergence of $\mathrm{Law}(X_T)$ with training  (Moon curves $\to$ Swiss).}
\label{fig:generation_evolution_grid}
\end{figure}

\begin{figure}[H]
\centering
\setlength{\tabcolsep}{2pt}
\renewcommand{\arraystretch}{0}
\begin{tabular}{ccccc}
\includegraphics[width=0.19\textwidth]{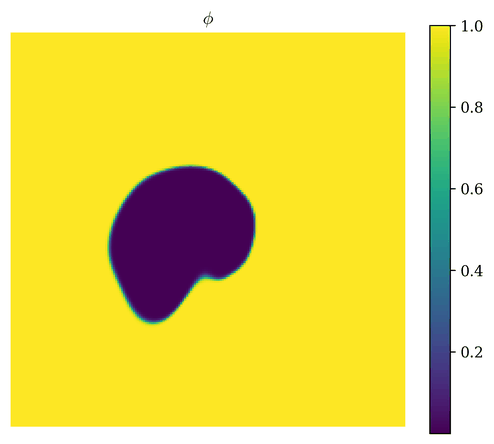} &
\includegraphics[width=0.19\textwidth]{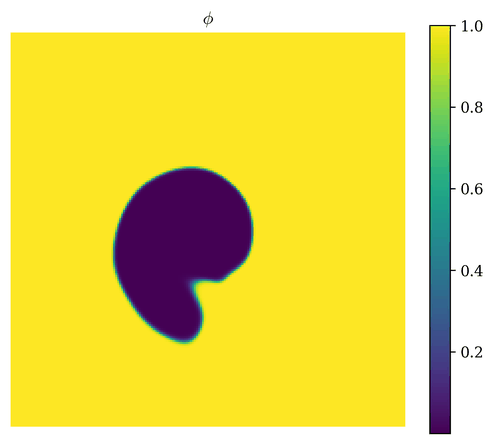} &
\includegraphics[width=0.19\textwidth]{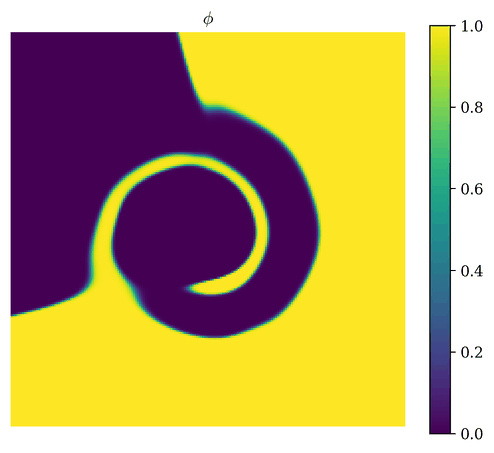} &
\includegraphics[width=0.19\textwidth]{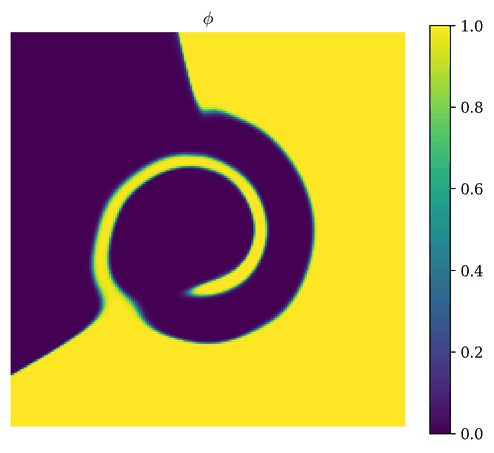} &
\includegraphics[width=0.19\textwidth]{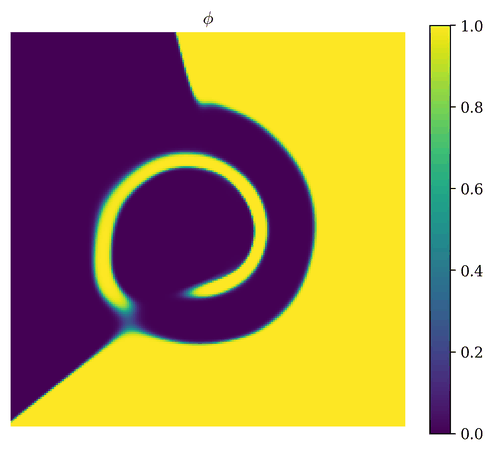} \\
\includegraphics[width=0.19\textwidth]{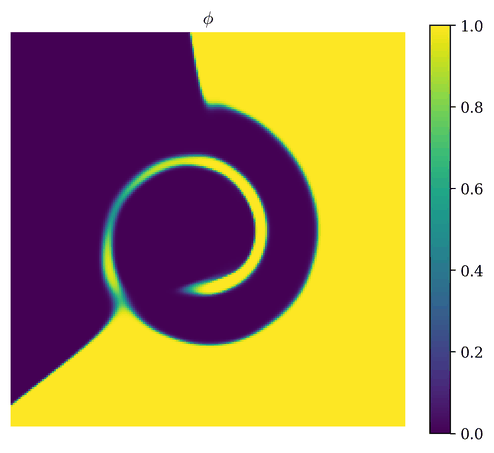} &
\includegraphics[width=0.19\textwidth]{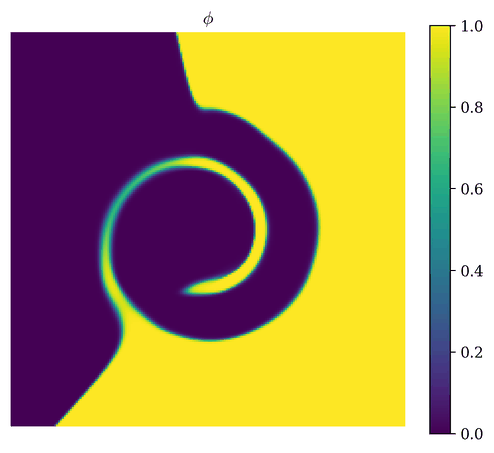} &
\includegraphics[width=0.19\textwidth]{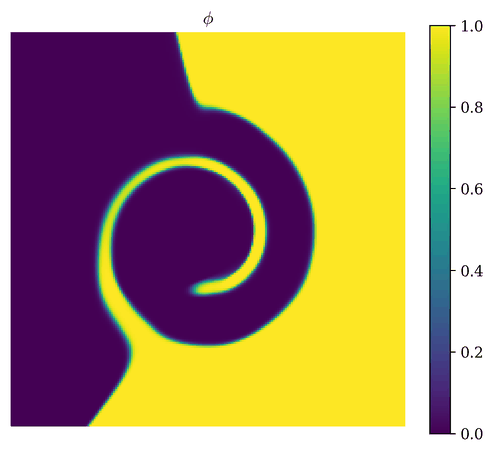} &
\includegraphics[width=0.19\textwidth]{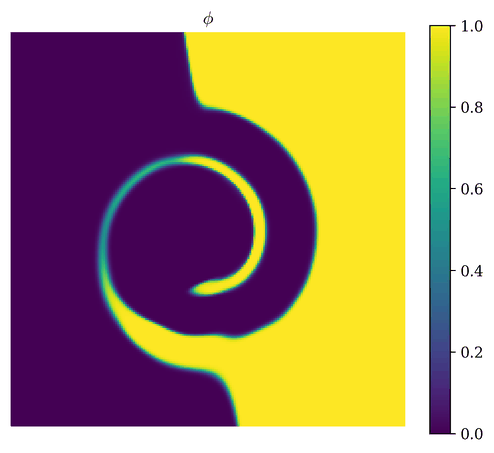} &
\includegraphics[width=0.19\textwidth]{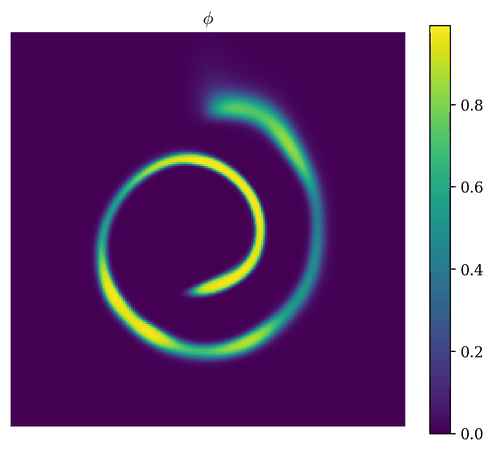} \\
\end{tabular}
\caption{Evolution of potential $\varphi$ (Moon curves $\to$ Swiss).}
\label{fig:phi_evolution_grid}
\end{figure}

\begin{figure}[H]
\centering
\setlength{\tabcolsep}{2pt}
\renewcommand{\arraystretch}{0}
\begin{tabular}{ccccc}
\includegraphics[width=0.19\textwidth]{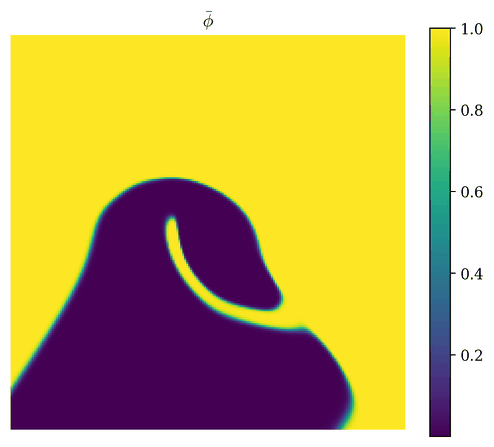} &
\includegraphics[width=0.19\textwidth]{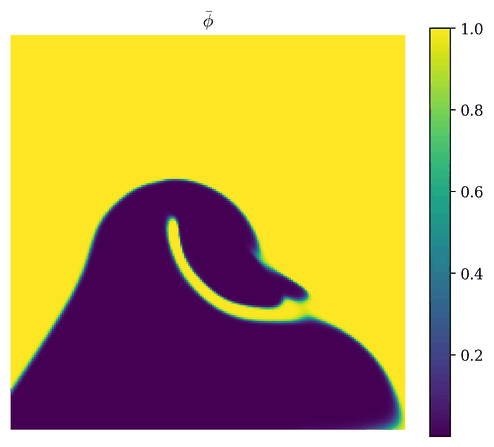} &
\includegraphics[width=0.19\textwidth]{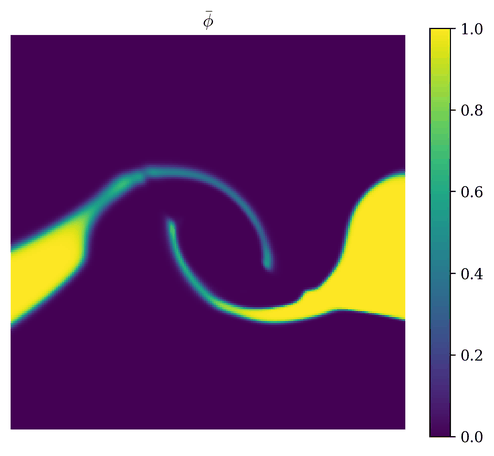} &
\includegraphics[width=0.19\textwidth]{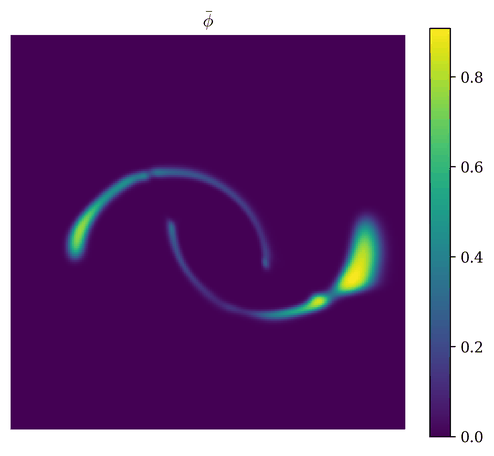} &
\includegraphics[width=0.19\textwidth]{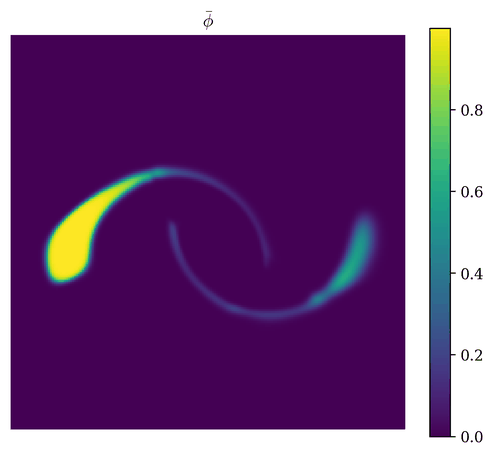} \\
\includegraphics[width=0.19\textwidth]{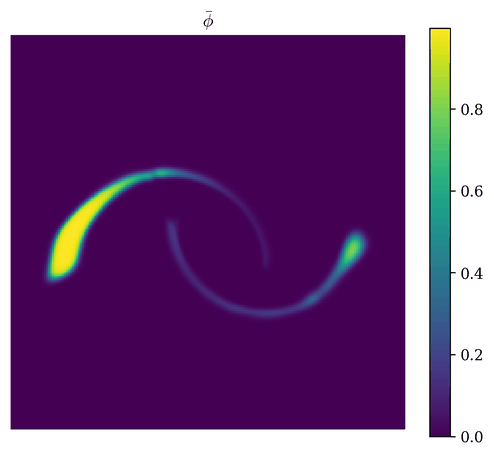} &
\includegraphics[width=0.19\textwidth]{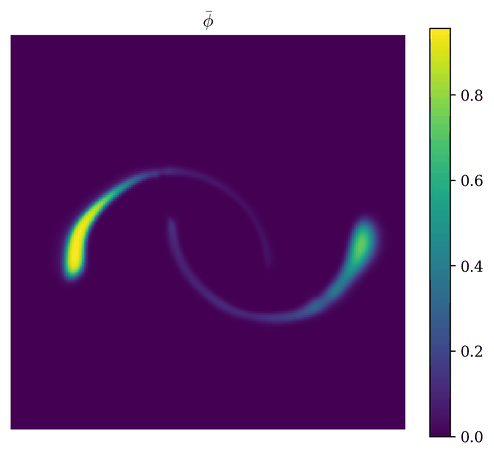} &
\includegraphics[width=0.19\textwidth]{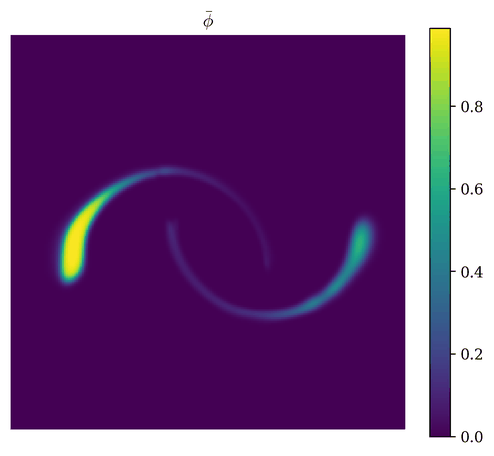} &
\includegraphics[width=0.19\textwidth]{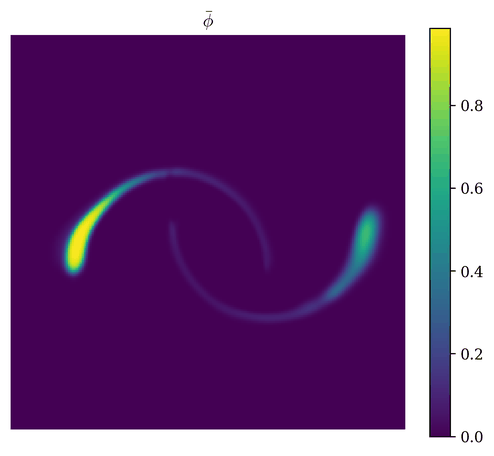} &
\includegraphics[width=0.19\textwidth]{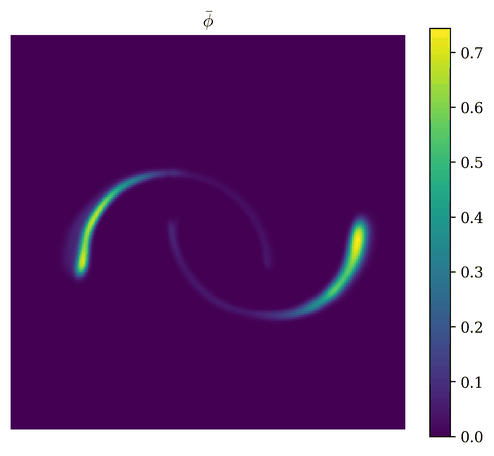} \\
\end{tabular}
\caption{Evolution of potential $\bar\varphi$ (Moon curves $\to$ Swiss).}
\label{fig:phi_bar_evolution_grid}
\end{figure}

\subsection{Relaxation of the hard constraint and finite penalty $\lambda$}
\begin{figure}[H]
\centering
\includegraphics[width=\textwidth]{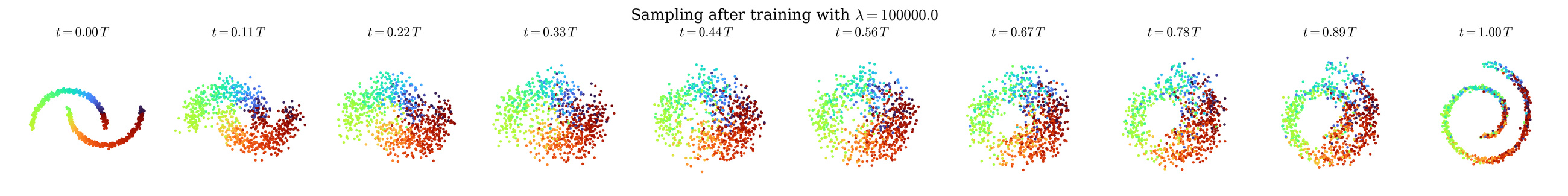}\\[0.5em]
\includegraphics[width=\textwidth]{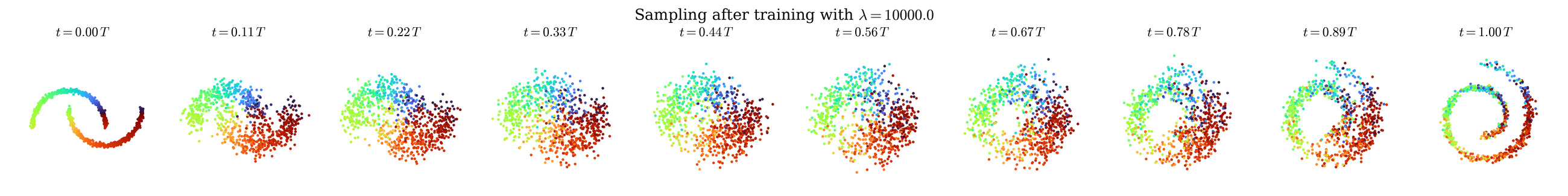}\\[0.5em]
\includegraphics[width=\textwidth]{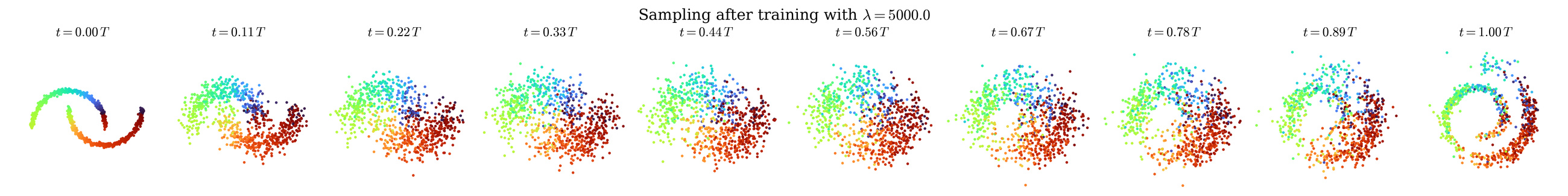}\\[0.5em]
\includegraphics[width=\textwidth]{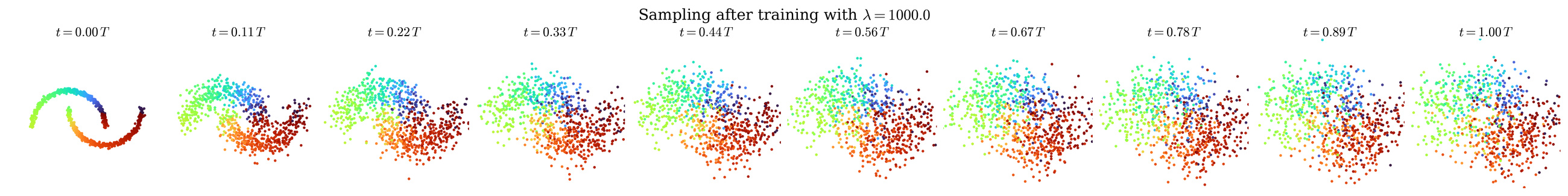}\\[0.5em]
\includegraphics[width=\textwidth]{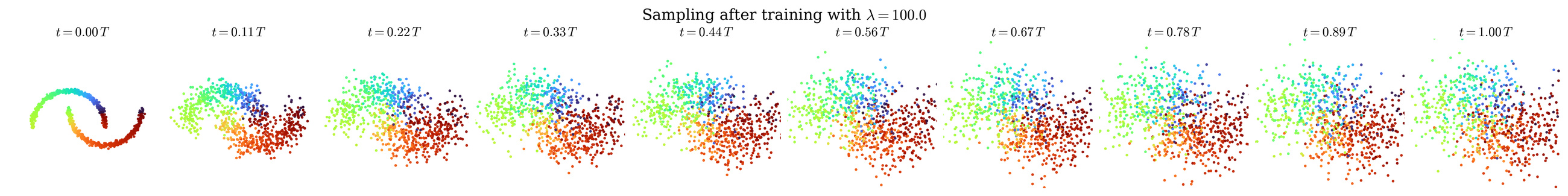}
\caption{Flow evolution for $\lambda \in \{10^5,10^4,5\!\times\!10^3,10^3,10^2\}$.}
\label{fig:evolution_lambda}
\end{figure}


\end{document}